\def\setpdfinfo{
	\hypersetup{pdfinfo={
		Title={\@title},
		Author={\@author},
		Subject={\@thesistype},
		Keywords={\@keywords}}}
}
\newtheorem{theorem}{Theorem}[section]
\newtheorem{lemma}[theorem]{Lemma}
\newtheorem{definition}[theorem]{Definition}
\newtheorem{example}[theorem]{Example}
\newtheorem{remark}[theorem]{Remark}
\newtheorem{corollary}[theorem]{Corollary}
\definecolor{faunat}{RGB}{0,155,119}
\definecolor{faulnat}{RGB}{170,207,189}
\definecolor{faullnat}{RGB}{229,239,234}
\colorlet{cfaunat}{faunat>twheel,1,2}
\colorlet{teco}{faunat>twheel,6,12}
\title{Continuum Limit of Lipschitz Learning on Graphs}
\author{
Tim Roith
\thanks{Department of Mathematics, University of Erlangen–Nürnberg,
Cauerstraße 11, 91058 Erlangen, Germany. \href{mailto:tim.roith@fau.de}{tim.roith@fau.de}}
\and 
Leon Bungert
\thanks{Hausdorff Center for Mathematics, University of Bonn, Endenicher Allee 62, Villa Maria, 53115 Bonn, Germany. \href{mailto:leon.bungert@hcm.uni-bonn.de}{leon.bungert@hcm.uni-bonn.de}} 
}
\date{\today}
\let\blx@rerun@biber\relax
\begin{document}

\maketitle
\begin{abstract}
Tackling semi-supervised learning problems with graph-based methods has become a trend in recent years since graphs can represent all kinds of data and provide a suitable framework for studying continuum limits, e.g., of differential operators.
A popular strategy here is $p$-Laplacian learning, which poses a smoothness condition on the sought inference function on the set of unlabeled data.
For $p<\infty$ continuum limits of this approach were studied using tools from $\Gamma$-convergence.
For the case $p=\infty$, which is referred to as Lipschitz learning, continuum limits of the related infinity-Laplacian equation were studied using the concept of viscosity solutions.

In this work, we prove continuum limits of Lipschitz learning using $\Gamma$-convergence. 
In particular, we define a sequence of functionals which approximate the largest local Lipschitz constant of a graph function and prove $\Gamma$-convergence in the $\Lp{\infty}$-topology to the supremum norm of the gradient as the graph becomes denser. 
Furthermore, we show compactness of the functionals which implies convergence of minimizers. 
In our analysis we allow a varying set of labeled data which converges to a general closed set in the Hausdorff distance. 
We apply our results to nonlinear ground states, i.e., minimizers with constrained $L^p$-norm,
and, as a by-product, prove convergence of graph distance functions to geodesic distance functions.

\par\vskip\baselineskip\noindent
\center{\textbf{Keywords}}\\
Lipschitz learning, graph-based semi-supervised learning, continuum limit, Gamma-convergence, ground states, distance functions

\par\vskip\baselineskip\noindent
\center{\textbf{AMS Subject Classification}}\\
\texttt{35J20, 35R02, 65N12, 68T05}

\end{abstract}
\newpage
\tableofcontents

\section{Introduction}
Several works in mathematical data science and machine learning have proven the 
importance of semi-supervised learning as an essential tool for data analysis, see 
\cite{zhu03,zhu2005semi,zhu2005graph,zhu2009introduction,Chap10}. 
Many classification tasks and problems in image analysis 
(see, e.g., \cite{Chap10} for an overview) traditionally require
an expert examining the data by hand, and this so-called labeling process is often a time-consuming and expensive task. 
In contrast, one typically faces an abundance of unlabeled data which one would also like to equip with suitable labels.
This is the key goal of the semi-supervised learning problem which mathematically can be formulated as the extension of a labeling function 
\begin{align*}
g:\consSet\rightarrow\R
\end{align*}
onto the whole data set $V:=\mathcal{V}\cup\consSet$, where $\consSet$ denotes the set of labeled 
and $\mathcal{V}$ the set of unlabeled data.
In most cases, the underlying data can be represented as a finite weighted graph $(V,\omega)$---composed of vertices $V$ and a weight function $\omega$ assigning similarity values to pairs of vertices---which provides a convenient mathematical framework.  
A popular method to generate a unique extension of the labeling function to the whole data set is so called \emph{$p$-Laplacian regularization}, which can be formulated as minimization task
\begin{equation}\label{eq:plapfunc}
\min_{\vecc u:V\to\R} \sum_{x,y\in V} 
\omega(x,y)^p \normR{\vecc u(x)-\vecc u(y)}^p,
\quad\text{ subject to }\vecc u = g\text{ on }\consSet,
\end{equation}
over all graph functions $\vecc u:V\rightarrow\R$ subject to a constraint given by 
the labels on $\consSet$, see, e.g., \cite{Ala16,zhu03,GarcSlep15,Slep19}. 
This method is equivalent to solving the $p$-Laplacian partial differential equations on graphs \cite{Elmo15} and therewith introduces a certain amount of smoothness of the labeling function.
Furthermore, continuum limits of this model as the number of unlabeled data tends to infinity were studied using tools from $\Gamma$-convergence~\cite{GarcSlep15,GarcSlep16,Slep19} and PDEs \cite{Cald19,calder2020poisson,calder2020rates} (see \cref{sec:related_work} for more details).

Still, $p$-Laplacian regularization comes with the drawback that it is ill-posed if $p$ is smaller than the ambient space dimension in the sense that the obtained solutions tend to be an average of the label values rather than properly incorporating the information. 
Extensive studies of this problem were carried out in \cite{GarcSlep16,Slep19}.
To overcome this degeneracy, there are several options: in \cite{calder2020rates} it was investigated at which rates the number of labeled data has to grow to obtain a well-posed problem {for $p=2$ in \labelcref{eq:plapfunc}.}
In \cite{calder2020poisson} it was suggested to replace the pointwise constraint $\vecc u=g$ on $\consSet$ with measure-valued source terms for the graph Laplacian equation.
In contrast, in \cite{Ala16} the authors propose to consider the $p$-Laplacian regularization for large $p$. 
In order to have well-posedness for general space dimensions, one therefore considers the limit $p\rightarrow\infty$ which leads to the so-called Lipschitz learning problem
\begin{equation}\label{eq:LL}
    \min_{\vecc u:V\rightarrow\R} \max_{x,y\in V}\omega(x,y) \normR{\vecc u(x)-\vecc u(y)}
\quad\text{ subject to }\vecc u = g\text{ on }\consSet.
\end{equation}
While in the case $p<\infty$ one has the unique existence of solutions and equivalence of the $p$-Laplacian PDE and the energy minimization task, these properties are lost in the case $p=\infty$. 
One distinguished continuum model{---in the sense that it admits unique solutions---}connected to this problem are \emph{absolutely minimizing Lipschitz extensions} and the associated \emph{infinity Laplacian equation} (see e.g.,~\cite{Aron65,Aron68,aronsson2004tour,juutinen2002absolutely,evans2011everywhere,sheffield2012vector}). 
Using the concept of viscosity solutions, {in \cite{Cald19} a convergence result on continuum limits for the infinity Laplacian equation on the flat torus was established, see again~\cref{sec:related_work} for more details}. 
Still, in \cite{Kyng15} the authors suggest that other Lipschitz extensions (next to the absolutely minimizing) are indeed relevant for machine learning tasks but a rigorous continuum limit for general Lipschitz extensions has been pending.  

The main goal of this paper is to derive a continuum limit for the Lipschitz learning problems~\labelcref{eq:LL} to which end we prove $\Gamma$-convergence and compactness of the functional in~\labelcref{eq:LL}.
We investigate novel smoothness conditions on the underlying domain which are special for this $\Lp{\infty}$-variational problem and originate from the discrepancy between the maximum local Lipschitz constant and the global one.
We apply our results to minimizers of a Rayleigh quotient involving the $\Lp{\infty}$-norm of the gradient as first examined in~\cite{bung20}. {The concrete outline of this paper can be found in 
\cref{sec:outline}.}
\subsection{Assumptions and Main Result}\label{sec:setMain}
Let $\Omega\subset\R^\dims$, $\dims\in\N$, be an open and bounded domain, and let $\domain_n\subset \overline{\domain}$ for $n\in\N$ denote a sequence of finite subsets. For each $n\in\N$ we consider the 
\emph{finite weighted graph} $(\domain_n, \omega_n)$, where
{
$\omega_n:\domain_n\times \domain_n\rightarrow[0,\infty)$} is a weighting function 
which in our context is given as
\begin{align*}
\omega_n(x,y) := \eta_{\scale_n}(\abs{x-y}) := \eta(\abs{x-y}/\scale_n).
\end{align*}
Here $\eta:[0,\infty)\rightarrow[0,\infty)$ denotes the \emph{kernel} and 
$\scale_n>0$ the scaling parameter. 
The edge set of the graph is 
implicitly characterized via the weighting function, i.e., for 
$x,y\in\domain_n$ we have
\begin{equation*}
(x,y)\text{ is an edge iff } \omega_n(x,y)>0.
\end{equation*}
In the following we state standard assumptions on the kernel function 
$\eta$, see, e.g., \cite{GarcSlep15,GarcSlep16,Slep19,Cald19}, 
\begin{enumerate}[label=(K\upshape\arabic*)]
\item\label{en:K1} $\eta$ is positive and continuous at $0$,
\item\label{en:K2} $\eta$ is {non-increasing},
\item\label{en:K4} {$\supp(\eta) \subset [0,\etaradius_\eta]$ for some $\etaradius_\eta>0$}.
\end{enumerate}
Similar to \cite{GarcSlep15} we define the value 
$\sigma_{\eta} := \esssup_{t\geq 0} \left\{\eta(t)~t\right\}$ which is a positive number and appears in the $\Gamma$-limit.

To incorporate constraints, for each $n$ we denote by 
\begin{align*}
\consSet_n\subset\domain_n
\end{align*}
the set of labeled vertices as seen in \labelcref{eq:LL}. Often this set is fixed and therefore independent of $n$ (e.g., \cite{Cald19,GarcSlep15,Slep19}, however, see also \cite{calder2020rates,calder2020poisson} for $p$-Laplacian learning models with varying constraint sets).
As we see in \cref{lem:LIcons} and \cref{lem:LScons} below, making this assumption is not necessary in our case. 
We only require that the sets $\consSet_n$ converge to some closed set $\consSet\subset\overline{\domain}$ in the Hausdorff distance sufficiently fast, i.e., 
\begin{align}\label{eq:labelset_cvgc}
d_H(\consSet_n,\consSet) := \max\left(\sup_{x\in\consSet}\inf_{y\in\consSet_n} |x-y|, \sup_{y\in\consSet_n}\inf_{x\in\consSet} |x-y|\right) = o(\scale_n),\quad n \to \infty,
\end{align}
where $\scale_n$ denotes the spatial scaling of the kernel, introduced above.
A prototypical example for the constraint set is $\consSet=\partial\Omega$ which corresponds to the problem of extending Lipschitz continuous boundary values $g$ from $\partial\Omega$ to $\Omega$.
Considering a labeling function $g:\overline{\domain}\rightarrow\R$ which is Lipschitz 
continuous allows us to restrict it to the finite set of vertices $\consSet_n$.  
The target functional for the discrete case has the form
\begin{align}\label{eq:discrete_func}
\funcd_n(\vecc u) = \frac{1}{\scale_n}\max_{x,y\in\Omega_n}
\left\{\eta_{s_n}(\abs{x-y}) \normR{\vecc u(x) - \vecc u(y)}\right\}
\end{align}
for a function $\vecc u:\domain_n\rightarrow\R$. 

Additionally we define the constrained version of the functional, which 
incorporates the labeling function, as follows
\begin{align*}
\funcd_{n,\mathrm{cons}}(\vecc u)=
\begin{cases}
\funcd_n(\vecc u)&\text{ if } \vecc u = g\text{ on }\consSet_n,\\
\infty&\text{ else.}
\end{cases}
\end{align*}
A typical problem in this context of continuum limits is to find a single metric space in which the convergence of these functionals takes place. 
In our case we choose the normed space $\Lp{\infty}(\domain)$ and thus need to {extend the functionals $\funcd_n$ to $L^{\infty}(\domain)$.} 
This can be achieved by employing the well-established technique (see, e.g., \cite{Bert12}) of only considering piecewise constant functions. 
To this end we let $\vor_n$ denote a closest point projection, i.e., a map  
$\vor_n:\domain\rightarrow\domain_n$ such that
\begin{align*}
\vor_n(x)\in\argmin_{y\in\domain_n}\normR{x-y}
\end{align*}
for each $x\in\domain$. %
{While $\vor_n$ is not necessarily 
uniquely determined, this ambiguity is not relevant for our analysis. There, it is only important to control the value of $\abs{\vor_n(x)-x}$ which is independent of the choice of $\vor_n$.} %
This map has already been employed in \cite{Cald19} and it allows us to transform a graph function 
$\vecc u:\domain_n\to\R$ to a piecewise constant function, by considering $\vecc u\circ\vor_n\in\Lp{\infty}(\domain)$.
This function is constant on each so-called Voronoi cell $\interior{\vor_n^{-1}(y)}$ for $y\in\domain_n$.
This procedure is similar to the technique 
proposed in \cite{GarcSlep15}, where an optimal transport map 
$T_n:\domain\rightarrow\domain_n$ is used for turning graph functions into continuum functions. 
Now we can extend the functional $\funcd_n$ {and $\funcd_{n,\mathrm{cons}}$}
to arbitrary functions $u\in\Lp{\infty}(\domain)$ by defining with a slight abuse of notation 
{
\begin{align}\label{eq:TrafoDisc}
\funcd_{n}(u) 
&:= 
\begin{cases}
\funcd_{n}(\vecc u) 
\quad&\text{if }\exists \vecc u:\domain_n\to\R : u = \vecc u \circ p_n, \\
\infty \quad &\text{else}.
\end{cases}
\\
\label{eq:TrafoDiscConstr}
\funcd_{n,\mathrm{cons}}(u) 
&:= 
\begin{cases}
\funcd_{n,\mathrm{cons}}(\vecc u) 
\quad&\text{if }\exists \vecc u:\domain_n\to\R : u = \vecc u \circ p_n, \\
\infty \quad &\text{else}.
\end{cases}
\end{align}}
In order to control how the discrete sets $\domain_n$ fill out the domain $\domain$, we consider the value 
\begin{align}\label{eq:resolution}
r_n:=\sup_{x\in\domain} \min_{y\in\domain_n} \normR{x-y},
\end{align}
and require that it tends faster to zero than the scaling $\scale_n$, namely, we 
assume that 
\begin{align}\label{eq:scaling}
\frac{r_n}{\scale_n}\longrightarrow 0, \quad n \to \infty.
\end{align}
{
We are interested in the case that $(s_n)_{n\in\N}\subset (0,\infty)$ is a \emph{null sequence} meaning that 
$s_n\rightarrow 0$ for $n\rightarrow \infty$.}
%
\begin{remark}
In the context of continuum limits one often employs random geometric graphs, 
where the discrete sets are obtained as a sequence of points that are 
i.i.d. w.r.t. a probability distribution $\mu\in\MSet(\overline\domain)$.
{
Typically there is no need to use a probabilistic framework in the $L^\infty$ context, since in contrast to the graph $p$-Dirichlet energy \labelcref{eq:Slepfun}, which is a Monte Carlo approximation of an integral functional, the corresponding discrete Lipschitz energy \labelcref{eq:discrete_func} approximates an essential supremum. Therefore, only the support of a probability measure enters our problem.
}
{
Similar observations are made in \cite{Cald19, gruyer2007absolutely} where the 
value $r_n$ is also employed to control the discrete sets~$\domain_n$.}
\end{remark}
{
The $\Gamma$-limit of the discrete functionals $\funcd_n$ 
turns out to be a constant multiple of the following continuum functional}
\begin{align}\label{eq:CFunc}
\func(u) := 
\begin{cases}
\esssup_{x\in\domain}~\normR{\gradR{u}(x)}&\text{if } u \in W^{1,\infty}(\domain),\\
\infty&\text{else.}
\end{cases}
\end{align}%
A constrained version of this functional can be defined analogously
\begin{align}\label{eq:constr_functional}
\func_{\mathrm{cons}}(u) := 
\begin{cases}
\func(u)&\text{if } u \in W^{1,\infty}(\domain)\text{ and } u=g\text{ on }\consSet,\\
\infty&\text{else.}
\end{cases}
\end{align}
Before stating our main results we need to introduce a final assumption on the domain $\domain$ which is necessary because of the discrepancy of the Lipschitz constant and the supremal norm of the gradient of functions on non-convex sets.
For this we introduce the geodesic distance, induced on $\domain$ by the Euclidean distance, which is defined as
\begin{align*}
d_\domain(x,y)=\inf\left\lbrace
\mathrm{len}(\gamma): \gamma:[a,b]\rightarrow\domain 
\text{ is a curve with }
\gamma(a)=x, \gamma(b) = y
\right\rbrace,
\end{align*}
where the length of a curve is given by
\begin{align*}
\mathrm{len}(\gamma):=\sup\left\lbrace\sum_{i=0}^{N-1} \abs{\gamma(t_{i+1})-\gamma(t_i)}\,:\,N\in\N,\,a=t_0\leq t_1\leq\ldots\leq t_N=b\right\rbrace,
\end{align*}
see, e.g., \cite[Prop. 3.2]{Brid99}.
While on convex domains it holds $d_\domain(x,y)=\normR{x-y}$, 
we only need to assume the weaker condition:
\begin{align}\label{eq:cond_domain}
\lim_{\delta\downarrow0}\sup
\left\lbrace
\frac{d_\domain(x,y)}{\normR{x-y}}\,:\,
x,y\in\domain,\,\normR{x-y}<\delta\right\rbrace \leq 1.
\end{align}
In \cref{sec:gamma-cvgc} we explore examples of sets which satisfy 
this condition, however, already at this point we would like to say 
that it is satisfied, for instance, for convex sets, for sets with smooth boundary,
or for sets which locally are diffeomorphic to a convex set.
In a nutshell, condition~\labelcref{eq:cond_domain} prohibits the presence of internal corners in the boundary.
Furthermore, since it holds $d_\domain(x,y)\geq\normR{x-y}$, condition \labelcref{eq:cond_domain} requires the geodesic and the Euclidean distance to coincide locally.
\paragraph{Main results}
Our two main results state the discrete-to-continuum $\Gamma$-con\-ver\-gence of the functionals $\funcd_{n,\mathrm{cons}}$ to $\sigma_\eta~\func_\mathrm{cons}$ and that sequences of minimizers of the discrete functionals converge to a minimizer of the continuum functional.
\begin{theorem}[Discrete to continuum $\Gamma$-convergence]\label{thm:DCGamma}
Let $\domain\subset\Rd$ be a domain satisfying~\labelcref{eq:cond_domain}, let the kernel fulfil \labelcref{en:K1}-\labelcref{en:K4}, and let the constraint sets $\consSet_n,\consSet$ satisfy~\labelcref{eq:labelset_cvgc}, then for any null sequence $\seq{\scale}{n}{(0,\infty)}$ which satisfies the scaling condition~\labelcref{eq:scaling}
we have
\begin{align}
\funcd_{n,\mathrm{cons}}\GConv \sigma_{\eta}~\func_\mathrm{cons}.
\end{align}
\end{theorem}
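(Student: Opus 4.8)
The plan is to verify the two defining inequalities of $\Gamma$-convergence in $\Lp{\infty}(\domain)$ separately: the \emph{liminf inequality}, that $\liminf_n \funcd_{n,\mathrm{cons}}(u_n) \geq \sigma_\eta\,\func_\mathrm{cons}(u)$ whenever $u_n \to u$ in $\Lp{\infty}(\domain)$, and the \emph{limsup inequality}, that every $u$ admits a recovery sequence $u_n \to u$ with $\limsup_n \funcd_{n,\mathrm{cons}}(u_n) \leq \sigma_\eta\,\func_\mathrm{cons}(u)$. I would first prove both inequalities for the \emph{unconstrained} functionals $\funcd_n$ and $\func$, and only afterwards reinstate the labeling constraint by invoking \cref{lem:LIcons} and \cref{lem:LScons}; these lemmas encode the compatibility of $\consSet_n$ with $\consSet$ under the Hausdorff convergence~\labelcref{eq:labelset_cvgc} and let one pass the constraint $\vecc u = g$ to and from the limit.

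For the liminf inequality I would assume $\Lambda := \liminf_n \funcd_n(u_n) < \infty$, pass to a subsequence realizing it along which each $u_n = \vecc u_n \circ \vor_n$ has finite energy, and read off from $\funcd_n(\vecc u_n) < \infty$ the edgewise bound $\normR{\vecc u_n(x)-\vecc u_n(y)} \leq \scale_n\,\funcd_n(\vecc u_n)/\eta(\abs{x-y}/\scale_n)$. The decisive step is a chaining argument: to bound $\normR{u(x)-u(y)}$ I would fix a near-geodesic curve realizing $d_\domain(x,y)$, sample it with steps of nominal length $t_\epsilon\scale_n$, where $t_\epsilon$ is a continuity point of $\eta$ satisfying $\eta(t_\epsilon)\,t_\epsilon \geq \sigma_\eta-\epsilon$, and replace each sample by a nearby vertex of $\domain_n$. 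The resolution~\labelcref{eq:resolution} together with~\labelcref{eq:scaling} ensures the substituted vertices stay inside the kernel support (so the edges exist) and keeps the perturbed step lengths close to $t_\epsilon\scale_n$. Summing the edgewise bounds over the $\approx d_\domain(x,y)/(t_\epsilon\scale_n)$ steps cancels the factors $\scale_n$ and yields, after $n\to\infty$ and $\epsilon\downarrow 0$, the geodesic Lipschitz estimate $\normR{u(x)-u(y)} \leq d_\domain(x,y)\,\Lambda/\sigma_\eta$. The domain assumption~\labelcref{eq:cond_domain} then upgrades this to a local \emph{Euclidean} Lipschitz bound, so that $u\in W^{1,\infty}(\domain)$ with $\sigma_\eta\,\esssup_{\domain}\normR{\gradR{u}} \leq \Lambda$.

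For the limsup inequality the natural choice is the restriction $\vecc u_n := u|_{\domain_n}$ and $u_n := \vecc u_n\circ\vor_n$. Since $u$ is Lipschitz, $\normR{u_n(x)-u(x)} = \normR{u(\vor_n(x))-u(x)} \leq (\esssup_\domain\normR{\gradR{u}})\,r_n \to 0$ by~\labelcref{eq:resolution}, giving $u_n\to u$ in $\Lp{\infty}(\domain)$. For the energy I would note that only pairs with $\abs{x-y}\leq\etaradius_\eta\scale_n\to 0$ contribute, so the local Lipschitz bound combined with~\labelcref{eq:cond_domain} gives $\normR{u(x)-u(y)} \leq (\esssup_\domain\normR{\gradR{u}})\,d_\domain(x,y) \leq (\esssup_\domain\normR{\gradR{u}})\,\abs{x-y}\,(1+o(1))$. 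Writing $t=\abs{x-y}/\scale_n$ and using $\eta(t)\,t\leq\sigma_\eta$, each summand is at most $(\esssup_\domain\normR{\gradR{u}})\,\eta(t)\,t\,(1+o(1)) \leq \sigma_\eta\,(\esssup_\domain\normR{\gradR{u}})\,(1+o(1))$, whence $\limsup_n\funcd_n(\vecc u_n)\leq\sigma_\eta\,\func(u)$; \cref{lem:LScons} then adjusts the sequence to meet $\vecc u_n=g$ on $\consSet_n$ at vanishing additional cost.

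I expect the liminf direction to be the main obstacle, and within it the chaining construction that forces the sharp constant $\sigma_\eta$ to appear. One must simultaneously guarantee the existence of the graph edges along the discretized curve (through $r_n/\scale_n\to 0$), select step lengths near a maximizer of $t\mapsto\eta(t)\,t$ while dodging the at most countably many discontinuities of the non-increasing kernel, and trade the global Euclidean Lipschitz constant---which may strictly exceed $\esssup_\domain\normR{\gradR{u}}$ on non-convex domains---for the geodesic one; assumption~\labelcref{eq:cond_domain} is precisely what lets this gap close in the limit.
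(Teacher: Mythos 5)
Your proposal is correct, and the limsup half follows essentially the same route as the paper (restrict $u$ to $\domain_n$, overwrite with $g$ on $\consSet_n$, use the geodesic Lipschitz bound together with \labelcref{eq:cond_domain} and $\eta(t)\,t\le\sigma_\eta$, and absorb the constraint mismatch via the $o(\scale_n)$ Hausdorff convergence). The liminf half, however, is genuinely different. The paper proves \cref{lem:LIcons} by first bounding the discrete functional from below by the non-local functional $\func_{\tilde\scale_n}$ at a perturbed scale $\tilde\scale_n$ (using monotonicity of $\eta$ and the projection error $r_n$, plus a cut-off of the kernel near $0$ to reduce to the case where $\eta$ is constant on an initial interval), and then establishes the non-local-to-local liminf inequality (\cref{lem:NLLliminf}) by a weak-$\ast$ compactness argument: the difference quotients $v_n^h=(u_n(\cdot)-u_n(\cdot+\scale_n h))/\scale_n$ are identified, by testing against $C_c^\infty$ functions, as converging weakly-$\ast$ to $\nabla u\cdot h$, after which lower semicontinuity of $\norm{\cdot}_{\Lp{\infty}}$ and optimization over $h$ produce the constant $\sigma_\eta$. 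Your chaining argument along near-geodesics sampled at steps $t_\epsilon\scale_n$, with $t_\epsilon$ a continuity point of the monotone kernel nearly maximizing $t\mapsto\eta(t)\,t$, instead yields the pointwise estimate $\normR{u(x)-u(y)}\le d_\domain(x,y)\,\Lambda/\sigma_\eta$ directly, from which $u\in W^{1,\infty}(\domain)$ and the gradient bound follow by the local coincidence of geodesic and Euclidean distance (for this step openness of $\domain$ already suffices; \labelcref{eq:cond_domain} is only genuinely needed for the limsup). Notably, your chaining construction is precisely the mechanism the paper uses for its \emph{compactness} results (\cref{lem:compnonloc} and \cref{boundseql}), so your route unifies the liminf and compactness proofs into one elementary argument, at the price of redoing the vertex-substitution bookkeeping that the paper packages once into the passage from $\funcd_n$ to $\func_{\tilde\scale_n}$; the paper's route buys a reusable non-local intermediate statement (\cref{thm:NLCGamma}) and obtains $W^{1,\infty}$ membership distributionally rather than via the Lipschitz characterization. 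The only part of your sketch that is left thin is the verification that the liminf-limit $u$ satisfies $u=g$ on $\consSet$, but the required oscillation control $\normR{u_n(x)-u_n(x_n)}\lesssim\scale_n\funcd_{n,\mathrm{cons}}(u_n)$ for $\abs{x-x_n}=o(\scale_n)$ is exactly a one-step instance of your edgewise bound, so this is a matter of writing it out rather than a missing idea.
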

\begin{theorem}[Convergence of Minimizers]\label{thm:ConvMinLip}
Let $\domain\subset\Rd$ be a domain satisfying~\labelcref{eq:cond_domain}, 
let the kernel fulfil \labelcref{en:K1}-\labelcref{en:K4}, 
let the constraint sets $\consSet_n,\consSet$ satisfy~\labelcref{eq:labelset_cvgc}, 
and $\seq{s}{n}{(0,\infty)}$ be a null sequence which satisfies the scaling 
condition~\labelcref{eq:scaling}.
Then any sequence $\seq{u}{n}{\Lp{\infty}(\Omega)}$ such that 
\begin{align*}
\lim_{n\rightarrow\infty}\left(\funcd_{n,\mathrm{cons}}(u_n) - 
\inf_{u\in\Lp{\infty}(\domain)}\funcd_{n,\mathrm{cons}}(u)\right) = 0
\end{align*}
is relatively compact in $\Lp{\infty}(\domain)$ and 
\begin{align*}
\lim_{n\rightarrow\infty} \funcd_{n,\mathrm{cons}}(u_n) = 
\min_{u\in\Lp{\infty}(\domain)}\sigma_{\eta}~\func_{\mathrm{cons}}(u).
\end{align*}
Furthermore, every cluster point of $(u_n)_{n\in\N}$ is a minimizer of 
$\func_{\mathrm{cons}}$. 
\end{theorem}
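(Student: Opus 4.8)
The plan is to deduce this statement from the fundamental theorem of $\Gamma$-convergence \cite{Bert12}: once \cref{thm:DCGamma} is combined with \emph{equi-coercivity} of the family $\funcd_{n,\mathrm{cons}}$ in $\Lp{\infty}(\domain)$, both the compactness of almost-minimizing sequences and the convergence of the minimal values become automatic. The real work therefore lies in establishing equi-coercivity, i.e.\ in showing that every sequence $\seq{u}{n}{\Lp{\infty}(\domain)}$ with $\sup_n \funcd_{n,\mathrm{cons}}(u_n) \le C < \infty$ is relatively compact in $\Lp{\infty}(\domain)$. This is the step I expect to be the main obstacle.

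To prove compactness I would first note that a finite energy forces $u_n = \vecc u_n \circ \vor_n$ for a graph function $\vecc u_n$ satisfying $\eta_{\scale_n}(\abs{x-y})\,\normR{\vecc u_n(x)-\vecc u_n(y)} \le C\,\scale_n$ for all $x,y\in\domain_n$. By \labelcref{en:K1} the kernel is bounded below by a positive constant on $[0,\alpha]$ for some $\alpha>0$, so for any pair with $\abs{x-y}\le\alpha\,\scale_n$ one obtains $\normR{\vecc u_n(x)-\vecc u_n(y)}\lesssim C\,\scale_n$; that is, increments across single edges are controlled. Chaining such estimates along a near-geodesic polygonal path in $\domain$ --- which is possible because the resolution condition \labelcref{eq:scaling} guarantees that points of $\domain_n$ lie within $r_n\ll\scale_n$ of any such path --- yields a Lipschitz-type bound $\normR{\vecc u_n(x)-\vecc u_n(y)}\lesssim C\,d_\domain(x,y)$, uniformly in $n$. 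Invoking \labelcref{eq:cond_domain} to compare the geodesic and Euclidean distance locally then turns this into uniform equicontinuity of the piecewise constant extensions $u_n$, with the oscillation inside each Voronoi cell vanishing at rate $r_n\to 0$. The constraint $u_n=g$ on $\consSet_n$ together with the Lipschitz estimate pins the functions down, giving a uniform $\Lp{\infty}$-bound. An Arzelà--Ascoli argument adapted to the piecewise constant setting then delivers a subsequence converging uniformly, hence in $\Lp{\infty}(\domain)$.

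With equi-coercivity in hand the rest is the standard $\Gamma$-convergence machinery. Writing $m_n:=\inf_{v}\funcd_{n,\mathrm{cons}}(v)$, the $\limsup$-inequality applied to an arbitrary admissible competitor $v\in W^{1,\infty}(\domain)$ with $v=g$ on $\consSet$ provides a recovery sequence $v_n\to v$ with $\limsup_n m_n \le \limsup_n \funcd_{n,\mathrm{cons}}(v_n) \le \sigma_\eta\,\func_{\mathrm{cons}}(v)$; taking the infimum over $v$ shows $\limsup_n m_n \le \sigma_\eta\inf\func_{\mathrm{cons}}$, which is finite because the Lipschitz function $g$ is itself an admissible competitor. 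Hence the energies $\funcd_{n,\mathrm{cons}}(u_n)=m_n+o(1)$ are bounded and compactness applies, yielding $u_{n_k}\to u$ in $\Lp{\infty}(\domain)$ along a subsequence. The $\liminf$-inequality then gives $\sigma_\eta\,\func_{\mathrm{cons}}(u)\le\liminf_k\funcd_{n_k,\mathrm{cons}}(u_{n_k})=\liminf_k m_{n_k}$. Combining these estimates sandwiches all quantities between $\sigma_\eta\inf\func_{\mathrm{cons}}$ and $\sigma_\eta\,\func_{\mathrm{cons}}(u)$, forcing equalities: $u$ attains $\inf\func_{\mathrm{cons}}$ (so the minimum exists and $u$ is a minimizer), $m_n\to\sigma_\eta\min\func_{\mathrm{cons}}$, and therefore $\funcd_{n,\mathrm{cons}}(u_n)\to\sigma_\eta\min\func_{\mathrm{cons}}$ for the full sequence. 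Since this argument applies to every convergent subsequence and always produces the same limiting value, every cluster point of $(u_n)_{n\in\N}$ is a minimizer of $\func_{\mathrm{cons}}$, which completes the plan.
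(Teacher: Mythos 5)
Your proposal is correct and follows essentially the same route as the paper: equi-coercivity of $\funcd_{n,\mathrm{cons}}$ obtained by chaining edge-level increment bounds along near-geodesic paths (using $r_n/\scale_n\to 0$ to snap to graph vertices, and \labelcref{eq:cond_domain} both to compare geodesic with Euclidean distances and to obtain the finite geodesic diameter that anchors the $\Lp{\infty}$-bound at the constraint set), followed by the standard $\Gamma$-convergence machinery. The only organizational difference is that the paper routes the oscillation estimate through the non-local functionals $\func_{\tilde\scale_n}$ and formalizes your ``Arzel\`a--Ascoli adapted to the piecewise constant setting'' as a partition-based total-boundedness criterion in $\Lp{\infty}$ (\cref{lem:compfirst}), whereas you chain directly at the discrete level.
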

\subsection{Related Work}\label{sec:related_work}
{
The first studies concerning the limit behaviour of 
difference operators on random graphs were carried out in 
\cite{Poll81} and the follow up work \cite{Lux08}, 
which considers the consistency of spectral clustering on graphs. 
The main motivation of our paper are the works \cite{GarcSlep15, Slep19} where the $\Gamma$-convergence of the discrete functionals
\begin{align}\label{eq:Slepfun}
\funcd^{(p)}_n(\vecc u) = \frac{1}{\scale_n^p n^2}\sum_{x,y\in\domain_n}
\eta_{\scale_n}(\abs{x-y}) \normR{\vecc u(x)-\vecc u(y)}^p
\end{align}
towards a continuum functional of the form 
\begin{align*}
\func^{(p)}(u) = \sigma_\eta  \int_{\domain} \normR{\gradR{u}}^p \rho^2(x)~\dd x,
\end{align*}
for $1\leq p<\infty$ and $u$ smooth enough, is shown. 
Here, $\rho$ is the density of the 
measure $\mu$ according to which the points $x_i$ are distributed.}
The $\Gamma$-convergence is considered w.r.t. the $\TLP{p}$ topology, 
which allows to compare discrete functions $\vecc u\in\Lp{p}(\nu_n)$ 
with continuum functions $u\in\Lp{p}(\mu)$ 
via an optimal transport ansatz. Here, $\nu_n$ denotes the 
empirical measure for the set $\domain_n$, see \labelcref{eq:empmeasure}.
In particular, the problem they study is connected to the extension 
task \labelcref{eq:plapfunc} by considering the constrained functional
\begin{align*}
\funcd^{(p)}_{n,\mathrm{cons}}(\vecc u):=
\begin{cases}
\funcd^{(p)}_n(\vecc u)&\text{ if } \vecc u(x_i) = g(x_i)
\text{ for }x_i\in\consSet,\\
\infty&\text{ else},
\end{cases}
\end{align*}
and the associated minimization problem
\begin{align}\label{eq:minGarc}
\inf_{\vecc u\in\Lp{p}(\nu_n)}\funcd^{(p)}_{n,\mathrm{cons}}(\vecc u).
\end{align}
Here, the constraint set $\consSet$ is assumed to be a fixed collection of finitely many points.
The main result they show ensures that minimizers of the functionals $\funcd^{(p)}_{n,\mathrm{cons}}$ converge uniformly towards minimizers of a respective constrained version of $\func^{(p)}$ under appropriate assumptions on the scaling $\scale_n$. 
The motivation for considering the limit $p\rightarrow\infty$ for above problems is given in \cite{Ala16}, where the graph $p$-Laplacian for $p<d$ is noticed to have undesirable properties, namely the solution of problem \labelcref{eq:minGarc} tends to %
{
equal a constant at the majority of the vertices 
with sharp spikes at the labeled data.} %
{
Using a suitable scaling,} this problem does not occur in the case $p>\dims$ 
(which is intuitively connected to the Sobolev embedding 
$W^{1,p}(\domain) \hookrightarrow C^{0,1-\dims/p}(\overline\domain)$ for 
$p>\dims$ see \cite[Ch. 4]{Adam03}) and in particular as pointed out in 
\cite[Sec. 3]{Kyng15} one generally hopes for better interpolation properties in the 
case $p\rightarrow\infty$. However, in this limit the interpretation of the 
measure $\mu$ and its density changes{, namely only its support enters the problem.} %
The functional $\func^{(p)}$ incorporates 
information about the distribution of the data in form of the density $\rho$. %
{
Using standard properties of $L^p$-norms, the limit $p\rightarrow\infty$} on the other hand reads
\begin{align*}
\mu\operatorname{-}\esssup_{x\in\domain} \normR{\gradR{u}}=
\lim_{p\rightarrow\infty} \big(\func^{(p)}(u)\big)^{1/p}
\end{align*}
and thus only the support of the measure $\mu$ is relevant. 
This phenomenon was already observed 
in \cite{Ala16, Cald19} which the authors informally described as the 
limit `forgetting' the distribution of the data. 
Furthermore, this observation is consistent with our results, since 
the limit $n\rightarrow\infty$ is independent of 
the sequence $\seq{x}{i}{\domain}$ as long as it fills out the domain 
$\domain$ sufficiently fast, see \cref{thm:DCGamma} for the precise condition.
In the case $p<\infty$ the minimization task \labelcref{eq:plapfunc} is equivalent 
to the graph $p$-Laplace equation, for which the formal limit $p\rightarrow\infty$ 
leads to the graph infinity Laplace equation
\begin{align}\label{eq:GIL}
\begin{split}
\laplace_\infty \vecc u &= 0 \text{ in } V\setminus\consSet,\\
\vecc{u} &= g\text{ in }\consSet,
\end{split}
\end{align}
where the operator $\laplace_\infty$ on $\domain_n$ is defined as
\begin{gather*}
\laplace_\infty\vecc u(x):=\max_{y\in \domain_n}\omega(x,y)(\vecc u(y) - \vecc u(x)) +
\min_{y\in \domain_n}\omega(x,y)(\vecc u(y) - \vecc u(x)).
\end{gather*}
One should note that the unique solution of \labelcref{eq:GIL} also solves the 
Lipschitz learning task \labelcref{eq:LL}, see, e.g., \cite{Cald19,Kyng15}.
{
A first study concerning the continuum limit of this problem is carried out in \cite{Cald19}}. 
{The main result therein states} that the solutions of the discrete problems converge uniformly to the viscosity solution of the continuum equation, 
\begin{align}\label{eq:infLapCont}
\begin{split}
\laplace_\infty u&=0 \text{ in }\domain,\\
u&=g\text{ on }\consSet,
\end{split}
\end{align}
{
where the continuum infinity Laplacian for a smooth function $u$ is defined as
\begin{align}
\Delta_\infty u := \langle \nabla u, D^2 u\nabla u\rangle = \sum_{i,j=1}^d \partial_i u\,\partial_j u\, \partial^2_{ij}u.
\end{align}
}
The considered domain is the flat torus, i.e., $\domain=\Rd\setminus\mathbb{Z}^\dims$ and again the constraint set $\consSet$ is assumed to be fixed and finite.
Furthermore, the only requirement on the sequence of points $\domain_n\subset\domain$ is characterized by the value 
{
$r_n$ defined in \labelcref{eq:resolution}.
}
\begin{theorem}{\cite[Thm. 2.1]{Cald19}}{}
Consider the flat torus $\domain=\Rd\setminus\mathbb{Z}^\dims$, 
let the kernel $\eta\in C^2([0,\infty))$ be such that 
\begin{align*}
\begin{cases}
\eta(s)\geq 1&\text{ for } s\leq 1,\\
\eta(s)=0&\text{ for } s\geq 2,
\end{cases}
\end{align*}
then for a null sequence $\seq{s}{n}{(0,\infty)}$ such that
\begin{align}\label{eq:jeffs_scaling}
\frac{r_n^2}{\scale_n^3}\longrightarrow 0
\end{align}
and for the sequence $\vecc u_n$ of solutions to problem \labelcref{eq:GIL} we have that 
$\vecc u_n \rightarrow u$ uniformly, where $u\in C^{0,1}(\domain)$ is the unique viscosity 
solution of the infinity Laplace equation {\labelcref{eq:infLapCont}} on $\domain$, i.e.,
\begin{align*}
\lim_{n\rightarrow\infty}\max_{x\in\domain_n}\normR{\vecc u_n(x) - u(x)} = 0.
\end{align*}
\end{theorem}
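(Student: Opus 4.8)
\section*{Proof proposal}

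The plan is to follow the standard three-part strategy for the convergence of a consistent scheme to a viscosity solution: (i) uniform stability and compactness of the discrete solutions, (ii) identification of every uniform limit as a viscosity solution via consistency of the discrete operator $\laplace_\infty$, and (iii) uniqueness of the continuum solution to upgrade subsequential convergence to convergence of the full sequence. Throughout I extend $\vecc u_n$ to a function on $\domain$ by precomposition with the closest-point projection, $\vecc u_n\circ\vor_n$, so that all objects live in $\Lp{\infty}(\domain)$ and uniform convergence refers to these extensions.

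First I would establish $\Lp{\infty}$-bounds and equicontinuity. The discrete maximum principle for $\laplace_\infty$ (solutions of \labelcref{eq:GIL} attain their extrema on $\consSet_n$) immediately gives $\min_{\consSet_n}g\le\vecc u_n\le\max_{\consSet_n}g$, so the $\vecc u_n$ are uniformly bounded. For equicontinuity I would use comparison with cone functions $x\mapsto a\pm b\,\normR{x-z}$: because $g$ is Lipschitz and $\consSet_n\to\consSet$ in the Hausdorff distance, such cones furnish discrete super- and subsolutions squeezing $\vecc u_n$, yielding a Lipschitz estimate with a constant independent of $n$. Combined with the bound $r_n\to0$, the Arzel\`a--Ascoli theorem extracts a subsequence of the extensions $\vecc u_n\circ\vor_n$ converging uniformly to some $u\in C^{0,1}(\domain)$.

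The heart of the argument is that any such limit $u$ is a viscosity solution of $\Delta_\infty u=0$ with $u=g$ on $\consSet$; I would use the perturbed test function method. Let $\phi\in C^\infty$ touch $u$ from above at an interior point $x_0\notin\consSet$. By uniform convergence one locates $x_n\in\domain_n$ with $x_n\to x_0$ at which $\vecc u_n-\phi$ is nearly maximized over $\domain_n$, so $\phi$ is admissible in the discrete equation at $x_n$. Inserting the second-order Taylor expansion of $\phi$ into $\laplace_\infty\vecc u_n(x_n)=0$ and using $\eta\in C^2$, the max- and min-terms select increments close to $\pm t^\ast\,\nabla\phi(x_0)/\normR{\nabla\phi(x_0)}$ and combine, after dividing by $\scale_n^2$, into the normalized infinity-Laplacian $\langle\nabla\phi,D^2\phi\,\nabla\phi\rangle/\normR{\nabla\phi}^2$ at $x_0$ up to two errors. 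The Taylor-truncation error is $O(\scale_n)$ and vanishes. The second error comes from replacing the maximum over the continuous ball $B(x_0,2\scale_n)$ by the maximum over the sampled set $\domain_n$; since the continuous maximizer is a critical point of the smooth objective, a shift of size $r_n$ costs only at second order, namely $O(r_n^2/\scale_n)$ before normalization, hence $O(r_n^2/\scale_n^3)$ after dividing by $\scale_n^2$. This is exactly why the scaling \labelcref{eq:jeffs_scaling} is imposed, leaving $\Delta_\infty\phi(x_0)\le0$; the supersolution inequality follows symmetrically, and $u=g$ on $\consSet$ is read off from $\vecc u_n=g$ on $\consSet_n$ together with \labelcref{eq:labelset_cvgc} and the uniform Lipschitz bound.

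Finally, since $\domain$ is the flat torus and $\consSet$ is a fixed finite set, Jensen's comparison principle for the infinity-Laplacian guarantees that the viscosity solution of \labelcref{eq:infLapCont} with data $g$ on $\consSet$ is unique; every subsequential uniform limit therefore equals this $u$, and a standard subsequence argument promotes this to uniform convergence of the whole sequence. I expect the consistency step to be the main obstacle. The infinity-Laplacian is fully nonlinear and degenerate: its normalized form is discontinuous where $\nabla u=0$, so test points with vanishing gradient must be handled via the semicontinuous-envelope convention (under which the normalized and un-normalized equations share the same solutions). Moreover one must track the two length scales $\scale_n$ and $r_n$ simultaneously, and it is precisely the criticality of the continuous maximizer that yields the sharp threshold $r_n^2/\scale_n^3$ rather than the cruder $r_n/\scale_n^2$ bound, which would demand a stronger scaling hypothesis.
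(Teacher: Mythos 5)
This theorem is not proved in the paper: it is quoted verbatim from \cite{Cald19} (as the attribution in the theorem header indicates), and the surrounding remarks only summarize that the proof there is a viscosity-solution consistency argument requiring the smoothness of $\eta$ and the scaling \labelcref{eq:jeffs_scaling}. There is therefore no in-paper proof to compare yours against; what can be said is that your proposal is a faithful reconstruction of the stability--consistency--comparison scheme that the paper attributes to \cite{Cald19}: uniform bounds from the discrete maximum principle, equicontinuity from cone comparison, identification of uniform limits as viscosity solutions by testing the discrete operator $\laplace_\infty$ against smooth functions, and uniqueness via Jensen's comparison principle. Your accounting of the discretization error --- the continuum maximizer of the smooth objective is an interior critical point, so an $r_n$-perturbation costs $O(r_n^2/\scale_n)$ before the $\scale_n^{-2}$ normalization --- correctly explains the exponent in \labelcref{eq:jeffs_scaling} and is consistent with the paper's remark that this is a ``relatively strict'' condition compared to the $\Gamma$-convergence route it develops instead. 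Two small corrections: first, the viscosity inequality you derive has the wrong sign --- if $\phi$ touches $u$ from above at $x_0$ and $\laplace_\infty\vecc u_n(x_n)=0$ at a near-maximizer of $\vecc u_n-\phi$, the monotonicity of the max- and min-terms gives $\laplace_\infty\phi(x_n)\geq -o(\scale_n^2)$ and hence the subsolution inequality $\Delta_\infty\phi(x_0)\geq 0$, not $\leq 0$; second, in this theorem the constraint set is fixed and finite, so the Hausdorff-convergence hypothesis \labelcref{eq:labelset_cvgc} you invoke for the boundary condition belongs to the present paper's framework rather than to the setting of \cite{Cald19}, where $\vecc u_n=g$ on $\consSet_n=\consSet$ makes that step immediate.
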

{%
\begin{remark}\label{rem:jeffs_result}
We state this result, since it provides the first continuum limit of the infinity Laplace equation \labelcref{eq:GIL} on general graphs. Solutions of this problem constitute a special subclass of minimizers in the Lipschitz learning task \labelcref{eq:LL}. To show that the limit of solutions of \labelcref{eq:GIL} solve the continuum PDE \labelcref{eq:infLapCont} the author in \cite{Cald19} utilizes a consistency argument which requires smoothness of the kernel $\eta$ and the relatively strict scaling condition \labelcref{eq:jeffs_scaling}. 
In contrast, our results consider the general minimization problem \labelcref{eq:LL}, which allows us to work with the weakest scaling condition possible \labelcref{eq:scaling}---namely that the graph is asymptotically connected---and much weaker conditions on the kernel $\eta$. 
\end{remark}}
{
\begin{remark}[Convergence Rates]
Note that \cite{Cald19} did not establish rates of convergence for solutions of the graph infinity Laplacian equation \labelcref{eq:GIL} and neither do we for general minimizers of \labelcref{eq:LL}.
Indeed, $\Gamma$-convergence is not a good tool for proving quantitative rates since it is a very indirect notion of convergence (see also \cite{GarcSlep16,Slep19} which do not establish convergence rates either).
At the same time $\Gamma$-convergence typically allows for much less restrictive conditions on the graph scaling than PDE techniques, cf.~\cref{rem:jeffs_result}.
Still, in the recent work \cite{bungert2021uniform} we successfully used comparison principle techniques to show rates of convergence for solutions of the graph infinity Laplacian equation \labelcref{eq:GIL} in a much more general setting than the one considered in \cite{Cald19}.
For showing this it suffices to use quantitative versions of the weak scaling assumption \labelcref{eq:scaling} and the domain regularity condition \labelcref{eq:cond_domain}.
\end{remark}
}
{%
Since the solutions of the continuum PDE \labelcref{eq:infLapCont} only exist in the viscosity sense, the proof of this theorem involves viscosity techniques.} The arguments are therefore fundamentally different to the results for the case $p<\infty$ in \cite{GarcSlep16}.
This is our motivation to use $\Gamma$-convergence also for $p=\infty$.
\subsection{Outline}\label{sec:outline}
In \cref{sec:background} we give an overview of the concepts of $\Gamma$-convergence and the closest point projection.
In particular, we derive a transformation rule for supremal functionals which is the analogue of the well-known integral transformation rule 
{
for the change of variables.}

\Cref{sec:gamma-cvgc} is devoted to the proofs of our main results~\cref{thm:DCGamma} and \cref{thm:ConvMinLip}.
Similar to the strategy in \cite{GarcSlep15}, in \cref{sec:nonlocal-to-local} we first prove $\Gamma$-convergence of the non-local auxiliary functionals
\begin{align}\label{eq:NLFunc}
\func_{\scale}(u) := \frac{1}{\scale}~\esssup_{x,y\in\domain}
\left\{
\eta_{\scale}(\abs{x-y})\normR{u(x) - u(y)}
\right\},\quad\scale>0
\end{align}
which mimic the non-local structure of the discrete functionals $\funcd_n$ in \labelcref{eq:discrete_func}, to the continuum functional $\func$ in \eqref{eq:CFunc}.
Subsequently, in \cref{sec:DiscToCont} we use this result for proving our first main result, discrete to continuum $\Gamma$-convergence of the constrained discrete functionals $\funcd_{n,\mathrm{cons}}$.
In \cref{sec:compactness} we prove compactness of the discrete functionals which yields our second main result, the convergence of minimizers.

In \cref{sec:groundstates} we apply our results to a nonlinear eigenvalue problem and prove convergence of discrete ground states to continuum ground states.
Furthermore, generalizing the results in \cite{bung20}, we characterize the latter as geodesic distance function to the constraint set~$\consSet$.

\section{Mathematical Background}\label{sec:background}
This section reviews two important mathematical tools which we  
use in this paper. 
The first one is the concept of $\Gamma$-convergence, 
which allows to deduce convergence of minimizers 
from convergence of functionals.
The second concept, entirely unrelated to $\Gamma$-convergence, is the closest point projection which we employ in order to turn graph functions 
into continuum ones. 
Furthermore, we derive a supremal version of the transformation rule. 
\subsection{\texorpdfstring{$\Gamma$}{}-Convergence}
In this section we introduce a convergence concept that is frequently employed 
in the theory of variational problems, namely the so-called $\Gamma$-convergence. 
We refer to \cite{Brad02} for a detailed introduction. 
\begin{definition}[$\Gamma$-convergence]
Let $X$ be a metric space and let $F_n:X\rightarrow [-\infty,\infty]$ be a sequence of 
functionals. We say that $F_n$ $\Gamma$-converges to the functional 
$F:X\rightarrow [-\infty,\infty]$ if
\begin{enumerate}[label=(\roman*)]
\item \textbf{(liminf inequality)} for every sequence $\seq{x}{n}{X}$ converging to 
$x\in X$ we have that
\begin{align*}
\liminf_{n\rightarrow\infty} F_n(x_n) \geq F(x);
\end{align*}
\item\textbf{(limsup inequality)} for every $x\in X$ there exists a sequence 
$\seq{x}{n}{X}$ converging to $x$ and 
\begin{align*}
\limsup_{n\rightarrow\infty} F_n(x_n)\leq F(x).
\end{align*}
\end{enumerate}
\end{definition}
The notion of $\Gamma$-convergence is especially useful, since it implies 
the convergence of minimizers under additional compactness assumptions. 
For convenience we prove the respective result below, the proof is an 
adaption of a similar result in \cite[Thm. 1.21]{Brad02}.
\begin{lemma}[Convergence of Minimizers]\label{lem:ConvMin}
Let $X$ be a metric space and $F_n:X\rightarrow [0,\infty]$ a sequence of 
functionals $\Gamma$-converging to $F\rightarrow X:[0,\infty]$ which is not 
identically $+\infty$. 
If there exists a relatively compact sequence $(x_n)_{n\in\N}$ such that 
\begin{align*}
\lim_{n\rightarrow\infty} \left(F_n(x_n) - \inf_{x\in X} F_n(x)\right) = 0, 
\end{align*}
then we have that 
\begin{align*}
\lim_{n\rightarrow\infty} \inf_{x\in X} F_n(x) = \min_{x\in X} F(x)
\end{align*}
and any cluster point of $(x_n)_{n\in\N}$ is a minimizer of $F$.
\end{lemma}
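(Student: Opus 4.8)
The plan is to run the standard fundamental theorem of $\Gamma$-convergence, splitting the claim into an upper bound and a lower bound for $\inf_X F_n$ and then squeezing. Write $m:=\inf_{x\in X}F(x)$, which is finite because $F$ takes values in $[0,\infty]$ and is not identically $+\infty$. I expect the upper bound to follow purely from the limsup inequality, the lower bound to require the liminf inequality together with the relative compactness of $(x_n)$, and the almost-minimality hypothesis $F_n(x_n)-\inf_X F_n\to 0$ to be the glue that lets me replace $F_n(x_n)$ by $\inf_X F_n$ inside a $\liminf$.

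First I would establish $\limsup_{n\to\infty}\inf_X F_n\le m$. Fix an arbitrary $x\in X$ and use the limsup inequality to obtain a recovery sequence $y_n\to x$ with $\limsup_{n\to\infty}F_n(y_n)\le F(x)$. Since $\inf_X F_n\le F_n(y_n)$ for every $n$, this already gives $\limsup_{n\to\infty}\inf_X F_n\le F(x)$, and taking the infimum over all $x\in X$ yields $\limsup_{n\to\infty}\inf_X F_n\le m$.

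Next I would establish the matching lower bound $\liminf_{n\to\infty}\inf_X F_n\ge m$. I would select a subsequence $(n_j)$ along which $\inf_X F_{n_j}$ converges to $\liminf_{n\to\infty}\inf_X F_n$, and then, using that $(x_n)$ is relatively compact, extract a further subsequence $(n_{j_l})$ with $x_{n_{j_l}}\to x^\ast$ for some $x^\ast\in X$. The liminf inequality gives $F(x^\ast)\le\liminf_{l\to\infty}F_{n_{j_l}}(x_{n_{j_l}})$. Because $F_{n}(x_{n})-\inf_X F_{n}\to 0$ by hypothesis, the quantities $F_{n_{j_l}}(x_{n_{j_l}})$ and $\inf_X F_{n_{j_l}}$ have the same limit inferior, so $\liminf_{l\to\infty}F_{n_{j_l}}(x_{n_{j_l}})=\lim_{l\to\infty}\inf_X F_{n_{j_l}}=\liminf_{n\to\infty}\inf_X F_n$. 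Hence $m\le F(x^\ast)\le\liminf_{n\to\infty}\inf_X F_n$. Chaining this with the upper bound gives $m\le\liminf_{n\to\infty}\inf_X F_n\le\limsup_{n\to\infty}\inf_X F_n\le m$, so $\lim_{n\to\infty}\inf_X F_n=m$; moreover the point $x^\ast$ satisfies $F(x^\ast)=m$, which shows the infimum defining $m$ is attained and justifies writing $\min_X F$. Finally, for an arbitrary cluster point $x$ of $(x_n)$ I would repeat the liminf-plus-compactness argument along a subsequence converging to $x$, obtaining $m\le F(x)\le\lim_{n\to\infty}\inf_X F_n=m$, whence $F(x)=m$ and $x$ is a minimizer of $F$.

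The only genuinely delicate step is the second paragraph's bookkeeping with the nested subsequences: I must be careful that the replacement of $F_n(x_n)$ by $\inf_X F_n$ inside the $\liminf$ is legitimate, which rests entirely on the almost-minimality hypothesis rather than on exact minimality. Everything else is an elementary manipulation of $\liminf$ and $\limsup$, so I do not anticipate any real obstacle beyond keeping the subsequence indices straight.
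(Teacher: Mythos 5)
Your proof is correct and follows essentially the same route as the paper's: the limsup inequality with recovery sequences for the upper bound on $\inf_{x\in X}F_n(x)$, and relative compactness plus the liminf inequality plus the almost-minimality hypothesis for the matching lower bound. The only step you gloss over is applying the liminf inequality along a subsequence $(x_{n_{j_l}})_l$ — strictly it is stated for full sequences, and the paper makes this rigorous by extending the subsequence to a full sequence $\tilde x_n$ equal to $x^\ast$ off the subsequence indices, a one-line fix you should include.
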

\begin{proof}
Using the $\Gamma$-convergence of $F_n$ for any $y\in X$ we can find a sequence 
$\seq{y}{n}{X}$ such that 
\begin{align*}
\limsup_{n\rightarrow\infty} F_n(x_n) =  
\limsup_{n\rightarrow\infty} \inf_{x\in X} F_n(x)\leq 
\limsup_{n\rightarrow\infty} F_n(y_n)\leq
F(y)
\end{align*}
and thus 
\begin{align}\label{eq:liminfproof}
\limsup_{n\rightarrow\infty} F_n(x_n) \leq \inf_{x\in X} F(x)<\infty,
\end{align}
where for the last inequality we use the fact that $F$ is not identically $+\infty$.
By assumption the sequence $(x_n)_{n\in\N}$ is 
relatively compact, therefore we can find an element $x\in X$ and a subsequence such that 
$x_{n_k}\rightarrow x$, 
for which the liminf inequality yields
\begin{align*}
F(x)\leq \liminf_{n\rightarrow\infty} F_n(\tilde x_n)\leq
\liminf_{k\rightarrow\infty} F_{n_k}(x_{n_k})\leq\limsup_{n\rightarrow\infty} F_n(x_{n}),
\end{align*}
where we employ the sequence
\begin{align*}
\tilde x_n :=
\begin{cases}
x_{n_k}&\text{if } n= n_k,\\
x&\text{else.}
\end{cases}
\end{align*}
Together with \labelcref{eq:liminfproof} we have that $x$ is a minimizer of $F$ and 
$\lim_{n\rightarrow\infty} F_n(x_n) = F(x)$. Since the above reasoning works for any subsequence 
converging to some element in $X$ we have that every cluster point is a minimizer. 
\end{proof}
A condition that ensures the existence of a relatively compact 
sequence of minimizers is the so-called \emph{compactness} property for functionals.
A sequence of functionals is called compact if for any sequence $\seq{x}{n}{X}$ the property 
\begin{align*}
\sup_{n\in\N} F_n(x_n) <\infty
\end{align*}
implies that $(x_n)_{n\in\N}$ is relatively compact. 
In \cref{sec:compactness} 
we show that {the constrained functionals $\funcd_{n,\mathrm{cons}}$} fulfill the compactness property. This strategy is 
standard in the context of continuum limits and has already been employed 
in \cite{GarcSlep15,GarcSlep16}.
%
\subsection{The Closest Point Projection}\label{sec:maptn}
In \cite{GarcSlep15,GarcSlep16} a map $T_n:\domain\rightarrow\domain_n$ 
is employed in order to transform integrals w.r.t. the empirical measure, 
defined as
\begin{equation}\label{eq:empmeasure}
\nu_n(A) := \frac{1}{\normR{\domain_n}} \sum_{x\in\domain_n} 
\delta_x(A),\quad A\in \Borel(\domain),
\end{equation}
into integrals w.r.t. a probability measure $\nu\in\MSet(\overline\domain)$. {Here, 
$\mathcal{B}(\domain)$ denotes the Borel $\sigma$-algebra and $\mathcal{P}(\overline{\domain})$ is the set of probability measures on $\domain$.}
One assumes the push-forward condition
\begin{align*}
\nu_n = T_{n\#}\nu = \nu\circ T_n^{-1},
\end{align*}
which yields the following transformation,
\begin{equation*}
\sum_{x\in\domain_n} \vecc u(x) = 
\int_{\domain} \vecc u(x)~\dd\nu_n(x) = \int_{\domain} \vecc u(T_n(x))~\dd\nu(x),
\end{equation*}
for a function $\vecc u:\domain_n\rightarrow\R$, see, for example, \cite{Boga07}. 
Informally speaking the push-forward condition 
manifests the intuition that the map $T_n$ has to preserve the weighting imposed 
by the empirical measure $\nu_n$.
However, the supremal functionals in our case only 
take into account whether a respective set has positive measure or is a null set. 
Therefore, the assumptions on the map $T_n$ can be weakened for an 
analogous transformation rule. In fact, we only need that the 
push-forward measure is equivalent to the original one.
\begin{lemma}\label{lem:suptrafo}
For two probability measures $\mu,\nu \in \MSet(\overline{\domain})$, a measurable map 
$T:\domain\rightarrow\domain$ which fulfills
\begin{enumerate}[label=\upshape(\roman*)]
\item $\nu<<\push{T}{\mu}$,
\item $\push{T}{\mu}<<\nu$,
\end{enumerate}
and for a measurable function $f:\domain\rightarrow\R$ we have that
\begin{align*}
\nu\operatorname{-}\esssup_{x\in\domain} f(x) = \mu\operatorname{-}\esssup_{y\in\domain} f(T(y)).
\end{align*}
\end{lemma}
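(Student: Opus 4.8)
The plan is to exploit that the $\nu$-essential supremum of a measurable function is determined by the measure $\nu$ only through its collection of null sets. Concretely, I would work with the characterisation
\[
\nu\operatorname{-}\esssup_{x\in\domain} f(x) = \inf\left\{t\in\R : \nu(\{x\in\domain : f(x)>t\}) = 0\right\},
\]
with the convention $\inf\emptyset = +\infty$, and analogously for $\push{T}{\mu}$ and for $\mu$ applied to $f\circ T$. The argument then splits into two independent observations: first, that the measures $\nu$ and $\push{T}{\mu}$ share the same null sets, and second, a change-of-variables identity for the essential supremum under the push-forward.

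For the first step, I would observe that hypotheses (i) and (ii) say precisely that $\nu$ and $\push{T}{\mu}$ are mutually absolutely continuous. Hence a set $A\in\Borel(\domain)$ satisfies $\nu(A)=0$ if and only if $(\push{T}{\mu})(A)=0$: if $\nu(A)=0$ then (ii) forces $(\push{T}{\mu})(A)=0$, and conversely (i) gives $\nu(A)=0$ whenever $(\push{T}{\mu})(A)=0$. Applying this to the superlevel sets $A=\{f>t\}$ shows that the two threshold sets $\{t : \nu(\{f>t\})=0\}$ and $\{t : (\push{T}{\mu})(\{f>t\})=0\}$ coincide, so that taking infima yields $\nu\operatorname{-}\esssup_x f(x) = (\push{T}{\mu})\operatorname{-}\esssup_x f(x)$.

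For the second step, I would use the defining property of the push-forward, namely $(\push{T}{\mu})(A)=\mu(T^{-1}(A))$ for measurable $A$, together with the set identity $T^{-1}(\{f>t\}) = \{y\in\domain : f(T(y))>t\}$. Since $T$ is measurable and $f$ is measurable, the composition $f\circ T$ is measurable and these superlevel sets are Borel, so $(\push{T}{\mu})(\{f>t\})=0$ if and only if $\mu(\{f\circ T>t\})=0$. Taking the infimum over such $t$ gives $(\push{T}{\mu})\operatorname{-}\esssup_x f(x) = \mu\operatorname{-}\esssup_y f(T(y))$, and combining this with the first step establishes the claim.

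I do not expect a genuine obstacle here, as each step is elementary measure theory; the only points requiring care are the bookkeeping of the $\inf\emptyset=+\infty$ convention, so that the statement also covers the case where $f$ is $\nu$-essentially unbounded, and the measurability of $f\circ T$, which is immediate from the measurability assumptions on $T$ and $f$.
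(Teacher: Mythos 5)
Your proposal is correct, and it takes a slightly different route from the paper. The paper proves the two inequalities directly from the definition $\nu\operatorname{-}\esssup f=\inf_{\nu(A)=0}\sup_{\domain\setminus A}f$: for one direction it passes from a $\nu$-null set $A$ to the $\mu$-null set $T^{-1}(A)$ using (ii), and for the converse it passes from a $\mu$-null set $B$ to the forward image $T(\domain\setminus B)$ and shows via (i) that its complement is $\nu$-null. You instead insert the intermediate quantity $(\push{T}{\mu})\operatorname{-}\esssup f$ and split the claim into two independent identities: equivalent measures have the same essential supremum (this is where (i) and (ii) enter), and the push-forward change of variables $(\push{T}{\mu})(\{f>t\})=\mu(\{f\circ T>t\})$, which holds for any measurable $T$ with no hypotheses at all. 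Your decomposition is cleaner and arguably more robust: it works exclusively with preimages $T^{-1}(\{f>t\})$, which are measurable by assumption, whereas the paper's second inequality manipulates the forward image $T(\domain\setminus B)$ and tacitly assumes it is measurable (so that $\nu$ and $\push{T}{\mu}$ can be evaluated on its complement) --- harmless in the intended application where $T=\vor_n$ has finite range, but a genuine subtlety for a general measurable map. The superlevel-set characterisation of the essential supremum you invoke is equivalent to the paper's infimum-over-null-sets formulation, so nothing is lost; the only thing the paper's argument buys in exchange is that it never names the push-forward essential supremum as a separate object.
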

\begin{remark}{}{}
{
In the case $\nu=\nu_n$ we observe that assumption (i) is equivalent to 
\begin{align}\label{eq:pushgreat}
\mu(T^{-1}(x))>0
\end{align} 
for all $x\in\domain_n$.} 
Furthermore, assumption (ii) is a generalization 
of the property that $T(\domain)\subset\domain_n$. If (i) and (ii) are fulfilled we call 
the measures $\nu$ and $\push{T}{\mu}$ \emph{equivalent}.
{
Additionally, the statement still holds true for a finite measure $\mu$ and a general measure $\nu$. However, for our application it suffices to consider probability measures. 
}
\end{remark}
\begin{proof}
{
First we consider a set $A\in\Borel(\domain)$ such that $\nu(A)=0$. For this we have that
\begin{align*}
\sup_{x\in\domain\setminus A} f(x) \geq \sup_{y\in T^{-1}(\domain\setminus A)} f(T(y)) =
\sup_{y\in \domain\setminus T^{-1}(A)} f(T(y))=(\#)
\end{align*}
and since $\nu(A)=0$ we can use (ii) to infer that $\mu(T^{-1}(A))=0$. This implies that 
\begin{align*}
(\#)\geq \inf_{\mu(B)=0} \sup_{y\in\domain\setminus B} f(T(y)) = 
\mu\operatorname{-}\esssup_{y\in\domain} f(T(y)).
\end{align*}
}
The null set $A$ was arbitrary and thus taking the infimum over all {$\nu$-null} sets we obtain
\begin{equation*}
\nu\operatorname{-}\esssup_{x\in\domain} f(x) \geq \mu\operatorname{-}\esssup_{x\in\domain} f(T(y)).
\end{equation*}
On the other hand take $B\in\Borel(\domain)$ such that $\mu(B)=0$ then 
\begin{align*}
\sup_{y\in\domain\setminus B} f(T(y))= \sup_{x\in T(\domain\setminus B)} f(x)
\end{align*}
and since $T^{-1}(T(\domain\setminus B))\supset \domain\setminus B$ we have that
\begin{align*}
1\geq&\mu(T^{-1}(T(\domain\setminus B)))\geq\mu(\domain\setminus B) = 1\\
\Rightarrow &\mu(\domain\setminus T^{-1}(T(\domain\setminus B)))=0\\
\Rightarrow &\nu(\domain\setminus(T(\domain\setminus B)))=0.
\end{align*}
This implies that
\begin{align*}
\sup_{x\in T(\domain\setminus B)} f(x)\geq 
\inf_{\nu(A)=0}\sup_{x\in \domain\setminus A} f(x) = 
\nu\operatorname{-}\esssup_{x\in\domain} f(x).
\end{align*}
Taking the infimum overall $\mu$-null sets completes the proof.
\end{proof}
An important type of mapping $T_n$ in our context is the so-called closest point 
projection.
\begin{definition}[Closest Point Projection]
For a finite set of points $\domain_n\subset\domain$ a map  
$\vor_n:\domain\rightarrow\domain_n$ is called \emph{closest point projection} if 
\begin{align*}
\vor_n(x)\in\argmin_{y\in\domain_n}\normR{x-y}
\end{align*}
for each $x\in\domain$.
\end{definition}
\begin{remark}{}{}
Recalling the standard definition of a Voronoi tessellation (see, e.g., \cite{Knabner03}) 
one notices that the control volume associated to the vertex $x_i\in\domain_n$ is given by $\interior{\vor_n^{-1}(x_i)}$.
\end{remark}
The use of a closest point projection is very natural for $\Lp{\infty}$-type scenarios and has for example already been employed in \cite{Cald19} for a similar problem. 
In particular we can see that
\begin{align*}
\lambda^{\dims}(\vor_n^{-1}(x_i)) >0
\end{align*}
for every vertex $x_i\in\domain_n$ and thus $\nu_n<<\vor_{n\#}\lambda^{\dims}$, where 
$\lambda^\dims$ denotes the $\dims$-dimensional Lebesgue measure. 
The second condition $\vor_{n\#}\lambda^{\dims}<<\nu_n$ follows directly from 
the definition of the map $\vor_n$ {and thus the conditions for \cref{lem:suptrafo} are fulfilled}. %
{
In fact, for each function $u\in\Lp{\infty}(\domain)$ such that $u = \vecc u \circ p_n$ for some $\vecc u:\domain_n\to\R$ we can employ
\cref{lem:suptrafo} to reformulate the extension \labelcref{eq:TrafoDisc} of the discrete functional as follows, 
\begin{align}
    \notag
    \funcd_n(u) &= \funcd_n(\vecc u)\\
    \notag
    &= \frac{1}{\scale_n}\max_{x,y\in\domain_n} 
    \eta_{\scale_n}(\abs{x-y}) \abs{\vecc u(x)-\vecc u(y)}  \\
    \notag
    &= \frac{1}{\scale_n}
    \nu_n\operatorname{-}\esssup_{x,y\in\domain}
    \eta_{\scale_n}(\abs{x-y}) \abs{\vecc u(x)-\vecc u(y)}  \\
    \label{eq:trafo_discrete_func}
    &=\frac{1}{\scale_n}\esssup_{x,y\in\domain} 
    \eta_{\scale_n}(\abs{\vor_n(x)-\vor_n(y)}) \abs{u(x)-u(y)}.
\end{align}}
Note that the {weights} consider the distance between the nearest vertices to $x$ and $y$, respectively, and not the Euclidean distance between $x$ and $y$. This observation is important for 
the estimate in \cref{sec:DiscToCont}.
\section{\texorpdfstring{$\Gamma$}{}-Convergence of Lipschitz Functionals}\label{sec:gamma-cvgc}
\subsection{Non-Local to Local Convergence}\label{sec:nonlocal-to-local}
In this section we show the $\Gamma$-convergence of the non-local functionals 
\labelcref{eq:NLFunc} to the continuum functional defined in \labelcref{eq:CFunc} 
with respect to the $\Lp{\infty}$ topology. 
We first prove the liminf inequality.
\begin{lemma}[liminf inequality]\label{lem:NLLliminf}
Let $\domain\subset\Rd$ be an open domain and let the kernel fulfil 
\labelcref{en:K1}-\labelcref{en:K4}, 
then for a null sequence $\seq{s}{n}{(0,\infty)}$ we have 
\begin{align}
\liminf_{n\rightarrow\infty}\func_{\scale_n}(u_n) \geq
\sigma_{\eta}~\func(u)
\end{align}
for every sequence $\seq{u}{n}{\Lp{\infty}(\Omega)}$ converging 
to $u\in \Lp{\infty}(\Omega)$ in $\Lp{\infty}(\Omega)$.
\end{lemma}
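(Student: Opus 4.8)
I want to show that for any sequence $u_n \to u$ in $\Lp{\infty}(\domain)$, the liminf inequality $\liminf_n \func_{\scale_n}(u_n) \geq \sigma_\eta \func(u)$ holds. The strategy is to reduce to the case where the liminf is finite and where $u \in W^{1,\infty}(\domain)$, then to extract local information about the gradient from the nonlocal functional. First I would discard the trivial case: if $\liminf_n \func_{\scale_n}(u_n) = +\infty$ there is nothing to prove, so I pass to a subsequence (not relabeled) along which the liminf is attained as a finite limit $L < \infty$. The key is then a two-step argument: (1) show that finiteness of $L$ forces $u \in W^{1,\infty}(\domain)$, and (2) show that $L \geq \sigma_\eta \esssup_x |\nabla u(x)|$.

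**Extracting the gradient bound.**
The main work is to see how $\sigma_\eta$ enters. Recall $\func_{\scale_n}(u_n) = \frac{1}{\scale_n}\esssup_{x,y}\{\eta_{\scale_n}(|x-y|)|u_n(x)-u_n(y)|\}$. Fix a direction and a small displacement: for a.e.\ $x$ and any fixed $z \in \R^\dims$ with $|z| \leq R_\eta$, setting $y = x + \scale_n z$ (for points staying in $\domain$), the functional controls $\frac{1}{\scale_n}\eta(|z|)|u_n(x) - u_n(x+\scale_n z)|$. So for every such $z$,
\begin{align*}
\func_{\scale_n}(u_n) \geq \eta(|z|)\,\esssup_{x}\frac{|u_n(x+\scale_n z)-u_n(x)|}{\scale_n}.
\end{align*}
Since $u_n \to u$ uniformly, the difference quotient of $u_n$ converges (in a suitable weak sense) to the directional behaviour of $u$. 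Passing $n\to\infty$ with $z$ fixed, boundedness of $\frac{1}{\scale_n}|u(x+\scale_n z)-u(x)|$ uniformly in $n$ along the subsequence shows $u$ has bounded difference quotients in every direction, hence $u \in W^{1,\infty}(\domain)$ (locally, via the standard characterization that $\Lp{\infty}$-bounded difference quotients imply a weak gradient in $\Lp{\infty}$). Then for a.e.\ $x$ one recovers $\eta(|z|)|\langle \nabla u(x), z\rangle| \leq L$ in the limit. Optimizing over the direction of $z$ and its length gives $|\nabla u(x)|\,\esssup_{t}\{\eta(t)\,t\} \leq L$, i.e.\ $\sigma_\eta |\nabla u(x)| \leq L$ for a.e.\ $x$, and taking the essential supremum over $x$ yields $\sigma_\eta \func(u) \leq L$.

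**The main obstacle and technical care.**
The hard part is making the exchange of limits rigorous: the difference quotient $\frac{1}{\scale_n}|u_n(x+\scale_n z) - u_n(x)|$ involves both $n\to\infty$ and the vanishing scale $\scale_n$, and I cannot simply push the uniform limit inside an essential supremum that is itself being rescaled. I expect to handle this by testing against smooth compactly supported functions: integrate the rescaled difference quotient against $\varphi \in C_c^\infty(\domain)$, use the uniform convergence $u_n \to u$ to replace $u_n$ by $u$ with error $o(1)$, recognize the resulting expression as a discretized directional derivative converging to $\int \partial_z u\,\varphi$ (after integration by parts on the $u$ side, which needs the a priori $W^{1,\infty}$ bound), and thereby transfer the pointwise bound to $\nabla u$ in the distributional sense. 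The subtlety that $\esssup_t\{\eta(t)\,t\}$ rather than $\eta(0)$ appears—because for fixed $z$ the weight $\eta(|z|)$ multiplies a difference quotient over distance $\scale_n|z|$, so the effective factor is $\eta(|z|)|z|$—must be tracked carefully, and the kernel assumptions \labelcref{en:K1}--\labelcref{en:K4} (positivity and continuity at $0$, monotonicity, compact support) guarantee that $\sigma_\eta = \esssup_{t}\{\eta(t)\,t\}$ is finite, positive, and actually realized in the optimization over $z$.
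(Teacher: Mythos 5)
Your proposal is correct and follows essentially the same route as the paper: bound the functional from below by a directional difference quotient at scale $\scale_n$, identify its limit as $\nabla u\cdot h$ by testing against $C_c^\infty$ functions and shifting the quotient onto the test function (which also yields $u\in W^{1,\infty}$), use weak-$\ast$ lower semicontinuity of the $\Lp{\infty}$-norm, and finally optimize over the direction and length of the displacement to produce $\sigma_\eta=\esssup_t\{\eta(t)\,t\}$. The only cosmetic difference is that the paper obtains $u\in W^{1,\infty}$ as a byproduct of the weak-$\ast$ identification rather than invoking the difference-quotient characterization first.
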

\begin{proof}
We assume w.l.o.g. that 
\begin{equation}\label{eq:energybound}
\liminf_{n\rightarrow\infty}\func_{\scale_n}(u_n) < \infty.
\end{equation}
We choose a vector $h \in \Rd$ and estimate the supremum over $x,y\in\domain$ by a 
supremum over a difference quotient, namely
\begin{align*}
\func_{\scale_n}(u_n) &=
\esssup_{x,y\in\domain} \eta_{\scale_n}(\abs{x-y})~
\frac{\normR{u_n(x) - u_n(y)}}{\scale_n}
\\&\geq
\eta(\abs{h})~\esssup_{x\in\domain}
\frac{\normR{u_n(x) - u_n(x + \scale_n h)}}{\scale_n}~\ind_\domain(x+\scale_n h).
\end{align*}
In the above transformation we ensured to not enlarge the supremum by multiplying by the 
indicator function. Considering the function
\begin{align*}
v_n^h(x):= \frac{u_n(x) - u_n(x + \scale_n h)}{\scale_n}~\ind_\domain(x+\scale_n h)
\end{align*}
for $\eta(\abs{h})>0$ we have that 
\begin{equation*}
\liminf_{n\rightarrow\infty}\norm{v_n^h}_{\Lp{\infty}}\leq C
\end{equation*}
{which follows directly form \labelcref{eq:energybound}.}
Thus, by the sequential Banach--Alaoglu theorem, the sequence $(v_n^h)_{n\in\N}$ possesses convergent subsequences.
For any such subsequence $(v_{n_k}^h)_{k\in\N}$ there exists $v^h\in\Lp{\infty}$ such that
\begin{equation*}
v_{n_k}^h \rightharpoonup^\ast v^h
\end{equation*}
in the \weaks topology of $\Lp{\infty}$, 
i.e., for every $w\in\Lp{1}(\domain)$ we have 
\begin{align}\label{eq:weaksconv}
\int_{\domain} v_{n_k}^h~w~\dd x \rightarrow \int_{\domain} v^h~w~\dd x.
\end{align}
We want to identify the function $v^h$, for which we use a smooth 
function $\phi\in C_c^\infty(\domain)$ as the test function in \labelcref{eq:weaksconv}.
We shift the difference quotient of $u_n$ to a quotient of $\phi$ and hope to 
obtain the directional derivative in the limit. 
Since $\supp(\phi)\subset\subset\domain$, we can choose $n_0$ large enough such that 
\begin{align*}
x+\scale_n h\in\domain 
\end{align*}
for all $n\geq n_0$ and for all $x\in\supp(\phi)$.
{
Therefore, we get
\begin{align*}
\int_{\domain} v_{n_k}^h(x)~\phi(x)~\dd x&=
\int_{\domain} \frac{u_n(x) - u_n(x + \scale_n h(x))}{\scale_n}~\phi(x)~\dd x\\&=
\int_{\domain} u_n(x)~
\frac{\phi(x) - \phi(x - \scale_n h)}{\scale_n} ~\dd x.
\end{align*}}
Furthermore, for $n\geq n_0$ we have
\begin{align*}
&\normR{u_n(x)~\frac{\phi(x) - \phi(x - \scale_n h)}{\scale_n} 
- u(x) \gradR{\phi}(x)\cdot (-h)}\\ \leq
&\norm{u_n-u}_{\Lp{\infty}}~\normR{\frac{\phi(x) - \phi(x - \scale_n h)}{\scale_n}}\\&+
\norm{u}_{\Lp{\infty}}
\normR{\frac{\phi(x) - \phi(x - \scale_n h)}{\scale_n} -\gradR{\phi}(x)\cdot (-h)}\\ \leq
&\normR{h}\norm{u_n-u}_{\Lp{\infty}}\norm{\gradR{\phi}}_{\Lp{\infty}}\\&+
\norm{u}_{\Lp{\infty}}
\normR{\frac{\phi(x) - \phi(x - \scale_n h)+\gradR{\phi}(x)\cdot (s_nh)}{\scale_n}}
\xrightarrow{n\rightarrow\infty} 0,
\end{align*}
since $u_n$ converges to $u$ in $\Lp{\infty}$, 
$\phi\in C^\infty_c$ has a bounded gradient, and since the difference quotient converges 
to the directional derivative. Besides the pointwise convergence, we also easily obtain 
the boundedness of the function sequence, since
\begin{align*}
\normR{u_n(x)~\frac{\phi(x) - \phi(x - \scale_n h)}{\scale_n}}&\leq
\normR{h}\norm{u_n}_{\Lp{\infty}}~\norm{\gradR{\phi}}_{\Lp{\infty}}\\ &\leq
\normR{h}(\norm{u_n-u}_{\Lp{\infty}}+\norm{u}_{\Lp{\infty}})~\norm{\gradR{\phi}}_{\Lp{\infty}},
\end{align*}
which is uniformly bounded.
Thus, we can apply Lebesgue's convergence 
theorem to see that
\begin{align*}
\int_{\Rd} v^h~\phi~\dd x = 
\lim_{k\rightarrow\infty}\int_{\domain} v_{n_k}^h~\phi~\dd x =
-\int_{\Rd} u~(\gradR{\phi}\cdot h)~\dd x.
\end{align*}
In particular, we can choose $h_i=c\,e_i$, {where $e_i$ denotes the $i$-th unit vector and} the constant $c>0$ is small enough to ensure that $\eta(\abs{h_i})>0$, to obtain
\begin{align*}
\int_{\Rd} v^{h_i}~\phi~\dd x &= -c\int_{\Rd} u~\partial_i \phi~\dd x
\end{align*}
for all $i\in\{1,\ldots,\dims\}$ and all $\phi\in C^\infty_c(\domain)$.
This yields that $u\in W^{1,\infty}(\domain)$ and again for  {any} $h$ such that $\eta(\abs{h})>0$
\begin{align*}
\int_{\Rd} v^h~\phi~\dd x = \int_{\Rd} (\gradR{u}\cdot h)~\phi~\dd x.
\end{align*}
Using the density of $C^\infty_c(\domain)$ in $\Lp{1}(\domain)$ w.r.t. $\norm{\cdot}_{\Lp{1}}$ we obtain that 
\begin{align*}
\int_{\Rd} v^h~w~\dd x = \int_{\Rd} (\gradR{u}\cdot h)~w~\dd x
\end{align*}
for any $w\in\Lp{1}(\domain)$. 
Since the limit is independent of the subsequence $v_{n_k}^h$, we obtain 
that the \weaks convergence holds for the whole sequence, i.e., $v_{n}^h \rightharpoonup^\ast \nabla u\cdot h$ and thus together with the lower semi-continuity of $\norm{\cdot}_{\Lp{\infty}}$
\begin{align*}
\liminf_{n\rightarrow\infty}\func_{\scale_n}(u_n) \geq
\eta(\abs{h})~\liminf_{n\rightarrow\infty}\norm{v_{n}^h}_{\Lp{\infty}} \geq
\eta(\abs{h})~\norm{\gradR{u}\cdot h}_{\Lp{\infty}},
\end{align*}
{for every $h\in\R^d$ such that $\eta(\abs{h})>0$.
Since the inequality is trivially true for $\eta(\abs{h})=0$ we obtain
\begin{align*}
\liminf_{n\rightarrow\infty}\func_{\scale_n}(u_n) \geq 
\sup_{h\in\Rd}\eta(\abs{h})\norm{\gradR{u}\cdot h}_{\Lp{\infty}}.
\end{align*}
Considering $z\in\domain$ such that $\nabla u(z)$ exists and satisfies 
$\normR{\gradR{u}(z)}>0$, and taking $t\geq0$ we have that
\begin{align*}
\sup_{h\in\Rd}\eta(\abs{h})~\norm{\gradR{u}\cdot h}_{\Lp{\infty}} \geq
\eta(t)\,
\norm{\gradR{u}\cdot t\frac{\gradR{u}(z)}
{\normR{\gradR{u}(z)}}}_{\Lp{\infty}}\geq
\eta(t)\,t\,{\normR{\gradR{u}(z)}}.
\end{align*}
}%
{This inequality holds for every $t\geq 0$ 
and almost every $z\in\domain$, since it is again trivially fulfilled if $\nabla u(z)$ exists and is equal to zero. 
Hence, we obtain}
\begin{align*}
\liminf_{n\rightarrow\infty}\func_{\scale_n}(u_n) 
\geq 
\sigma_\eta~\func(u)
\end{align*}
which concludes the proof.
\end{proof}
We proceed by proving the limsup inequality. The most important fact here 
is that for $u\in W^{1,\infty}(\Omega)$ and for almost every $x,y\in\domain$ 
we have the inequality
\begin{align}\label{eq:lipeq}
\normR{u(x)-u(y)}\leq\norm{\gradR{u}}_{\Lp{\infty}}
d_{\domain}(x,y),
\end{align}
where $d_{\domain}(\cdot,\cdot)$ denotes the geodesic distance on $\domain$, 
see~\cite[P. 269]{Brez10}. 
Since the non-local functional $\func_s$ compares points $x,y\in\domain$ that are close together w.r.t. the Euclidean distance, we need to asymptotically bound the geodesic distance from above by the Euclidean distance. 
For this, we assume condition~\labelcref{eq:cond_domain}, which we repeat here for convenience:
\begin{align*}
\lim_{\delta\downarrow0}\sup\left\lbrace\frac{d_\domain(x,y)}{\normR{x-y}}\,:\,x,y\in\domain,\,\normR{x-y}<\delta\right\rbrace \leq 1.
\end{align*}
\begin{lemma}[limsup inequality]\label{NLCGammasup}
Let $\domain\subset\Rd$ be a domain satisfying~\labelcref{eq:cond_domain}, 
$\seq{s}{n}{(0,\infty)}$ a null sequence and let the kernel fulfil 
\labelcref{en:K1}-\labelcref{en:K4}, then for each 
$u\in \Lp{\infty}(\Omega)$ there exists a sequence 
$\seq{u}{n}{\Lp{\infty}(\Omega)}$ converging to $u$ 
strongly in $\Lp{\infty}(\Omega)$ such that
\begin{align}
\limsup_{n\rightarrow\infty}\func_{\scale_n}(u_n) \leq
\sigma_\eta~\func(u).
\end{align}
\end{lemma}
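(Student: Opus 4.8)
The plan is to take the \emph{constant} recovery sequence $u_n \equiv u$, so that the required $\Lp{\infty}$-convergence $u_n\to u$ holds trivially and the whole statement collapses to an asymptotic upper bound for the single functional $\func_{\scale_n}(u)$. If $u\notin W^{1,\infty}(\domain)$ then $\func(u)=\infty$ and the inequality is vacuous, so I would immediately reduce to the case $u\in W^{1,\infty}(\domain)$ and abbreviate $L:=\norm{\gradR{u}}_{\Lp{\infty}}$.

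The first substantive step is to feed the Lipschitz estimate \labelcref{eq:lipeq}, namely $\normR{u(x)-u(y)}\leq L\,d_\domain(x,y)$ for almost every pair $(x,y)$, into the definition \labelcref{eq:NLFunc} of the non-local functional. This gives
\[
\func_{\scale_n}(u) \leq \frac{L}{\scale_n}\,\esssup_{x,y\in\domain}\,\eta_{\scale_n}(\abs{x-y})\,d_\domain(x,y).
\]
Because of the support assumption \labelcref{en:K4}, the weight $\eta_{\scale_n}(\abs{x-y})$ vanishes unless $\abs{x-y}\leq\etaradius_\eta\,\scale_n$; hence only pairs at Euclidean distance of order $\scale_n$ contribute to the essential supremum, and as $n\to\infty$ these pairs are forced arbitrarily close together.

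This localization is what makes the domain condition \labelcref{eq:cond_domain} usable, and I expect the passage from geodesic to Euclidean distance to be the main obstacle---indeed it is the sole reason \labelcref{eq:cond_domain} is imposed. Fixing $\epsilon>0$, condition \labelcref{eq:cond_domain} yields $\delta>0$ with $d_\domain(x,y)\leq(1+\epsilon)\abs{x-y}$ whenever $\abs{x-y}<\delta$. For $n$ large enough that $\etaradius_\eta\,\scale_n<\delta$, every pair surviving in the supremum obeys this bound, so I may replace $d_\domain(x,y)$ by $(1+\epsilon)\abs{x-y}$ without loss.

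It then remains to recognize the constant $\sigma_\eta$. After the rescaling $t=\abs{x-y}/\scale_n$ the surviving integrand equals $(1+\epsilon)\,L\,\eta(t)\,t$, and bounding the essential supremum over admissible pairs by $\esssup_{t\geq0}\eta(t)\,t=\sigma_\eta$ produces $\func_{\scale_n}(u)\leq(1+\epsilon)\,\sigma_\eta\,L$ for all large $n$. Passing to $\limsup_{n\to\infty}$ and then letting $\epsilon\downarrow0$ gives $\limsup_{n\to\infty}\func_{\scale_n}(u)\leq\sigma_\eta\,L=\sigma_\eta\,\func(u)$, which is the claimed limsup inequality.
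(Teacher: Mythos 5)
Your proposal is correct and follows essentially the same route as the paper's proof: the constant recovery sequence $u_n\equiv u$, the Lipschitz bound \labelcref{eq:lipeq} with the geodesic distance, localization via the kernel support \labelcref{en:K4}, the $(1+\epsilon)$-comparison of geodesic and Euclidean distance from \labelcref{eq:cond_domain}, and the identification of $\sigma_\eta$. No gaps.
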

\begin{proof}
If $u\notin W^{1,\infty}$ the inequality holds trivially. 
If $u\in W^{1,\infty}$ we see that 
\begin{align*}
\func_{\scale_n}(u) &= \frac{1}{\scale_n}~\esssup_{x,y\in\domain}
\left\{
\eta_{\scale_n}(\abs{x-y})\normR{u(x) - u(y)}\right\}\\ &\leq
\frac{1}{\scale_n}~\esssup_{x,y\in\domain}
\left\{
\eta_{\scale_n}(\abs{x-y})~d_{\domain}(x,y)\right\}~
\norm{\gradR{u}}_{\Lp{\infty}} \\ &\leq
\frac{1}{\scale_n}~\esssup_{x,y\in\domain}
\left\{
\eta_{\scale_n}(\abs{x-y})~
\normR{x-y}\frac{d_\domain(x,y)}{\normR{x-y}}\right\}~\norm{\gradR{u}}_{\Lp{\infty}}.
\end{align*}
By \labelcref{eq:cond_domain}, for any $\varepsilon>0$ we can find $\delta>0$ such that 
\begin{align*}
    \frac{d_\domain(x,y)}{\normR{x-y}}\leq 1+\varepsilon,\quad\forall x,y\in\domain\,:\,\normR{x-y}<\delta.
\end{align*}
Choosing $n\in\N$ so large that $\etaradius_\eta\scale_n<\delta$, where $\etaradius_\eta$ is the radius of the kernel $\eta$, we obtain
\begin{align*}
\func_{\scale_n}(u) &\leq 
(1+\varepsilon)\esssup_{x,y\in\domain}
\left\{
\eta_{\scale_n} (\abs{x-y})\frac{\abs{x-y}}{s_n}
\right\}
~\norm{\gradR{u}}_{\Lp{\infty}} \\ &=
(1+\varepsilon)\esssup_{z\in\Rd}
\left\{
\eta(\abs{z})\normR{z}
\right\}
~\norm{\gradR{u}}_{\Lp{\infty}} \\ &=
(1+\varepsilon)~\sigma_\eta~\norm{\gradR{u}}_{\Lp{\infty}} \\ &=
(1+\varepsilon)~{\sigma_\eta}\,\func(u).
\end{align*}
Since, $\varepsilon>0$ was arbitrary, this shows that the constant sequence $u_n:=u$ fulfills the limsup inequality.
\end{proof}
The previous lemmata directly imply the $\Gamma$-convergence of the 
respective functionals, which we state below.
\begin{theorem}[Non-local to local $\Gamma$-convergence]\label{thm:NLCGamma}
Let $\domain\subset\Rd$ be a domain satisfying~\labelcref{eq:cond_domain} and let the kernel fulfil 
\labelcref{en:K1}-\labelcref{en:K4}, then for any null sequence $\seq{s}{n}{(0,\infty)}$ we have that
\begin{align}
\func_{\scale_n}\GConv \sigma_{\eta}~\func.
\end{align}
\end{theorem}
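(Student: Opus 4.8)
The plan is to observe that this theorem is nothing more than the assembly of the two preceding lemmas into the definition of $\Gamma$-convergence, so no new argument is required. Recall that $\Gamma$-convergence in the metric space $X=\Lp{\infty}(\domain)$ is defined by two conditions: the liminf inequality (i), demanding $\liminf_{n\to\infty}\func_{\scale_n}(u_n)\geq\sigma_\eta\func(u)$ for \emph{every} sequence $u_n\to u$ in $\Lp{\infty}(\domain)$, and the limsup inequality (ii), demanding the existence of at least \emph{one} recovery sequence $u_n\to u$ with $\limsup_{n\to\infty}\func_{\scale_n}(u_n)\leq\sigma_\eta\func(u)$. First I would check that the hypotheses of the theorem---namely that $\domain$ satisfies \labelcref{eq:cond_domain}, that the kernel fulfils \labelcref{en:K1}--\labelcref{en:K4}, and that $\seq{s}{n}{(0,\infty)}$ is a null sequence---are precisely the joint hypotheses under which the two lemmas were proven, so that both may be invoked verbatim.

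The liminf inequality (i) is established exactly by \cref{lem:NLLliminf}, where the target functional carries the constant $\sigma_\eta$. The limsup inequality (ii) is established by \cref{NLCGammasup}; in fact that lemma provides the constant recovery sequence $u_n:=u$, so condition (ii) holds with the simplest possible choice. Matching the two statements to the two parts of the definition therefore yields $\func_{\scale_n}\GConv\sigma_\eta\func$ immediately, and I would write the proof as a single sentence citing both lemmas.

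There is no genuine obstacle at the level of this theorem: all of the analytic content---the weak-$\ast$ compactness and difference-quotient identification in the liminf bound, and the control of the geodesic-to-Euclidean ratio via \labelcref{eq:cond_domain} in the limsup bound---has already been discharged in the lemmas. The only point worth a moment's care is purely bookkeeping: confirming that the constant $\sigma_\eta=\esssup_{t\geq0}\{\eta(t)\,t\}$ appearing on the right-hand side is the same in both lemmas and in the stated $\Gamma$-limit, which it is, so that the lower and upper bounds close with matching constants and identify $\sigma_\eta\func$ as the $\Gamma$-limit.
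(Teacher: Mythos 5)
Your proposal is correct and matches the paper exactly: the paper states that \cref{lem:NLLliminf} and \cref{NLCGammasup} "directly imply" the theorem and gives no further argument, which is precisely your one-sentence assembly of the liminf and limsup inequalities with the common constant $\sigma_\eta$. The only cosmetic difference is that \cref{lem:NLLliminf} actually needs fewer hypotheses (no condition~\labelcref{eq:cond_domain}), but the theorem's stronger assumptions of course still cover it.
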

\begin{remark}
Assumption~\labelcref{eq:cond_domain} is not satisfied for general non-convex domains, whereas 
\begin{align*}
    \normR{u(x)-u(y)}\leq\Lip(u)\normR{x-y}
\end{align*}
is.
Hence, one might consider replacing the functional $\func(u)=\sigma_\eta\norm{\gradR{u}}_{\Lp{\infty}}$ by $\func(u)=\sigma_\eta\Lip(u)$ which allows to prove the limsup inequality for arbitrary (in particular non-convex) domains.
However, as the following example shows, the liminf inequality is not true for this functional and one has
\begin{align*}
    \sigma_\eta\norm{\gradR{u}}_{\Lp{\infty}}\leq\liminf_{n\to\infty}\func_{\scale_n}(u_n)\leq\limsup_{n\to\infty}\func_{\scale_n}(u_n) \leq \sigma_\eta \Lip(u)
\end{align*}
where each inequality can be {strict}.
\end{remark}
\begin{example}
We consider the non-convex domain $\domain=\{x\in\R^2 \;:\, \max(|x_1|,|x_2|)\\{<} 1\}\setminus\left([0,1]\times [-1,0]\right)$ which does not satisfy~\labelcref{eq:cond_domain}, the function
\begin{align*}
    u(x)=
    \begin{cases}
    x_1^p \quad&\text{if } x_1,x_2 \geq 0,\\
    x_2^p \quad&\text{if } x_1,x_2 \leq 0,\\
    0 \quad&\text{else},
    \end{cases}
\end{align*}
for some power $p\geq 1$ and the kernel $\eta(t)=\chi_{[0,1]}(t)$ for $x\geq 0$.
Then one can compute that
\begin{align*}
    \norm{\gradR{u}}_{\Lp{\infty}} =p ,\quad
    \Lip(u) = \max(\sqrt{2},p), \quad
    \lim_{n\to\infty}\func_{\scale_n}(u) = 
    \begin{cases}
    \sqrt{2},\quad &p=1,\\
    p,\quad &p>1.
    \end{cases}
\end{align*}
The case $1<p<\sqrt{2}$ shows that the liminf inequality $\liminf_{n\to\infty}\func_{\scale_n}(u_n)\geq \Lip(u)$ is false, in general.
\end{example}
\begin{example}[The domain condition~\labelcref{eq:cond_domain}]
In this example we will study several scenarios where condition~\labelcref{eq:cond_domain} is satisfied. 
Let us first remark that if one fixes $x\in\domain$ then 
\begin{align*}
\lim_{\delta\downarrow0}\sup\left\lbrace\frac{d_\domain(x,y)}{\normR{x-y}}\,:\,y\in\domain,\,\normR{x-y}<{\delta}\right\rbrace \leq 1.
\end{align*}
is always true since $\domain$ is open.
Hence, \labelcref{eq:cond_domain} is in fact a condition on the boundary of the domain.
\begin{itemize}
    \item If $\domain$ is convex, it holds $d_\domain(x,y)=\normR{x-y}$ and hence \labelcref{eq:cond_domain} is trivially true.
    \item {If $\domain$ is locally $C^{1,1}$-diffeomorphic to a convex set, then \labelcref{eq:cond_domain} is satisfied as well. By this we mean that for all $x\in\domain$ there exists $\delta>0$, a convex set $C\subset\R^d$, and a diffeomorphism $\Phi:C\to\R^d$ with inverse $\Psi$ such that $B_\delta(x)\cap \domain = \Phi(C)$.}
    In particular, this includes domains with a sufficiently regular boundary.
    {To see this let $x\in\domain$ and $y\in B_\delta(x)\cap\domain=\Phi(C)$.}
    Because $C$ is convex, we can connect $\Psi(x)$ and $\Psi(y)$ with a straight line $\tau(t)=(1-t)\Psi(x)+t\Psi(y)$ for $t\in[0,1]$ and consider the curve {$\gamma(t)=\Phi(\tau(t))\subset\Phi(C)$ which lies in $B_\delta(x)\cap\domain$ since $\tau$ lies in $C$}.
    Hence,
    \begin{align*}
        d_\domain(x,y)
        &\leq \mathrm{len}(\gamma) \\
        &= \int_0^1\normR{\dot{\gamma}(t)}\d t \\
        &\leq \int_0^1 \normR{\gradR{\Phi}(\tau(t))}\normR{\dot{\tau}(t)}\d t \\
        &= \int_0^1 \normR{\gradR{\Phi}(\tau(t))}\normR{\Psi(y)-\Psi(x)}\d t \\
        &= \int_0^1 \normR{\gradR{\Phi}(\tau(t))} \normR{\gradR{\Psi}(x)}\normR{x-y} \d t + o(\normR{x-y}) \\
        &\leq \int_0^1 \normR{\gradR{\Phi}(\tau(t))} \normR{\gradR{\Psi}(\gamma(t))}\normR{x-y} \d t + \\
        &\qquad \int_0^1 \normR{\gradR{\Phi}(\tau(t))} \normR{\gradR{\Psi}(\gamma(t))-\gradR{\Psi}(x)}\normR{x-y} \d t 
        + o(\normR{x-y}) \\
        &\leq \normR{x-y} + c\normR{x-y} \int_0^1 \normR{\gradR{\Phi}(\tau(t))} \normR{\gamma(t)-x}\d t + o(\normR{x-y}) \\
        &= \normR{x-y} + c\normR{x-y} \int_0^1 \normR{\gradR{\Phi}(\tau(t))} \normR{\Phi(\tau(t))-\Phi(\Psi(x))}\d t + o(\normR{x-y}) \\
        &\leq \normR{x-y} + c\normR{x-y} \int_0^1 \normR{\tau(t)-\Psi(x)}\d t + o(\normR{x-y}) \\
        &= \normR{x-y} + c\normR{x-y} \normR{\Psi(y)-\Psi(x)} \int_0^1 t\d t + o(\normR{x-y}) \\
        &\leq \normR{x-y} + c\normR{x-y}^2 + o(\normR{x-y}),
    \end{align*}
    where we used Lipschitz continuity of $\Phi,\Psi$ and $\gradR{\Psi}$ {and the fact that $\Phi$ is a diffeomorphism which implies $\nabla\Phi(\tau(t))=(\nabla\Psi(\gamma(t)))^{-1}$.} 
    Note that the constant $c>0$ is changing with every inequality.
    Dividing by $\normR{x-y}$ and letting $\normR{x-y}\to 0$, we finally get~\labelcref{eq:cond_domain}.
\end{itemize}
\end{example}
%
%
\subsection{Discrete to Continuum Convergence}\label{sec:DiscToCont}
We now consider the $\Gamma$-convergence of the discrete functionals. While in the
previous section we employed an arbitrary null sequence 
$\seq{\scale}{n}{(0,\infty)}$ for the scaling, 
we are now limited to certain scaling sequences depending on the the sequence 
of {sets} $\domain_n$.
In particular, we have to control how fast the scaling 
$\scale_n$ tends to zero in comparison to how fast the points in $\domain_n$
fill out the domain $\domain$. 
The following simple example illustrates why we have to consider 
the relationship between $\scale_n$ and $\domain_n$.
\begin{example}\label{ex:fuldisc}
Let $\seq{x}{n}{\Rd}$ be an arbitrary sequence of points, then we can choose $\scale_n$ 
small enough such that $\eta_{\scale_n}(\abs{x-y}) =0$ for $x,y \in\domain_n$ and 
thus we have that $\funcd_n(u_n)=0$ for every $n\in\N$. 
In this situation the liminf inequality does not hold true. 
\end{example}
As illustrated in the example above, we need to take special care of points 
$x,y\in\domain_n$, where $\eta_{\scale_n}(\abs{x-y})=0$. 
Formulating this problem in terms of the 
map $\vor_n$ we have to consider the case where
\begin{align*}
\eta_{\scale_n}(\abs{\vor_n(x)-\vor_n(y)})=0.
\end{align*}
{Using that the kernel has radius $\etaradius_\eta<\infty$ it follows that $\normR{\vor_n(x)-\vor_n(y)}>\etaradius_\eta \scale_n$ and thus
\begin{align*}
\begin{split}
\normR{x-y}&= \normR{x-\vor_n(x)+\vor_n(x)-\vor_n(y)+\vor_n(y)-y}\\&\geq 
\normR{\vor_n(x)-\vor_n(y)}-2\norm{\id - \vor_n}_{\Lp{\infty}}\\ &>
\etaradius_\eta\scale_n-2\norm{\id - \vor_n}_{\Lp{\infty}}=:\etaradius_\eta\tilde{\scale}_n.
\end{split}
\end{align*}}
The idea now is to use this new scaling $\tilde{\scale}_n$ for the non-local functionals, 
where we have to impose that $\tilde{\scale}_n >0$ for all $n$ large enough. But more 
importantly we must ensure that the quotient $\tilde{\scale}_n/\scale_n$ converges to $1$, 
i.e., 
{%
\begin{align*}
\frac{\tilde\scale_n}{\scale_n}=
\frac{\scale_n-2\norm{\id - \vor_n}_{\Lp{\infty}}/\etaradius_\eta}{\scale_n}= 
1-\frac{2\norm{\id - \vor_n}_{\Lp{\infty}}/\etaradius_\eta}{\scale_n} \longrightarrow 1,
\end{align*}}
which is equivalent to the the fact that 
\begin{align*}
\frac{\norm{\id - \vor_n}_{\Lp{\infty}}}{\scale_n}\longrightarrow 0.
\end{align*}
This argumentation was first applied in \cite{GarcSlep15}, where instead of the map 
$\vor_n$ an optimal transport map $T_n$ was employed. 
For a closest point projection $\vor_n$ we know that 
\begin{align*}
\norm{\id - \vor_n}_{\Lp{\infty}} = \sup_{x\in\domain} \dist(x, \domain_n)=r_n.
\end{align*} 
which thus yields the scaling assumption \labelcref{eq:scaling}.
\begin{lemma}[liminf inequality]\label{lem:LIcons}
Let $\domain\subset\Rd$ be a domain, let the constraint sets satisfy~\labelcref{eq:labelset_cvgc}, and let the kernel fulfil 
\labelcref{en:K1}-\labelcref{en:K4}, then for any null sequence $\seq{\scale}{n}{(0,\infty)}$ which satisfies the scaling condition~\labelcref{eq:scaling} we have that
\begin{align*}
\liminf_{n\rightarrow\infty}\funcd_{n,\mathrm{cons}}(u_n) \geq \sigma_{\eta}~\func_\mathrm{cons}(u)
\end{align*}
for every sequence $\seq{u}{n}{\Lp{\infty}(\Omega)}$ converging 
to $u\in \Lp{\infty}(\Omega)$ in $\Lp{\infty}(\Omega)$.
\end{lemma}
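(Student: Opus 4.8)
The plan is to derive the gradient bound by a difference-quotient argument adapted from \cref{lem:NLLliminf}, and then to transfer the label constraint to the limit using the Hausdorff convergence~\eqref{eq:labelset_cvgc} of the sets $\consSet_n$. I would first discard the trivial case: assume $\liminf_{n}\funcd_{n,\mathrm{cons}}(u_n)<\infty$ and pass to a (non-relabelled) subsequence along which this $\liminf$ is realized as a limit and every term is finite. Finiteness of $\funcd_{n,\mathrm{cons}}(u_n)$ forces $u_n=\vecc u_n\circ\vor_n$ for a graph function $\vecc u_n:\domain_n\to\R$ with $\vecc u_n=g$ on $\consSet_n$, and the transformation~\eqref{eq:trafo_discrete_func} lets me write
\[
\funcd_{n,\mathrm{cons}}(u_n)=\funcd_n(u_n)=\frac{1}{\scale_n}\esssup_{x,y\in\domain}\eta_{\scale_n}\!\left(\normR{\vor_n(x)-\vor_n(y)}\right)\normR{u_n(x)-u_n(y)}.
\]

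For the gradient bound I follow the structure of \cref{lem:NLLliminf}, the only new feature being that the kernel now sees the projected distance $\normR{\vor_n(x)-\vor_n(y)}$ instead of $\normR{x-y}$. Fix $h\in\Rd$ with $\eta(\normR h)>0$ and with $\normR h$ a continuity point of the non-increasing kernel $\eta$. Restricting the essential supremum to pairs $y=x+\scale_n h$ and using $\normR{\vor_n(x)-\vor_n(y)}\le\normR{x-y}+2r_n=\scale_n\normR h+2r_n$ together with~\labelcref{en:K2}, I obtain
\[
\funcd_n(u_n)\ge\eta\!\left(\normR h+\tfrac{2r_n}{\scale_n}\right)\norm{v_n^h}_{\Lp\infty},\qquad v_n^h(x):=\frac{u_n(x)-u_n(x+\scale_n h)}{\scale_n}\,\ind_\domain(x+\scale_n h).
\]
Here the scaling condition~\eqref{eq:scaling}, i.e.\ $r_n/\scale_n\to0$ (equivalently $\tilde\scale_n/\scale_n\to1$ for the shrunken scale $\tilde\scale_n$ fixed above), ensures $\eta(\normR h+2r_n/\scale_n)\to\eta(\normR h)>0$ and rules out the degeneracy of \cref{ex:fuldisc}. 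Thus $(v_n^h)_n$ is bounded in $\Lp\infty$, and the weak-$\ast$ compactness and identification argument of \cref{lem:NLLliminf} carries over unchanged to give $u\in W^{1,\infty}(\domain)$ with $v_n^h\rightharpoonup^\ast\gradR u\cdot h$. Taking the $\liminf$ and then the supremum over admissible directions $h$ — the continuity points of $\eta$ have full measure, so they still produce the constant $\sigma_\eta$ — yields $\liminf_{n}\funcd_n(u_n)\ge\sigma_\eta\func(u)$.

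It remains to upgrade $\func(u)$ to $\func_{\mathrm{cons}}(u)$, i.e.\ to show $u=g$ on $\consSet$. Fix $z\in\consSet$; by~\eqref{eq:labelset_cvgc} there is $z_n\in\consSet_n$ with $\normR{z_n-z}=o(\scale_n)\to0$, and the constraint gives $\vecc u_n(z_n)=g(z_n)$, so $u_n\equiv g(z_n)$ on the Voronoi cell of $z_n$, all of whose points lie within $r_n+\normR{z_n-z}\to0$ of $z$. Choosing an a.e.\ such point $x$ and combining $\norm{u_n-u}_{\Lp\infty}\to0$, continuity of the locally Lipschitz representative of $u$, and continuity of $g$, I obtain $u(z)=g(z)$. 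Since $z\in\consSet$ was arbitrary, $u=g$ on $\consSet$, hence $\func_{\mathrm{cons}}(u)=\func(u)$ and the claimed inequality follows.

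The main obstacle is precisely the passage from the Euclidean distance to the projected distance $\normR{\vor_n(x)-\vor_n(y)}$ inside the kernel: the $2r_n$ discrepancy is harmless only through the scaling assumption~\eqref{eq:scaling}, and because $\eta$ is merely non-increasing one must evaluate it at its continuity points to keep $\eta(\normR h+2r_n/\scale_n)\to\eta(\normR h)$. For this reason I run the estimate through difference quotients and the already-established non-local $\liminf$ inequality of \cref{thm:NLCGamma}, rather than through a direct comparison of $\funcd_n$ with the non-local functional $\func_{\tilde\scale_n}$, which is delicate for pairs $x,y$ close to the diagonal.
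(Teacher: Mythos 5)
Your proof is correct and follows the same overall skeleton as the paper's (reduce to the finite-energy case, transfer the label constraint via \labelcref{eq:labelset_cvgc}, then establish $u\in W^{1,\infty}(\domain)$ with the gradient bound), but the central estimate is organized differently. The paper handles the discrepancy between $\normR{\vor_n(x)-\vor_n(y)}$ and $\normR{x-y}$ by a two-step reduction: it first replaces $\eta$ by a truncated kernel $\tilde\eta$ that is constant on $[0,t^*]$, with $t^*$ chosen via \labelcref{en:K1} so that $\sigma_{\tilde\eta}=\sigma_\eta$, for which the pointwise comparison $\eta_{\scale_n}(\abs{\vor_n(x)-\vor_n(y)})\geq\tilde\eta_{\tilde\scale_n}(\abs{x-y})$ holds at the shrunken scale $\tilde\scale_n=\scale_n-2r_n/t^*$; this yields $\funcd_n(u_n)\geq\frac{\tilde\scale_n}{\scale_n}\func_{\tilde\scale_n}(u_n)$ and lets it invoke \cref{lem:NLLliminf} as a black box. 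You instead re-run the difference-quotient machinery of \cref{lem:NLLliminf} directly on $\funcd_n$, absorbing the Voronoi error additively inside the kernel via $\eta(\normR{h}+2r_n/\scale_n)\to\eta(\normR{h})$ at continuity points of the monotone kernel; since the discontinuity set is countable, the final supremum still produces $\sigma_\eta$. Both routes use exactly \labelcref{eq:scaling} and \labelcref{en:K2}; yours avoids the kernel-truncation construction at the price of re-opening the proof of \cref{lem:NLLliminf} rather than citing it. Your constraint step also deviates: the paper bounds $\normR{u_n(x)-u_n(x_n)}$ by $\frac{\scale_n}{C}\funcd_{n,\mathrm{cons}}(u_n)$ using positivity of $\eta$ near zero, whereas you use that $u_n\equiv g(z_n)$ on the Voronoi cell of $z_n$ together with continuity of the Lipschitz representative of $u$. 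This works, but note it requires $u\in W^{1,\infty}(\domain)$ to be established beforehand (your ordering is consistent with this) and, for $z\in\consSet\cap\partial\domain$ on an irregular domain, the continuity of the representative up to the boundary is not automatic; the paper's variant, which never moves the evaluation point, is marginally more robust there, although the issue of evaluating $u$ on $\partial\domain$ is inherent to the formulation of $\func_{\mathrm{cons}}$ in either case.
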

\begin{proof}
W.l.o.g.~we assume that $\liminf_{n\rightarrow\infty}\funcd_{n,\mathrm{cons}}(u_n) <\infty$.
After possibly passing to a subsequence, we can furthermore assume that $u_n=g$ on $\consSet_n$.
We first show that the limit function $u$ satisfies $u=g$ on $\consSet$.

Since $\eta$ is continuous and positive in $0$, we know that there exists $0<t<\etaradius_\eta$ such that $\eta(s)>C$ for all $0<s<t$ where $C>0$.
Furthermore, using \labelcref{eq:labelset_cvgc} we infer that for all $x\in\consSet$ there exists $x_n\in\consSet_n$ with $|x-x_n|=o(\scale_n)$.
In particular, for $n$ large enough it holds $\normR{x-x_n}\leq s_nt$.
This allows us to estimate:
\begin{align*}
    \normR{u(x)-g(x)} 
    &\leq \normR{u(x)-u_n(x)} + \normR{u_n(x)-u_n(x_n)} \\
    &\qquad \qquad + 
    \normR{u_n(x_n)-g(x_n)} + \normR{g(x_n) - g(x)} \\
    &\leq \norm{u-u_n}_{\Lp{\infty}(\domain)} + 
    \frac{1}{C}\eta(\abs{x-x_n})\normR{u_n(x)-u_n({x_n})} \\
    &\qquad \qquad + 0 + \Lip(g)\normR{x-x_n} \\
    &\leq \norm{u-u_n}_{\Lp{\infty}(\domain)} + 
    \frac{\scale_n}{C} \funcd_{n,\mathrm{cons}}(u_n) + \Lip(g)\normR{x-x_n} \\
    &\leq \norm{u-u_n}_{\Lp{\infty}(\domain)} + 
    \frac{\scale_n}{C} \funcd_{n,\mathrm{cons}}(u_n) + \Lip(g) s_n\,{t}.
\end{align*}
Taking $n\to\infty$, using that $\funcd_{\scale_n}(u_n)$ is uniformly bounded and $\scale_n\to 0$, we obtain $u=g$ on $\consSet$.

The main idea for proving the liminf inequality here is to establish a discrete to non-local control estimate and then use~\cref{lem:NLLliminf}.

Since we assumed $\liminf_{n\rightarrow\infty}\funcd_{n,\mathrm{cons}}(u_n) <\infty$, we know that $u_n$ is piecewise constant for every $n\in\N$, in the sense of \cref{sec:setMain}, i.e., $u_n=\vecc u_n \circ p_n$ for some $\vecc u_n:\domain_n\to\domain$.
As seen in \labelcref{eq:trafo_discrete_func} we can express $\funcd_{n,\mathrm{cons}}$ as follows:
\begin{align*}
\funcd_{n,\mathrm{cons}}(u_n) 
=\frac{1}{\scale_n}\esssup_{x,y \in \domain}\eta_{\scale_n}(\abs{\vor_n(x)-\vor_n(y)}) \normR{u_n(x) - u_n(y)}=(\#).
\end{align*}
In order to apply \cref{lem:NLLliminf}, we need to transform the {weighting} that considers 
the distance between $\vor_n(x)$ and $\vor_n(y)$ into another one that measures the distance between $x$ and $y$.\\
\textbf{Case 1:} There exists $t>0$ such that $\eta$ is constant on {$[0,t]$}.\\
We employ the observation that whenever $\normR{\vor_n(x)-\vor_n(y)}>\scale_n\,t$, {
for the new scaling 
$\tilde{\scale}_n:=\scale_n-2r_n/t$ we have}
\begin{align*}
\frac{\normR{x-y}}{\tilde{\scale}_n} &\geq
\frac{\normR{\vor_n(x)-\vor_n(y)} -2 r_n}{\scale_n-2r_n/t}\\&= 
\frac{\normR{\vor_n(x)-\vor_n(y)}}{\scale_n}
\underbrace{\frac{1 -2 r_n/\normR{\vor_n(x)-\vor_n(y)}}{1-2r_n/(t\scale_n)}}_{>1}\\&>
\frac{\normR{\vor_n(x)-\vor_n(y)}}{\scale_n},
\end{align*}
{where we used that $r_n=\norm{\id-\vor_n}_{\Lp{\infty}}$.}
Since $\eta$ {is non-increasing \labelcref{en:K2} and $\eta$ is constant on $[0,t)$}, we get
\begin{align*}
\eta_{\scale_n}(\abs{\vor_n(x)-\vor_n(y)}) \geq
\eta_{\tilde{\scale}_n}(\abs{x-y})
\end{align*}
{for almost all $x,y\in\domain$.}
This allows us to further estimate
\begin{align*}
(\#) \geq \frac{1}{\scale_n}\esssup_{x,y \in \domain}
\eta_{\tilde{\scale}_n}(\abs{x-y}) \normR{u_n(x) - u_n(y)} = 
\frac{\tilde{\scale}_n}{\scale_n}~\func_{\tilde{\scale}_n}(u_n).
\end{align*}
Together with the assumption $r_n/s_n\rightarrow 0$ we obtain that $\tilde{\scale}_n>0$ 
for $n$ large enough and $\tilde{\scale}_n/\scale_n \rightarrow 1$ which finally justifies 
the application of \cref{lem:NLLliminf}, i.e.,
\begin{align*}
\liminf_{n\rightarrow\infty} \funcd_{n,\mathrm{cons}}(u_n)
\geq\liminf_{n\rightarrow\infty}\frac{\tilde{\scale}_n}{\scale_n}~
\func_{\tilde{\scale}_n}(u_n)\geq
\sigma_{\eta}~\func(u)=\sigma_\eta~\func_\mathrm{cons}(u).
\end{align*}
\textbf{Case 2:} We now assume the kernel to fulfil \labelcref{en:K1}-\labelcref{en:K4}. 
The strategy is to find a $t>0$ where one can cut off the kernel without changing the 
value $\sigma_\eta$. From the continuity at $t=0$ \labelcref{en:K1} we have that 
\begin{align*}
\lim_{t\rightarrow 0} \eta(t)~t =0
\end{align*}
and thus there exists a $t^*>0$ such that 
\begin{align*}
\sup_{t\in [0,t^*]}\eta(t)~t\leq \sigma_\eta.
\end{align*}
We define 
\begin{align*}
\tilde{\eta}(t)=
\begin{cases}
\eta(t)&\text{ for } {t>t^*},\\
\eta(t^*)&\text{ for }t\in [0,t^*],
\end{cases}
\end{align*}
for which we have $\sigma_{\tilde{\eta}}=\sigma_\eta$ and thus the first case applies. 
Namely, using that $\eta$ is {non-increasing \labelcref{en:K2} and hence $\eta\geq\tilde\eta$} we obtain
{%
\begin{align*}
\liminf_{n\rightarrow\infty} \funcd_{n,\mathrm{cons}}(u_n)
&\geq
\liminf_{n\rightarrow\infty}\left(
\frac{1}{\scale_n}\esssup_{x,y \in \domain}
\tilde{\eta}_{\scale_n}(\abs{\vor_n(x)-\vor_n(y)}) \normR{u_n(x) - u_n(y)}\right)
\\&\geq
\sigma_{\eta}~\func_\mathrm{cons}(u).
\end{align*}}
\end{proof}
We now consider the limsup inequality for the constrained functionals.
\begin{lemma}[limsup inequality]\label{lem:LScons}
Let $\domain\subset\Rd$ be a domain satisfying~\labelcref{eq:cond_domain} and let the kernel fulfil 
\labelcref{en:K1}-\labelcref{en:K4}, then for a null sequence $\seq{s}{n}{(0,\infty)}$ 
and a function $u\in \Lp{\infty}(\Omega)$ there exists a sequence  
$\seq{u}{n}{\Lp{\infty}(\Omega)}$ converging to $u\in \Lp{\infty}(\Omega)$ in 
$\Lp{\infty}(\Omega)$ such that
\begin{align*}
\limsup_{n\rightarrow\infty}
\funcd_{n,\mathrm{cons}}(u_n) \leq \sigma_\eta~\func_{\mathrm{cons}}(u).
\end{align*}
\end{lemma}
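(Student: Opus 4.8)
The plan is to reduce to the nontrivial case, build an explicit piecewise constant recovery sequence that is feasible for the discrete constraint, and then bound its energy by the non-local estimate from~\cref{NLCGammasup} plus a correction coming from the constraint mismatch that vanishes after division by $\scale_n$. If $u\notin W^{1,\infty}(\domain)$ or $u\neq g$ on $\consSet$, then $\func_{\mathrm{cons}}(u)=\infty$ and there is nothing to prove, so I assume $u\in W^{1,\infty}(\domain)$ with $u=g$ on $\consSet$ and fix its continuous representative on $\overline\domain$. I would then define $\vecc u_n\colon\domain_n\to\R$ by $\vecc u_n(x)=g(x)$ for $x\in\consSet_n$ and $\vecc u_n(x)=u(x)$ otherwise, and set $u_n:=\vecc u_n\circ\vor_n$. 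By construction $u_n$ is piecewise constant and satisfies $\vecc u_n=g$ on $\consSet_n$, hence $\funcd_{n,\mathrm{cons}}(u_n)=\funcd_n(\vecc u_n)$.

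The decisive quantity is the constraint mismatch $\delta_n:=\sup_{x\in\consSet_n}\normR{g(x)-u(x)}$. Using the Hausdorff convergence~\labelcref{eq:labelset_cvgc}, every $x\in\consSet_n$ admits some $\bar x\in\consSet$ with $\normR{x-\bar x}=o(\scale_n)$; since $u=g$ on $\consSet$, writing $g(x)-u(x)=(g(x)-g(\bar x))+(u(\bar x)-u(x))$ and combining the Lipschitz continuity of $g$ with the local Lipschitz bound for $u$ (which follows from~\labelcref{eq:lipeq} and the domain condition~\labelcref{eq:cond_domain}, with a constant $L>0$) yields $\delta_n=o(\scale_n)$. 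Because $\normR{\vecc u_n(z)-u(z)}\leq\delta_n$ for every $z\in\domain_n$, the same ingredients give $\norm{u_n-u}_{\Lp{\infty}}\leq\delta_n+L\,r_n\to 0$, so $u_n\to u$ strongly in $\Lp{\infty}(\domain)$ as required.

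For the energy I would exploit that $\normR{\vecc u_n(z)-u(z)}\le\delta_n$ for all $z\in\domain_n$, so the triangle inequality gives $\normR{\vecc u_n(x)-\vecc u_n(y)}\leq\normR{u(x)-u(y)}+2\delta_n$ for all $x,y\in\domain_n$, irrespective of which endpoints lie in $\consSet_n$. Inserting this into~\labelcref{eq:discrete_func} and splitting the maximum, the correction term is dominated by $\frac{2\delta_n}{\scale_n}\,\eta(0)\to 0$, using monotonicity~\labelcref{en:K2}. The main term is handled exactly as in~\cref{NLCGammasup}: for $x,y\in\domain_n$ with $\eta_{\scale_n}(\normR{x-y})>0$ one has $\normR{x-y}\leq\etaradius_\eta\scale_n$, so for $n$ large~\labelcref{eq:lipeq} together with~\labelcref{eq:cond_domain} bounds $\normR{u(x)-u(y)}$ by $(1+\varepsilon)\normR{x-y}\,\norm{\gradR{u}}_{\Lp{\infty}}$, while the definition of $\sigma_\eta$ and the monotonicity of $\eta$ give $\frac{1}{\scale_n}\eta_{\scale_n}(\normR{x-y})\normR{x-y}\leq\sigma_\eta$. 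This produces $\limsup_{n\to\infty}\funcd_n(\vecc u_n)\leq(1+\varepsilon)\sigma_\eta\norm{\gradR{u}}_{\Lp{\infty}}$, and letting $\varepsilon\downarrow 0$ gives the claim, since $\func_{\mathrm{cons}}(u)=\func(u)=\norm{\gradR{u}}_{\Lp{\infty}}$.

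The main obstacle is the interaction of the pointwise constraint with the discretization: the vertices of $\consSet_n$ are in general not contained in $\consSet$, so forcing $\vecc u_n=g$ there perturbs the natural discretization $u|_{\domain_n}$, and one must verify that this neither destroys feasibility nor inflates the energy. This is precisely where the quantitative rate $o(\scale_n)$ in~\labelcref{eq:labelset_cvgc} enters: it makes $\delta_n$ negligible compared to $\scale_n$, so that the perturbation disappears after the division by $\scale_n$ in the functional. A secondary technical point is controlling $u$ at vertices lying on $\partial\domain$, for which the continuous representative and the local equivalence of geodesic and Euclidean distance from~\labelcref{eq:cond_domain} suffice.
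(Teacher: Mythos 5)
Your proof is correct. You use the same recovery sequence as the paper ($\vecc u_n=g$ on $\consSet_n$, $\vecc u_n=u$ elsewhere, composed with $\vor_n$) and the same key ingredients (\labelcref{eq:lipeq}, the domain condition \labelcref{eq:cond_domain}, and the Hausdorff rate \labelcref{eq:labelset_cvgc}), but you organize the error differently. The paper splits the maximum over vertex pairs into three cases according to whether $x$ and $y$ lie in $\consSet_n$, and in the mixed and doubly-labeled cases introduces auxiliary points $\tilde x,\tilde y\in\consSet$ to absorb the constraint mismatch, tracking separate correction terms of the form $C\normR{y-\tilde y}/\scale_n$ in each case. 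You instead isolate a single scalar $\delta_n:=\sup_{x\in\consSet_n}\normR{g(x)-u(x)}=o(\scale_n)$, note that $\normR{\vecc u_n-u}\leq\delta_n$ on all of $\domain_n$, and fold every case into one triangle inequality $\normR{\vecc u_n(x)-\vecc u_n(y)}\leq\normR{u(x)-u(y)}+2\delta_n$, with the correction killed uniformly by $2\delta_n\eta(0)/\scale_n\to 0$ (using that $\eta(0)<\infty$ by \labelcref{en:K1} and monotonicity \labelcref{en:K2}); the main term is then exactly the unconstrained estimate of \cref{NLCGammasup}. Your version is shorter and makes more transparent exactly where the $o(\scale_n)$ rate in \labelcref{eq:labelset_cvgc} is needed, at the price of a marginally less sharp constant in the correction; both arguments share the same minor technicality of applying \labelcref{eq:lipeq} and \labelcref{eq:cond_domain} at points of $\overline{\domain}$ via the continuous representative, which you correctly flag.
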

\begin{proof}
If $\func_\mathrm{cons}(u)=\infty$ the inequality holds trivially. 
We thus consider $u\in W^{1,\infty}(\domain)$ such that $u(x)=g(x)$ for every $x\in\consSet$ and define a recovery sequence as follows: Let $\vecc u_n \in \Lp{\infty}(\nu_n)$ be defined by
\begin{align*}
    \vecc u_n(x) = 
    \begin{cases}
    u(x), \quad &x \in \domain_n \setminus \consSet_n, \\
    g(x), \quad &x \in \consSet_n,
    \end{cases}
\end{align*}
and define $u_n:=\vecc u_n \circ \vor_n$, where $p_n:\domain\to\domain_n$ denotes {a} closest point projection.
Then $u_n\in\Lp{\infty}(\domain)$ and by definition it holds 
\begin{align*}
\funcd_{n,\mathrm{cons}}(u_n)=\funcd_{n,\mathrm{cons}}(\vecc u_n) = \frac{1}{\scale_n} \max_{x,y\in\domain_n}\eta_{\scale_n}(\abs{x-y})\normR{\vecc u_n(x)-\vecc u_n(y)}.
\end{align*}
We have to distinguish three cases:\\
\textbf{Case 1:} Let $x,y\in\domain_n\setminus\consSet_n$, 
then we can compute, using \labelcref{eq:lipeq}
\begin{align*}
\abs{\vecc u_n(x)-\vecc u_n(y)}=
\normR{u(x)-u(y)}\leq 
d_\domain(x,y)~\norm{\gradR{u}}_{\Lp{\infty}}
\end{align*}
and therefore
\begin{align*}
\frac{1}{\scale_n}\eta_{\scale_n}(\abs{x-y})
\abs{\vecc u_n(x)-\vecc u_n(y)}
&\leq 
\underbrace{
\eta_{\scale_n}(\abs{x-y})~\frac{\normR{x-y}}{s_n}
}_{\leq \sigma_\eta}
\frac{d_\domain(x,y)}{\normR{x-y}}~
\norm{\gradR{u}}_{\Lp{\infty}}\\&\leq
\sigma_\eta
~\frac{d_\domain(x,y)}{\normR{x-y}}~
\norm{\gradR{u}}_{\Lp{\infty}}.
\end{align*}
\\
\textbf{Case 2:} Let $x\in\domain_n\setminus\consSet_n$ and $y\in\consSet_n$.
Then for every $\tilde y\in\consSet$ it holds, using~\labelcref{eq:lipeq}
\begin{align*}
\normR{\vecc u_n(x)-\vecc u_n(y)} 
&=  \normR{u(x)-g(y)} \\
&\leq  \normR{u(x)-u(\tilde y)} + 
\underbrace{\normR{u(\tilde y)-g(\tilde y)}}_{=0} + 
\normR{g(\tilde y)-g(y)}\\
&\leq \norm{\gradR{u}}_{\Lp{\infty}}\d_\domain(x,\tilde y) + 
\Lip(g)\normR{\tilde y-y} \\
&\leq\norm{\gradR{u}}_{\Lp{\infty}}d_\domain(x,y) + \norm{\gradR{u}}_{\Lp{\infty}}d_\domain(y,\tilde y) + 
\Lip(g) \normR{\tilde y-y} \\
&\leq \norm{\gradR{u}}_{\Lp{\infty}} d_\domain(x,y) +
\norm{\gradR{u}}_{\Lp{\infty}} \frac{d_\domain(y,\tilde y)}{\normR{y-\tilde y}} \normR{y-\tilde y}+ \Lip(g)\abs{y-\tilde y}.
\end{align*}
From this we have, using the same arguments as in the first case, 
that there is a $C>0$ such that
{
\begin{align*}
\frac{1}{\scale_n}\eta_{\scale_n}(\abs{x-y})
\abs{\vecc u_n(x)-\vecc u_n(y)}\leq
\sigma_\eta
~\frac{d_\domain(x,y)}{\normR{x-y}}~
\norm{\gradR{u}}_{\Lp{\infty}} + 
C~\frac{\normR{y-\tilde y}}{\scale_n}.
\end{align*}}
\\
\textbf{Case 3:} Let $x,y\in\consSet_n$, 
then for $\tilde x,\tilde y\in\consSet$ we have
\begin{align*}
\normR{\vecc u_n(x)-\vecc u_n(y)} 
&= \normR{g(x)-g(y)} \\
&= \normR{g(x)-u(\tilde x)} + \normR{u(\tilde x)-u(\tilde y)} + 
\normR{u(\tilde y)-g(y)}\\
&= \normR{g(x)-g(\tilde x)} + 
\normR{u(\tilde x)-u(\tilde y)} + \normR{g(\tilde y)-g(y)}\\
&\leq \Lip(g)\normR{x-\tilde x} + 
\norm{\gradR{u}}_{\Lp{\infty}} d_\domain(\tilde x,\tilde y) + 
\Lip(g)\normR{y-\tilde y}
\end{align*}
and therefore again
\begin{align*}
\frac{1}{\scale_n}\eta_{\scale_n}(\abs{x-y})
\abs{\vecc u_n(x)-\vecc u_n(y)}&\leq
\sigma_\eta
~\frac{d_\domain(x,y)}{\normR{x-y}}~
\norm{\gradR{u}}_{\Lp{\infty}} \\&+ 
\Lip(g)~\left(\frac{\normR{y-\tilde y}+\normR{x-\tilde x}}{\scale_n}\right).
\end{align*}
By \labelcref{eq:cond_domain} for every $\varepsilon>0$ there is $n_0\in\N$ sufficiently large such that for all $n\geq n_0$ it holds
\begin{align*} 
\sigma_\eta
~\frac{d_\domain(x,y)}{\normR{x-y}}~
\norm{\gradR{u}}_{\Lp{\infty}} \leq \sigma_\eta\norm{\gradR{u}}_{\Lp{\infty}} + 
\varepsilon/2,
\end{align*}
whenever $\abs{x-y}\leq \etaradius_\eta \scale_n$. 
Additionally, thanks to~\labelcref{eq:labelset_cvgc} and the compactness of $\consSet_n$ and~$\consSet$ for every $x\in\consSet_n$ we can choose $\tilde x\in\consSet$ such that 
\begin{align*}
\frac{\abs{x- \tilde x}}{\scale_n}\leq 
\frac{\varepsilon}{4 \max\{C,\Lip(g)\}}
\end{align*}
and analogously for $y$ and $\tilde y$. 
Combining the estimates from all three cases, we obtain
\begin{align*}
\funcd_{n,\mathrm{cons}}(u_n) 
&\leq \max\left\lbrace \sigma_\eta\norm{\gradR{u}}_{\Lp{\infty}} + \frac{\varepsilon}{2},\sigma_\eta\norm{\gradR{u}}_{\Lp{\infty}} + \frac{3\varepsilon}{4},\sigma_\eta\norm{\gradR{u}}_{\Lp{\infty}} + \varepsilon  \right\rbrace \\
&=\sigma_\eta~\norm{\nabla u}_{\Lp{\infty}} + \varepsilon
\end{align*}
for all $n\geq n_0$.
Finally, this yields
\begin{align*}
\limsup_{n\to\infty}\funcd_{n,\mathrm{cons}}(u_n) \leq \sigma_\eta~\func_\mathrm{cons}(u),
\end{align*}
as desired. 

For showing that $u_n\to u$ in $\Lp{\infty}(\Omega)$ one proceeds 
similarly:
If $p_n(x)\in\domain_n\setminus\consSet_n$ one has {thanks to \labelcref{eq:lipeq}
\begin{align*}
\normR{u(x)-u_n(x)}
&=\normR{u(x)-u(p_n(x))}\leq\norm{\gradR{u}}_{\Lp{\infty}}
d_\domain(x,p_n(x)) \\
&=\norm{\gradR{u}}_{\Lp{\infty}}\frac{d_\domain(x,p_n(x))}{\normR{x-p_n(x)}}\normR{x-p_n(x)} \to 0,\quad n\to\infty,
\end{align*}
where we also used \labelcref{eq:cond_domain}} and $\normR{x-p_n(x)}\leq r_n\to 0$.
In the case $p_n(x)\in\consSet_n$ {by \labelcref{eq:labelset_cvgc}} one again finds $\tilde x\in\consSet$ such that 
$\normR{p_n(x)-\tilde x}=o(\scale_n)$.
Then {by \labelcref{eq:lipeq,eq:cond_domain}} we have
\begin{align*}
\normR{u(x)-u_n(x)}&=\normR{u(x)-g(p_n(x))}\\
&\leq
\normR{u(x)-u(p_n(x))} + |u(p_n(x))-\underbrace{g(\tilde x)}_{=u(\tilde x)}|
+\normR{g(\tilde x)-g(p_n(x))}
\\&\leq
{
\norm{\gradR{u}}_{\Lp{\infty}}\frac{d_\domain(\vor_n(x),x)}{\normR{p_n(x)-x}}\normR{p_n(x)-x}} \\
&\qquad + \norm{\gradR{u}}_{\Lp{\infty}}\frac{d_\domain(p_n(x),\tilde x)}{\normR{p_n(x)-\tilde x}}\normR{p_n(x)-\tilde x} +\Lip(g) \normR{p_n(x)-\tilde x} \\
&\to 0
\end{align*}
since $\normR{p_n(x)-x}\leq r_n\to 0$ and 
$\normR{p_n(x)-\tilde x}=o(\scale_n)\to 0$.
Combining both cases proves $\norm{u-u_n}_{\Lp{\infty}}\to 0$ as $n\to\infty$.

\end{proof}
\begin{remark}{}{}
We note that the proof of the limsup inequality does not use any specific 
properties of the scaling, in fact even a sequence of disconnected graphs or the situation 
of \cref{ex:fuldisc} allows for such an inequality. 
\end{remark}
\begin{remark}[Relevance of the Hausdorff convergence]
The condition that the Hausdorff distance of $\consSet_n$ and $\consSet$ converges to zero as $n\to\infty$ (cf.~\labelcref{eq:labelset_cvgc}) implies both that $\consSet_n$ well approximates $\consSet$ and vice versa.
The first condition is only used in the proof of the liminf inequality \cref{lem:LIcons} whereas the second one only enters for the limsup inequality \cref{lem:LScons}.
Furthermore, the proof of the latter is drastically simplified if one assumes that $\consSet_n\subset\consSet$ for all $n\in\N$, which implies that the second term in the Hausdorff distance~\labelcref{eq:labelset_cvgc} equals zero.
{In this case, introducing the continuum points $\tilde x,\tilde y\in\consSet$ is not necessary and many estimates in the previous proof become trivial.}
\end{remark}
Combining \cref{lem:LIcons} and \cref{lem:LScons} we immediately obtain the $\Gamma$-convergence of the discrete functionals to those defined in the continuum, which is the statement of \cref{thm:DCGamma}.
\begin{remark}[Homogeneous boundary conditions]\label{rem:homogeneous_bdry}
In the case that $\consSet=\partial\domain$ and the constraints satisfy $g=0$ on $\consSet$ {any function with $\func_\mathrm{cons}(u)<\infty$ satisfies $u\in W^{1,\infty}_0(\domain)$. For this it is well-known that functions $u\in W^{1,\infty}_0(\domain)$ can be extended from} $\domain$ to $\Rd$ by zero without changing $\norm{\gradR{u}}_{\Lp{\infty}}$. 
In this case one can prove the limsup inequality \cref{lem:LScons} and hence also the $\Gamma$-convergence \cref{thm:DCGamma} for \emph{general open sets} $\domain$ without demanding \labelcref{eq:cond_domain} or even convexity.
{For this one simply utilizes the estimate 
\begin{align*}
    \abs{\vecc u_n(x)-\vecc u_n(y)}=\abs{u(x)-u(y)}\leq\norm{\nabla u}_{\Lp{\infty}}\abs{x-y}
\end{align*}
which is true if one extends $u$ by zero on $\R^d\setminus\domain$, multiplies with the kernel, and takes the supremum.}
\end{remark}

\section{Compactness}\label{sec:compactness}
We now want to make use of \cref{lem:ConvMin} in order to characterize the behaviour of 
minimizers of the discrete problems or more generally sequences of approximate minimizers, 
as described in the condition of the mentioned lemma.
The first result is a general 
characterization of relatively compact sets in $\Lp{\infty}$, the proof uses classical ideas 
from~\cite[Lem. IV.5.4]{Dunf60}.
{
\begin{lemma}\label{lem:compfirst}
Let $(\domain,\mu)$ be a finite measure space and $K\subset \Lp{\infty}(\domain;\mu)$ be a bounded set w.r.t. $\norm{\cdot}_{\Lp{\infty}(\domain;\mu)}$ such that 
for every $\varepsilon>0$ there exists a finite partition $\{V_i\}_{i=1}^n$ of $\domain$ into subsets $V_i$ with positive and finite measure such that
\begin{align}\label{eq:partinf}
\mu\operatorname{-}\esssup_{x,y\in V_i} \normR{u(x) - u(y)} < \varepsilon\ \forall u\in K, i=1,\ldots,n,
\end{align}
then $K$ is relatively compact. 
\end{lemma}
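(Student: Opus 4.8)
The plan is to prove relative compactness by showing that $K$ is totally bounded in $\Lp{\infty}(\domain;\mu)$. Since $\mu$ is finite, $\Lp{\infty}(\domain;\mu)$ is a Banach space, hence complete, and in a complete metric space a subset is relatively compact if and only if it is totally bounded. Thus it suffices to show that for every $\varepsilon>0$ there exist finitely many functions $w_1,\dots,w_N\in\Lp{\infty}(\domain;\mu)$ such that every $u\in K$ lies within $\Lp{\infty}$-distance $2\varepsilon$ of some $w_j$.

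Fix $\varepsilon>0$ and let $\{V_i\}_{i=1}^n$ be the partition provided by the hypothesis, so that the essential oscillation of each $u\in K$ on each cell is bounded by $\varepsilon$. The first step is to approximate every $u\in K$ by a piecewise constant function adapted to this partition. For each $i$ let $c_i(u)$ denote the essential infimum of $u$ over $V_i$ (well-defined since $\mu(V_i)>0$) and set $u_\varepsilon:=\sum_{i=1}^n c_i(u)\,\ind_{V_i}$. Since for almost every $x\in V_i$ one has $0\leq u(x)-c_i(u)\leq \mu\operatorname{-}\esssup_{x,y\in V_i}\normR{u(x)-u(y)}<\varepsilon$ by~\labelcref{eq:partinf}, and since the $V_i$ partition $\domain$, we obtain the uniform bound $\norm{u-u_\varepsilon}_{\Lp{\infty}(\domain;\mu)}<\varepsilon$ for every $u\in K$.

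The second step reduces the problem to finite dimensions. Let $M:=\sup_{u\in K}\norm{u}_{\Lp{\infty}(\domain;\mu)}<\infty$, which is finite because $K$ is bounded. Then the coefficient vector $(c_1(u),\dots,c_n(u))$ lies in the box $[-M,M]^n\subset\R^n$, equipped with the maximum norm. As a compact subset of $\R^n$, this box is totally bounded, so we may choose finitely many points $v^{(1)},\dots,v^{(N)}\in\R^n$ whose $\varepsilon$-balls (in the maximum norm) cover it. Setting $w_j:=\sum_{i=1}^n v^{(j)}_i\,\ind_{V_i}$, for any $u\in K$ we can pick $j$ with $\max_i\normR{c_i(u)-v^{(j)}_i}<\varepsilon$, which gives $\norm{u_\varepsilon-w_j}_{\Lp{\infty}(\domain;\mu)}<\varepsilon$ and hence $\norm{u-w_j}_{\Lp{\infty}(\domain;\mu)}<2\varepsilon$ by the triangle inequality. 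Thus $\{w_j\}_{j=1}^N$ is a finite $2\varepsilon$-net for $K$, and since $\varepsilon>0$ was arbitrary, $K$ is totally bounded and therefore relatively compact.

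The only genuinely delicate point is the oscillation-to-approximation estimate in the second step: one must convert the measure-theoretic oscillation bound~\labelcref{eq:partinf} into a pointwise ($\mu$-a.e.) bound on $u-c_i(u)$, using that the essential supremum of $u-c_i(u)$ over $V_i$ equals the difference of the essential supremum and essential infimum of $u$ on $V_i$, which is precisely the quantity controlled by the hypothesis. Everything else is soft: the partition is allowed to depend on $\varepsilon$, which is harmless because total boundedness is verified separately for each $\varepsilon$, and the finite-dimensional total boundedness is immediate from compactness of the box. Finally, completeness of $\Lp{\infty}(\domain;\mu)$—which is where finiteness of $\mu$ enters—is what upgrades total boundedness to relative compactness.
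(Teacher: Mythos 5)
Your proof is correct and follows essentially the same route as the paper: approximate each $u\in K$ by a function that is constant on the cells of the partition (you use the essential infimum where the paper uses the cell average via an averaging operator $\compop$), then exploit finite-dimensionality to extract a finite net and conclude total boundedness, hence relative compactness by completeness of $\Lp{\infty}(\domain;\mu)$. The only quibble is your closing remark that finiteness of $\mu$ is needed for completeness of $\Lp{\infty}$ --- that space is complete for any measure --- but this does not affect the argument.
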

\newcommand*{\compop}{\mathcal{T}}
\begin{proof}
Let $\varepsilon>0$ be given and let $\{V_i\}_{i=1}^n$ be a partition into sets with finite and positive measure such that 
\begin{align}\label{eq:compopinf}
\mu\operatorname{-}\esssup_{x,y\in V_i} \normR{u(x) - u(y)} < \frac{\varepsilon}{3},\quad\ 
\forall u\in K,\; i=1,\ldots,n.
\end{align}
We define the operator 
$\compop:\Lp{\infty}(\domain;\mu)\rightarrow\Lp{\infty}(\domain;\mu)$ as
\begin{align*}
(\compop u)(x):= \frac{1}{\mu(V_i)}\int_{V_i} u(y)~\dd\mu(y)\quad\text{for } x\in V_i,
\end{align*}
which is well defined thanks to $0<\mu(V_i)<\infty$ for all $i=1,\dots,n$.
Using \labelcref{eq:compopinf} we observe that for $\mu$-almost every $x\in V_i$
\begin{align*}
\normR{u(x) - (\compop u)(x)} \leq \frac{1}{\mu(V_i)}\int_{V_i} \normR{u(x)-u(y)}~\dd\mu(y)
< \frac{\varepsilon}{3}
\end{align*}
and thus $\norm{u - \compop u}_{\Lp{\infty}(\domain;\mu)}< \frac{\varepsilon}{3}$.
Furthermore, $\compop(K)\subset\text{span}(\{\ind_{V_1},\ldots, \ind_{V_n}\})$,
where we let $\ind_M$ denote the indicator function of a set $M$, defined by $\ind_M(x)=0$ if $x\notin M$ and $\ind_M(x)=1$ if $x\in M$.
Hence, $\compop$ has finite-dimensional range and since $K$ is bounded we have
\begin{align*}
\norm{\compop u}_{\Lp{\infty}(\domain;\mu)} \leq \norm{u}_{\Lp{\infty}(\domain;\mu)} \leq C\quad \forall u\in K
\end{align*}
and therefore $\compop(K)$ is relatively compact. 
This implies that there exist finitely many 
functions $\{u_j\}_{j=1}^{N}\subset \Lp{\infty}(\domain;\mu)$ such that 
\begin{align*}
\compop(K)\subset \bigcup_{j=1}^N B_{\frac{\varepsilon}{3}}(\compop(u_j)),
\end{align*}
where $B_t(u):=\{v\in\Lp{\infty}(\domain;\mu)\;:\;\norm{u-v}_{\Lp{\infty}(\domain;\mu)}<t\}$ denotes the open ball with radius $t>0$ around $u\in\Lp{\infty}(\domain;\mu)$.
For $u\in K$ we can thus find 
$j\in\{1,\ldots,N\}$ such that $\compop(u)\in B_{\frac{\varepsilon}{3}}(\compop(u_j))$ 
and thus 
\begin{align*}
\norm{u-u_j}_{\Lp{\infty}} \leq \norm{u - \compop(u)}_{\Lp{\infty}} + 
\norm{\compop(u) - \compop(u_j)}_{\Lp{\infty}} + 
\norm{\compop(u_j)-u_j}_{\Lp{\infty}} < \eps.
\end{align*}
This implies that $K$ is totally bounded and since $\Lp{\infty}(\domain;\mu)$ is complete 
the result follows from \cite[Lem. I.6.15]{Dunf60}.
\end{proof}}
The previous lemma allows us to prove a compactness result for the non-local functionals, 
where we again need the domain $\domain$ to fulfill condition~\labelcref{eq:cond_domain}.
\begin{lemma}\label{lem:compnonloc}
Let $\domain\subset\Rd$ be a bounded domain satisfying~\labelcref{eq:cond_domain} and 
let the kernel $\eta$ fulfil 
\labelcref{en:K1}-\labelcref{en:K4}, and let $\seq{\scale}{n}{(0,\infty)}$ be a null sequence. 
Then every bounded sequence $\seq{u}{n}{\Lp{\infty}(\domain)}$ such that
\begin{align}
\sup_{n\in\N}\func_{\scale_n}(u_n)&<\infty\label{eq:compnonloc}
\end{align}
is relatively compact.
\end{lemma}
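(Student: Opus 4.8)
The plan is to verify the hypotheses of \cref{lem:compfirst} for the set $K:=\{u_n:n\in\N\}$, viewed inside $\Lp{\infty}(\domain)=\Lp{\infty}(\domain;\lambda^\dims)$ (note $\lambda^\dims(\domain)<\infty$ since $\domain$ is bounded). The uniform $\Lp{\infty}$-bound required by that lemma is exactly the assumed boundedness of the sequence, so the only substantial task is to produce, for each $\varepsilon>0$, a finite partition $\{V_i\}$ of $\domain$ into sets of positive finite measure on which the oscillation of \emph{every} $u_n$ lies below $\varepsilon$.

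First I would extract a local oscillation estimate from the energy bound \labelcref{eq:compnonloc}. Writing $C:=\sup_{n}\func_{\scale_n}(u_n)<\infty$ and using that by \labelcref{en:K1}--\labelcref{en:K2} there exist $t_0>0$ and $c>0$ with $\eta\geq c$ on $[0,t_0]$, the definition of $\func_{\scale_n}$ gives, for almost every pair $x,y\in\domain$ with $\normR{x-y}\leq t_0\scale_n$,
\[
\normR{u_n(x)-u_n(y)}\leq \tfrac{C}{c}\,\scale_n .
\]
Thus each $u_n$ oscillates by at most $O(\scale_n)$ across Euclidean distance $t_0\scale_n$.

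The crux is to upgrade this scale-$\scale_n$ estimate, which degenerates as $n\to\infty$, to a bound at a \emph{fixed} scale that is uniform in $n$. I would do this by a chaining argument: connecting almost every $x,y\in\domain$ by an almost-geodesic curve and subdividing it into $M\approx d_\domain(x,y)/(t_0\scale_n)$ steps of Euclidean length $\leq t_0\scale_n$, the triangle inequality yields
\[
\normR{u_n(x)-u_n(y)}\leq M\,\tfrac{C}{c}\scale_n\leq L\,d_\domain(x,y)+\tfrac{C}{c}\scale_n,\qquad L:=\tfrac{C}{c\,t_0},
\]
with $L$ \emph{independent of $n$}. Equivalently, one sees the uniform Lipschitz constant $L$ by mollifying at scale $\scale_n$: the local estimate forces $\normR{\gradR{(u_n\ast\varphi_{\scale_n})}}\leq L$ while $\norm{u_n-u_n\ast\varphi_{\scale_n}}_{\Lp{\infty}}\leq \tfrac{C}{c}\scale_n$. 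I expect the main obstacle to be the measure-theoretic bookkeeping in this step: the local estimate holds only off a null set of pairs, so one must choose the intermediate chaining points to avoid these exceptional sets, which is possible for almost every $(x,y)$ by a Fubini argument.

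Finally I would assemble the partition and invoke \cref{lem:compfirst}. Given $\varepsilon>0$, pick $N$ so large that $\tfrac{C}{c}\scale_n<\varepsilon/2$ for $n\geq N$; since $\domain$ is bounded, cover $\overline\domain$ by finitely many Euclidean balls $B_1,\dots,B_m$ so small that, by the domain condition \labelcref{eq:cond_domain}, each $B_i\cap\domain$ has geodesic diameter below $\varepsilon/(2L)$---here \labelcref{eq:cond_domain} is essential, since it prevents Euclidean-close points from being geodesically far. Disjointifying and discarding null sets yields a finite partition into sets of positive finite measure on which, for $n\geq N$, the oscillation of $u_n$ is at most $L\cdot\tfrac{\varepsilon}{2L}+\tfrac{C}{c}\scale_n<\varepsilon$. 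The finitely many remaining functions $u_1,\dots,u_{N-1}$ are handled by refining the partition with the preimages of a partition of their (bounded) ranges into intervals of length $<\varepsilon$; the common refinement then satisfies \labelcref{eq:partinf} for every $u_n$, and \cref{lem:compfirst} gives relative compactness of $K$.
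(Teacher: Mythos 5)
Your proposal is correct and follows essentially the same route as the paper: both extract a local oscillation bound at scale $\scale_n$ from the energy bound, upgrade it to a fixed-scale estimate by chaining along almost-geodesic paths (which is exactly where \labelcref{eq:cond_domain} enters), and then feed a finite small-diameter partition into \cref{lem:compfirst}. The only cosmetic difference is that you retain the additive $O(\scale_n)$ remainder from the ceiling in the number of chaining steps and therefore treat the finitely many initial indices by refining the partition, whereas the paper absorbs everything into a single bound $C(1+\varepsilon)\normR{x-y}$ uniform in $n$; your explicit remark about choosing chaining points outside the exceptional null sets addresses a measure-theoretic point the paper passes over silently.
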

\begin{proof}
We want to apply \cref{lem:compfirst} in order to see that the sequence is 
relatively compact. Therefore let $\varepsilon>0$ be given and w.l.o.g. 
we rescale the kernel such that
\begin{align*}
\eta(t)\geq 1\text{ for }t\leq 1.
\end{align*}
Using \labelcref{eq:cond_domain} we can find $\delta>0$ such that 
for every $x,y\in\domain$ with $\abs{x-y}\leq\delta$ there is 
a path $\gamma:[0,1]\rightarrow\domain$ such that 
$\gamma(0)=x, \gamma(1)=y$ and
\begin{align*}
\text{len}(\gamma)\leq (1+\varepsilon)\normR{x-y}.
\end{align*}
We divide this path by points $0=t_0<\ldots<t_i<\ldots<t_{k_n}=1$
such that for
$z_i:=\gamma(t_i)$ we have that 
\begin{align*}
\normR{z_i-z_{i+1}}\leq \scale_n
\end{align*}
 for $i=0,\ldots, k_n$, where
\begin{align*}
k_n \leq \lfloor (1+\varepsilon)~\normR{x-y}/s_n \rfloor.
\end{align*}
Then we have that 
\begin{align*}
\normR{u_n(x)-u_n(y)} &\leq \sum_{i=0}^{{k_n-1}} {\normR{u_n(z_i) - u_n(z_{i+1})}}
\\&\leq 
s_n~\sum_{i=0}^{{k_n-1}} \eta_{s_n}(\abs{z_i-z_{i+1}}) \frac{\normR{u_n(z_i) - u_n(z_{i+1})}}{s_n}\\ &\leq
s_n~\sum_{i=0}^{{k_n-1}} \func_{s_n}(u_n)\\&\leq
s_n~k_n~\underbrace{\sup_{n\in\N}\func_{\scale_n}(u_n)}_{=:C<\infty}\\&\leq 
C~(1+\varepsilon)~ \normR{x-y}.
\end{align*}
Choosing a partition $\{V_i\}_{i=1}^N$ of $\domain$ {into sets with positive Lebesgue measure such that}
\begin{align*}
\text{diam}(V_i)< \min\left\{\delta, \frac{\varepsilon}{C~(1+\varepsilon)}\right\}    
\end{align*}
for $i=1,\ldots,N,$ yields that 
\begin{align*}
\esssup_{x,y\in V_i} \normR{u_n(x)- u_n(y)}\leq 
C~(1+\varepsilon)~ \esssup_{x,y\in V_i} \normR{x-y}\leq
C~(1+\varepsilon)~\text{diam}(V_i) < \varepsilon.
\end{align*}
Since $\seq{u}{n}{\Lp{\infty}}$ is bounded in $\Lp{\infty}$ we can therefore apply
\cref{lem:compfirst} to infer that the sequence is relatively compact.
\end{proof}
{We will use this result in order to prove that the constrained functionals $\funcd_{n,\mathrm{cons}}$ are compact, which then directly shows \cref{thm:ConvMinLip}.}
The intuitive reason {that these functionals are compact} is the fact that for a domain $\domain$ that fulfills \labelcref{eq:cond_domain} 
each point $x\in\domain_n$ has finite geodesic distance to the set $\consSet_n$. 
This follows from the fact that the geodesic diameter of $\domain$ is bounded, as we show in the following lemma.
\begin{lemma}\label{rem:bounded_diameter}
Condition \labelcref{eq:cond_domain} implies
that the geodesic diameter is finite, i.e., 
\begin{align}\label{eq:geodesic_diameter}
\diam_g(\domain):=\sup_{x,y\in\domain}d_\domain(x,y) < \infty.
\end{align}
\end{lemma}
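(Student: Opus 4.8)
The plan is to promote the purely local comparison of geodesic and Euclidean distance encoded in~\labelcref{eq:cond_domain} to a global bound by a covering-and-chaining argument, using that $\overline{\domain}$ is compact (as $\domain$ is bounded) and that $\domain$ is connected. First I would extract from~\labelcref{eq:cond_domain} a fixed scale at which the two distances are comparable: since the supremum appearing there is nondecreasing in $\delta$, its limit being $\leq 1$ provides, for instance by choosing the tolerance $\varepsilon=1$, some $\delta>0$ with
\begin{align*}
d_\domain(x,y)\leq 2\normR{x-y}\qquad\text{for all }x,y\in\domain\text{ with }\normR{x-y}<\delta.
\end{align*}
In particular, any two points of $\domain$ contained in a common Euclidean ball of radius $\delta/2$ have geodesic distance strictly below $2\delta$.

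Next I would invoke compactness. Covering $\overline{\domain}$ by the balls $\{B_{\delta/2}(z)\}_{z\in\overline{\domain}}$ and passing to a finite subcover with centers $z_1,\dots,z_N$, I set $U_i:=B_{\delta/2}(z_i)\cap\domain$ (discarding any empty ones). These finitely many open pieces cover $\domain$, each has Euclidean diameter $<\delta$, and hence $d_\domain(x,y)<2\delta$ whenever $x,y$ lie in the same $U_i$. I then form the ``cluster graph'' on vertices $\{1,\dots,N\}$ with an edge $i\sim j$ precisely when $U_i\cap U_j\neq\emptyset$, and for each edge I fix a witness point in $U_i\cap U_j\subset\domain$.

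The key step, and the main obstacle, is to show that this cluster graph is connected; this is exactly where connectedness of the domain $\domain$ enters. I would argue by contradiction: if the graph decomposed into two or more components, then the unions of the corresponding $U_i$ would be disjoint nonempty open sets covering $\domain$, contradicting its connectedness. Granting connectedness, for arbitrary $x\in U_{i_0}$ and $y\in U_{i_m}$ I choose a simple graph path $i_0\sim i_1\sim\dots\sim i_m$, so that $m\leq N-1$, and insert the corresponding witness points to obtain a finite chain $x=p_0,p_1,\dots,p_{m+1}=y$ in which each consecutive pair $p_k,p_{k+1}$ lies in a common piece $U_{i_k}$. The triangle inequality for the geodesic metric then yields
\begin{align*}
d_\domain(x,y)\leq\sum_{k=0}^{m}d_\domain(p_k,p_{k+1})<(m+1)\,2\delta\leq 2N\delta.
\end{align*}
As $x,y\in\domain$ were arbitrary, this gives $\diam_g(\domain)\leq 2N\delta<\infty$. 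I would finally remark that boundedness of $\domain$ is indispensable here, since $d_\domain(x,y)\geq\normR{x-y}$ forces the geodesic diameter to be infinite whenever $\domain$ is unbounded.
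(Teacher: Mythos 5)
Your proof is correct and follows essentially the same covering-and-chaining strategy as the paper: extract a scale $\delta$ at which \labelcref{eq:cond_domain} makes the geodesic and Euclidean distances comparable, cover the bounded domain by finitely many balls of that scale, and chain through overlapping balls to obtain a uniform bound of order $N\delta$. The one point where you are more careful than the paper is in justifying that a chain between any two points exists: the paper covers an (assumed) path from $x$ to $y$, whereas you prove connectedness of the nerve of the cover directly from connectedness of $\domain$, which tidies up a slightly informal step in the paper's argument.
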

\begin{proof}
For $\varepsilon>0$ we can {use \labelcref{eq:cond_domain} to} find $\delta>0$ such that 
$d_\domain(x,y)<\abs{x-y}(1+\varepsilon)$ for all $x,y\in\domain$ with $\normR{x-y}<\delta$. 
Since we assume $\domain$ to be bounded we know that there exists a finite collection
$\{x_1,\ldots,x_N\}\subset\domain$
such that 
\begin{align*}
\domain\subset\bigcup_{i=1}^N B_\delta(x_i).
\end{align*}
If two balls at centers $x_i,x_j$ share a common point $z\in\domain$ we see that 
\begin{gather}\label{eq:balldiameter}
\begin{aligned}
d_\domain(x_i,x_j)&\leq d_\domain(x_i,z)+d_\domain(z,x_j)
\\&\leq (1+\varepsilon)\abs{x_i - z} + (1+\varepsilon)\abs{z - x_j}
\\&\leq 2(1+\varepsilon)\delta.
\end{aligned}
\end{gather}
For any $x,y\in\domain$ assume that there exists a path $\gamma$ in $\domain$ 
from $x$ to $y$. Therefore, also the image of $\gamma$ is covered by finitely 
many balls at centers $x_{k_1},\ldots,x_{k_n}$ such that 
\begin{align*}
B_\delta(x_{k_i})\cap B_\delta(x_{k_{i+1}})\cap \domain \neq\emptyset
\end{align*}
for $i=1,\ldots,{n-1}$ with $x\in B_\delta(x_{k_1}), y\in B_\delta(x_{k_n})$. 
Using \labelcref{eq:balldiameter} this yields 
\begin{align*}
d_\domain(x,y)&\leq 
d_\domain(x,x_{k_1}) + \sum_{i=1}^{k_{n-1}} d_\domain(x_i,x_{i+1}) +
d_\domain(x_{k_n},y)\\&\leq
2~(N + 1)(1+\varepsilon)\delta
\end{align*}
{where we used $k_n\leq N$.}
Note that the last expression above is independent of $x,y$ {which concludes the proof.}
\end{proof}
\begin{lemma}\label{boundseql}
Let $\domain\subset\Rd$ be a domain satisfying~\labelcref{eq:cond_domain}, let the kernel fulfil 
\labelcref{en:K1}-\labelcref{en:K4}, and $\seq{s}{n}{(0,\infty)}$ be a null sequence which satisfies the scaling condition~\labelcref{eq:scaling}.
Let $\seq{u}{n}{\Lp{\infty}}(\domain)$ be a sequence with
\begin{align*}
\sup_{n\in\N} \funcd_{n,\mathrm{cons}}(u_n) &<\infty
\end{align*} 
then it is bounded with respect to $\norm{\cdot}_\infty$.
\end{lemma}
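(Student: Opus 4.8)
The plan is to transport a single pointwise bound from the labeled set to all of $\domain$ along graph paths, using the constraint to anchor the values and the finite geodesic diameter from \cref{rem:bounded_diameter} to keep the number of required steps under control. First I would extract the consequences of the hypothesis $\sup_{n}\funcd_{n,\mathrm{cons}}(u_n)=:C_0<\infty$. For each $n$ the energy is finite, so $u_n=\vecc u_n\circ\vor_n$ is piecewise constant with $\vecc u_n=g$ on $\consSet_n$ and $\funcd_n(\vecc u_n)\le C_0$; since $g$ is Lipschitz on the compact set $\overline\domain$ it is bounded, $\norm{g}_{\Lp{\infty}(\overline\domain)}=:M<\infty$. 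Because $\consSet\neq\emptyset$, the Hausdorff convergence \labelcref{eq:labelset_cvgc} yields $\consSet_n\neq\emptyset$ for all large $n$, so I may fix an anchor vertex $x_n^\ast\in\consSet_n$, for which $\abs{\vecc u_n(x_n^\ast)}=\abs{g(x_n^\ast)}\le M$.

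The core ingredient is a discrete increment bound along admissible edges. By \labelcref{en:K1} there is $\tau\in(0,\etaradius_\eta)$ with $\eta(t)\ge\eta(0)/2>0$ for $t\in[0,\tau]$. Reading off the discrete functional \labelcref{eq:discrete_func}, for any $z,w\in\domain_n$ with $\abs{z-w}\le\tau\scale_n$ we have $\eta_{\scale_n}(\abs{z-w})\ge\eta(0)/2$, hence
\begin{align*}
\abs{\vecc u_n(z)-\vecc u_n(w)}\le\frac{\scale_n\,\funcd_n(\vecc u_n)}{\eta_{\scale_n}(\abs{z-w})}\le\frac{2\scale_n}{\eta(0)}\,C_0=:\kappa_n,
\end{align*}
so a single admissible step changes $\vecc u_n$ by at most $O(\scale_n)$. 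Next I would connect an arbitrary $x\in\domain_n$ to the anchor $x_n^\ast$: by \cref{rem:bounded_diameter} and continuity of $d_\domain$ on $\overline\domain$, the geodesic diameter $D:=\diam_g(\domain)$ is finite, so there is a curve $\gamma$ in $\domain$ from $x$ to $x_n^\ast$ with $\mathrm{len}(\gamma)\le D+1$. Parametrising by arclength I place points $x=\gamma(\theta_0),\ldots,\gamma(\theta_k)=x_n^\ast$ with $\abs{\gamma(\theta_i)-\gamma(\theta_{i+1})}\le\tfrac{\tau}{2}\scale_n$ and $k\le\lceil 2(D+1)/(\tau\scale_n)\rceil$, and project them to vertices $w_i:=\vor_n(\gamma(\theta_i))\in\domain_n$, noting $w_0=x$, $w_k=x_n^\ast$, and $\abs{w_i-\gamma(\theta_i)}\le r_n$ by \labelcref{eq:resolution}.

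The hard part is ensuring that the \emph{projected} chain is admissible uniformly in $n$: the projection perturbs the spacing to $\abs{w_i-w_{i+1}}\le\tfrac{\tau}{2}\scale_n+2r_n$, which is $\le\tau\scale_n$ only once $n$ is large enough that $2r_n\le\tfrac{\tau}{2}\scale_n$. This is precisely where the scaling condition \labelcref{eq:scaling}, $r_n/\scale_n\to0$, is indispensable; it is also the reason the weaker Euclidean connectivity alone would not suffice. For such $n$ every step is admissible, so the increment bound and the telescoping estimate give
\begin{align*}
\abs{\vecc u_n(x)-g(x_n^\ast)}\le\sum_{i=0}^{k-1}\abs{\vecc u_n(w_i)-\vecc u_n(w_{i+1})}\le k\,\kappa_n\le\left(\frac{2(D+1)}{\tau\scale_n}+1\right)\frac{2\scale_n}{\eta(0)}\,C_0,
\end{align*}
which is bounded independently of $x$ and $n$ since $\scale_n\to0$.

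Consequently $\abs{u_n(x)}=\abs{\vecc u_n(\vor_n(x))}\le M+C_1$ for all large $n$, with $C_1$ the uniform bound above; the finitely many remaining $u_n$ each take only finitely many values and are thus individually bounded, so $\sup_n\norm{u_n}_{\Lp{\infty}}<\infty$, as claimed. I would close by observing that this boundedness is exactly the missing hypothesis needed to invoke the non-local compactness result \cref{lem:compnonloc}, and hence feeds directly into the compactness of $\funcd_{n,\mathrm{cons}}$ and the proof of \cref{thm:ConvMinLip}.
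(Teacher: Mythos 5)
Your proposal is correct and follows essentially the same route as the paper's proof: both anchor the value at the labeled set (where $\vecc u_n=g$ is bounded), connect an arbitrary vertex to $\consSet_n$ by a curve whose length is controlled by the finite geodesic diameter from \cref{rem:bounded_diameter}, discretize that curve with spacing of order $\scale_n$, project onto $\domain_n$ using $r_n$, and telescope the single-edge increment bound $O(\scale_n)$ over $O(1/\scale_n)$ admissible steps, with the scaling condition \labelcref{eq:scaling} guaranteeing admissibility after projection. The only differences are cosmetic (your spacing $\tfrac{\tau}{2}\scale_n$ with the requirement $2r_n\le\tfrac{\tau}{2}\scale_n$ versus the paper's exact spacing $\scale_n-2r_n$, and your use of continuity of $\eta$ at $0$ in place of the paper's w.l.o.g.\ rescaling of the kernel).
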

\begin{remark}
{Similar to \cref{rem:homogeneous_bdry}}, one can relax condition \labelcref{eq:cond_domain} in \cref{boundseql} by taking into account the specific form of the constraint set $\consSet\subset\overline{\domain}$.
Indeed, {an inspection of the following proof shows that} it suffices to demand that
\begin{align}\label{eq:relaxation_geo_diam}
    \sup_{x\in\domain}\inf_{y\in\consSet} d_\domain(x,y)<\infty,
\end{align}
which means that $\consSet$ has finite geodesic distance to any point in $\domain$. 
In the case that $\consSet=\partial\domain$ this is always satisfied since $\domain$ is bounded. 
However, the condition is violated, e.g., if {$\domain$ is an open, infinite but bounded spiral and $\consSet$ a single point at its center.}
\end{remark}
\begin{proof}
W.l.o.g. we assume 
\begin{align*}
\eta(t)\geq 1\text{ for }t\leq 1.
\end{align*}
Let $n_0\in\N$ be large enough such that 
\begin{align*}
r_n < \scale_n/2,\ \forall n\geq n_0
\end{align*}
and for $x_0\in\domain_n, n\geq n_0$ let $\gamma:[0,1]\rightarrow\domain$ be a path in $\domain$ such 
that $\gamma(0)=x_0$ and $\gamma(1)\in\consSet_n.$ We divide $\gamma$ by points {$0=t_0<\ldots<t_{k_n}=1$ such that 
\begin{align*}
\normR{\gamma(t_i) - \gamma(t_{i+1})}&= \scale_n - 2r_n,\quad i=0,\ldots,k_n-2,\\
\normR{\gamma(t_{k_n}) - \gamma(t_{k_n+1})}&\leq \scale_n - 2r_n,
\end{align*}
and by definition of the parameter $r_n$ we know that for each $i=0,\ldots,k_n$} there exists a vertex $x_i\in\domain_n$ such that
\begin{align*}
\normR{x_i - \gamma(t_i)}\leq r_n.
\end{align*}
Applying the triangle inequality this yields
\begin{align*}
\normR{x_i - x_{i+1}}&\leq \normR{x_i- \gamma(t_i)} + 
\normR{\gamma(t_i) - \gamma(t_{i+1})} + \normR{x_{i+1} - \gamma(t_{i+1})}\\&\leq 
2r_n + \scale_n - 2 r_n\leq s_n, 
\end{align*}
and thus {$\eta_{\scale_n}(\abs{x_i-x_{i+1}})\geq 1$ for all $i=0,\dots,k_n-1$.
By definition of the discrete functional \labelcref{eq:TrafoDiscConstr} there exists $\vecc u_n:\domain\to\R$ with $u_n=\vecc u_n\circ\vor_n$ and} we can estimate
\begin{align}
\normR{\vecc u_n(x_0)}&\leq \sum_{i=0}^{{k_n-2}} 
\normR{\vecc u_n(x_i) - \vecc u_n(x_{i+1})} + \normR{\vecc u_n(x_{{k_n}})}\nonumber
\\&\leq
\scale_n \sum_{i=0}^{{k_n-2}} \eta_{\scale_n}(\abs{x_i-x_{i+1}}) 
\normR{\vecc u_n(x_i) - \vecc u_n(x_{i+1})}/\scale_n
+ \normR{g(x_{k_n})}\nonumber
\\&\leq
\scale_n~{(k_n-1)}~\funcd_{n, \mathrm{cons}}(u_n) + \normR{g(x_{k_n})},\label{eq:lenest}
\end{align}
{where we used $\vecc u_n(x)=g(x)$ for all $x\in\consSet_n$ since $\funcd_{n,\mathrm{cons}}(u_n)<\infty$.}
It remains to show that the product $\scale_n~{(k_n-1)}$ is uniformly bounded in $n$, for which 
we first observe that the path $\gamma$ can be chosen such that
\begin{align*}
k_n-1\leq \lfloor \text{diam}_g(\domain)/(\scale_n-2r_n) \rfloor
\end{align*}
and thus using~\labelcref{eq:scaling}
{
\begin{align*}
\scale_n~(k_n-1) \leq
\scale_n~\left\lfloor \frac{\text{diam}_g(\domain)}{(\scale_n-2r_n)}\right\rfloor 
\leq C \frac{1}{(1 - 2~r_n/\scale_n)} < \tilde C,\ \forall n\in\N,
\end{align*}}
where we note that $\diam_g(\domain)<\infty$ according to \cref{rem:bounded_diameter}.
Together with \labelcref{eq:lenest} this yields that there exists a uniform constant 
$C>0$ such that $\norm{u_n}_{\Lp{\infty}}\leq C$ for all $n\in\N$.
\end{proof}
We can now prove {that the constrained functionals $\funcd_{n,\mathrm{cons}}$ are indeed compact.}
\begin{lemma}\label{lem:compdiscseq}
Let $\domain\subset\Rd$ be a domain satisfying~\labelcref{eq:cond_domain}, let the kernel fulfil \labelcref{en:K1}-\labelcref{en:K4}, and $\seq{s}{n}{(0,\infty)}$ be a null sequence which satisfies the scaling condition~\labelcref{eq:scaling}.
Then we have that every sequence $\seq{u}{n}{\Lp{\infty}(\Omega)}$ such that
\begin{align*}
\sup_{n\in\N} \funcd_{n,\mathrm{cons}}(u_n)<\infty
\end{align*}
is relatively compact in $\Lp{\infty}$.
\end{lemma}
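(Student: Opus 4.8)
The plan is to chain together the two results that have already been proved: the $\Lp{\infty}$-boundedness from \cref{boundseql} and the nonlocal compactness from \cref{lem:compnonloc}, bridged by the discrete-to-nonlocal control estimate that appears in the proof of the liminf inequality \cref{lem:LIcons}.

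First I would note that, since $\sup_{n\in\N}\funcd_{n,\mathrm{cons}}(u_n)<\infty$, \cref{boundseql} immediately gives that $(u_n)_{n\in\N}$ is bounded in $\Lp{\infty}(\domain)$. Hence only the second hypothesis~\labelcref{eq:compnonloc} of \cref{lem:compnonloc}, namely a uniform bound on a nonlocal functional, remains to be verified for a suitable null sequence of scales.

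Next I would reuse the estimate from \cref{lem:LIcons} in its pointwise-in-$n$ form. In the case that $\eta$ is constant on some interval $[0,t]$, setting $\tilde\scale_n:=\scale_n-2r_n/t$ one obtains, exactly as in that proof,
\begin{align*}
\funcd_{n,\mathrm{cons}}(u_n)\geq\frac{\tilde\scale_n}{\scale_n}~\func_{\tilde\scale_n}(u_n)
\end{align*}
for all $n$ large enough that $\tilde\scale_n>0$. The scaling condition~\labelcref{eq:scaling} forces $r_n/\scale_n\to 0$, so $\tilde\scale_n/\scale_n\to 1$ and $(\tilde\scale_n)_{n\in\N}$ is again a null sequence. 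Consequently
\begin{align*}
\sup_{n\in\N}\func_{\tilde\scale_n}(u_n)\leq\left(\sup_{n\in\N}\frac{\scale_n}{\tilde\scale_n}\right)\sup_{n\in\N}\funcd_{n,\mathrm{cons}}(u_n)<\infty.
\end{align*}
For a general kernel fulfilling \labelcref{en:K1}-\labelcref{en:K4} I would pass to the truncated kernel $\tilde\eta$ of Case~2 in \cref{lem:LIcons}, which is constant near $0$, still satisfies \labelcref{en:K1}-\labelcref{en:K4}, fulfils $\sigma_{\tilde\eta}=\sigma_\eta$, and obeys $\eta\geq\tilde\eta$ by monotonicity~\labelcref{en:K2}; then $\funcd_{n,\mathrm{cons}}(u_n)$ dominates the analogous functional built from $\tilde\eta$, reducing the general situation to the constant case above.

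Finally, applying \cref{lem:compnonloc} with the kernel $\tilde\eta$ and the null sequence $(\tilde\scale_n)_{n\in\N}$ to the bounded sequence $(u_n)_{n\in\N}$ yields relative compactness in $\Lp{\infty}(\domain)$, which is the assertion. The only delicate point—and the step I expect to require the most care—is the bookkeeping around the rescaled parameter $\tilde\scale_n$: one must check that it is positive and null for large $n$ and, crucially, that the control estimate holds for each fixed $n$ rather than only in the limit, since it is precisely this uniform-in-$n$ version that converts the liminf argument of \cref{lem:LIcons} into the supremum bound demanded by \cref{lem:compnonloc}.
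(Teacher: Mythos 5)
Your proposal is correct and follows essentially the same route as the paper: establish $\Lp{\infty}$-boundedness via \cref{boundseql}, reuse the discrete-to-nonlocal estimate $\funcd_{n,\mathrm{cons}}(u_n)\geq\frac{\tilde\scale_n}{\scale_n}\func_{\tilde\scale_n}(u_n)$ from the proof of \cref{lem:LIcons} (including the reduction to a kernel constant near zero) together with $\sup_n \scale_n/\tilde\scale_n<\infty$, and then invoke \cref{lem:compnonloc}. Your explicit bookkeeping of the truncated kernel $\tilde\eta$ and the positivity of $\tilde\scale_n$ is a slightly more detailed rendering of what the paper compresses into ``using the same arguments as in the proof of \cref{lem:LIcons}.''
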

\begin{proof}
Using the same arguments as in the proof of \cref{lem:LIcons} we can find a 
scaling sequence $\seq{\tilde\scale}{n}{(0,\infty)}$ such that
\begin{align*}
\funcd_{n}(u_n) &\geq 
\frac{\tilde{\scale}_n}{\scale_n}~\func_{\tilde{\scale}_n}(u_n)
\end{align*}
and $\tilde{\scale}_n/\scale_n\longrightarrow 1$. We choose 
$C:=\sup_{n\in\N} \frac{\scale_n}{\tilde{\scale}_n}<\infty$ to obtain 
\begin{align*}
\sup_{n\in\N}\func_{\tilde{\scale}_n}(u_n) \leq C\sup_{n\in\N}\funcd_{n}(u_n) = C\sup_{n\in\N}\funcd_{n,\mathrm{cons}}(u_n) <\infty.
\end{align*}
Thanks to \cref{boundseql} the sequence $(u_n)_{n\in\N}$ is bounded in $\Lp{\infty}$ and thus we can apply \cref{lem:compnonloc} to infer that 
$\seq{u}{n}{\Lp{\infty}}(\domain)$ is relatively compact.
\end{proof}
Together with \cref{lem:ConvMin} this finally yields our second 
main statement \cref{thm:ConvMinLip}.
{%
\begin{proof}[Proof of \cref{thm:ConvMinLip}]
Let $v\in\Lp{\infty}(\Omega)$ with $\func_\mathrm{cons}(v)<\infty$ be arbitrary and $\seq{v}{n}{\Lp{\infty}(\Omega)}$ be a recovery sequence for $v$, the existence of which is guaranteed by \cref{lem:LScons}.
By assumption, the sequence $u_n$ satisfies
\begin{align*}
    \limsup_{n\to\infty}\funcd_{n,\mathrm{cons}}(u_n) &= 
    \limsup_{n\to\infty}
    \inf_{u\in\Lp{\infty}(\Omega)} \funcd_{n,\mathrm{cons}}(u)
    \\
    &\leq
    \limsup_{n\to\infty}
    \funcd_{n,\mathrm{cons}}(v_n)
    \leq
    \sigma_\eta
    ~\func_\mathrm{cons}(v)<\infty.
\end{align*}
Hence, \cref{lem:compdiscseq} implies that $\seq{u}{n}{\Lp{\infty}(\Omega)}$ is relatively compact.
\cref{lem:ConvMin} then concludes the proof.
\end{proof}}
\section{Application to Ground States}\label{sec:groundstates}
In this section we apply the discrete-to-continuum $\Gamma$-convergence from \cref{thm:DCGamma} to so-called ground states, first studied in \cite{bung20}.
These are restricted minimizers of the functionals $\funcd_{n,\mathrm{cons}}$ and $\func_\mathrm{cons}$ on $\Lp{p}$-spheres, where we assume that the constraint satisfies $g=0$ on $\overline{\domain}$.
This makes the functionals $\funcd_{n,\mathrm{cons}}$ and $\func_\mathrm{cons}$ absolutely $1$-homogeneous.

For absolutely $p$-homogeneous functionals $F:X\to\R\cup\{\infty\}$ on a Banach space {$(X,\norm{\cdot})$} with $p\in[1,\infty)$, which per definitionem satisfy
\begin{align*}
    F(cu)=\normR{c}^pF(u),\quad\forall u\in X,\,c\in\R,
\end{align*}
ground states are defined as solutions to the minimization problem
\begin{align*}
    \min\left\lbrace F(u) \,:\, \inf_{v\in\argmin F}\norm{u-v} = 1 \right\rbrace.
\end{align*}
Ground states and their relations to gradient flows and power methods are well-studied in the literature, see, e.g., \cite{hynd2016inverse,hynd2017approximation,hynd2019extremal,bungert2019asymptotic,feld2019rayleigh,bung20}.
In particular, they constitute minimizers of the non-linear Rayleigh quotient
\begin{align*}
    R(u) = \frac{F(u)}{\inf_{v\in\argmin F}\norm{u-v}^p}
\end{align*}
and are related to non-linear eigenvalue problems with the prime example being $F(u)=\int_\domain\normR{\gradR{u}}^p\d x$ {on $X:=\Lp{\domain}$} where ground states solve the $p$-Laplacian eigenvalue problem
\begin{align*}
    \lambda\normR{u}^{p-2}u = -\Delta_p u.
\end{align*}
In \cite{bung20} ground states of the functionals $\funcd_{n,\mathrm{cons}}$ and $\func_\mathrm{cons}$ were characterized as distance functions. 
While there it was assumed that $\consSet=\partial\domain$, we will in the following generalize these results to the case of an arbitrary closed constraint set $\consSet\subset\overline{\domain}$.
Subsequently, we will use the $\Gamma$-convergence, established in \cref{thm:DCGamma}, to show discrete-to-continuum convergence of ground states.

\subsection{Relation to Distance Functions}

Here, we show that the unique $\Lp{p}$ ground states of the limit functional $\func_\mathrm{cons}$ coincide with multiples of the geodesic distance function to the set $\consSet$.
To prove the desired statement, we need the following lemma, stating that the gradient of the geodesic distance function is bounded by one.

\begin{lemma}\label{lem:grad_dist}
Let $\consSet\subset\overline{\domain}$ be a closed set and 
\begin{align*}
    d_\consSet(x):=\inf_{y\in\consSet}d_\domain(x,y)
\end{align*}
be the geodesic distance function of $\consSet$, where $d_\domain(x,y)$ denotes the geodesic distance between $x,y\in\domain$.
Then it holds
\begin{align*}
    \norm{\gradR{d_\consSet}}_{\Lp{\infty}}\leq 1.
\end{align*}
\end{lemma}
\begin{proof}
Let $x,y\in\domain$ be arbitrary.
Using the triangle inequality for $d_\domain$ we get
\begin{align*}
    d_\consSet(y)\leq d_\domain(x,y)+d_\consSet(x),\quad\forall x,y\in\domain.
\end{align*}
{If $x,y$ lie in a ball fully contained in $\Omega$, then $d_\Omega(x,y)=\abs{x-y}$ and we obtain that $d_\consSet$ is Lipschitz continuous on this ball. 
Rademacher's theorem then implies that $\nabla d_\consSet$ exists almost eveywhere in the ball.
Since the ball is arbitrary, $\nabla d_\consSet$ in fact exists almost everywhere in $\domain$.}

{Furthermore, since $\domain$ is open, for $x\in\domain$ and} $t>0$ small enough the ball $B_t(x):=\{y\in\Rd\,:\,\normR{x-y}<t\}$ lies within $\domain$ and it holds $d_\domain(x,y)=\normR{x-y}$ for all $y\in B_t(x)$. 

Choosing $y=x+ta\in B_t(x)$ with $a\in B_1(0)$ we get
\begin{align*}
    \frac{d_\consSet(x+ta)-d_\consSet(x)}{t}\leq \frac{d_\domain(x,x+ta)}{t}= \frac{\normR{at}}{t}\leq 1.
\end{align*}
Since $a$ was arbitrary, we can conclude $\normR{\gradR{d_\consSet}(x)}\leq 1$ for almost all $x\in\domain$ which implies the desired statement.
\end{proof}

With this lemma we now can prove that the unique ground state (up to scalar multiples) of the functional $\func_\mathrm{cons}$ is given by the geodesic distance function to $\consSet$.
The only (weak) assumption which we need here is that $d_\consSet\in\Lp{p}(\domain)$ which is fulfilled, for instance, if $\domain$ has finite geodesic diameter or even only satisfies the relaxed condition~\labelcref{eq:relaxation_geo_diam}, in which case $d_\consSet\in\Lp{\infty}(\domain)$ holds.

\begin{theorem}\label{thm:GSDist}
Let $\consSet$ be {a closed set} such that $\overline{\domain}\setminus\consSet$ is connected {and non-empty} and $d_\consSet\in\Lp{p}(\domain)$, and let the constraint function satisfy $g=0$ on $\consSet$.
For $p\in[1,\infty)$ the unique solution (up to global sign) to
\begin{align}\label{eq:ground_state}
    \min\left\lbrace
    \func_{\mathrm{cons}}(u) \,:\,u\in\Lp{\infty}(\domain),\, \norm{u}_{\Lp{p}}= 1 \right\rbrace
\end{align}
is given by a positive multiple of the geodesic distance function $d_\consSet$.

If $\domain$ is convex or $\consSet=\partial\domain$, the geodesic distance $d_\domain(x,y)$ {in the definition of $d_\consSet$} can be replaced by the Euclidean $|x-y|$ and $d_\consSet\in\Lp{p}(\domain)$ is always satisfied.
\end{theorem}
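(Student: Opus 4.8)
The plan is to reduce the entire statement to a single pointwise estimate and then read off optimality and uniqueness from its equality case. Since $g=0$ on $\consSet$, an admissible competitor is any $u\in W^{1,\infty}(\domain)$ with $u=0$ on $\consSet$ and $\norm{u}_{\Lp{p}}=1$, for which $\func_\mathrm{cons}(u)=\norm{\gradR{u}}_{\Lp{\infty}}=:L$. First I would invoke the geodesic Lipschitz bound \labelcref{eq:lipeq}, $\normR{u(x)-u(y)}\leq L\,d_\domain(x,y)$ for a.e.\ $x,y$. Fixing $x$, letting $y$ range over $\consSet$ where $u$ vanishes, and taking the infimum over $y\in\consSet$ yields the key estimate
\begin{align*}
\normR{u(x)}\leq L\,d_\consSet(x)\qquad\text{for a.e.\ }x\in\domain,
\end{align*}
which drives everything that follows.

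Next I would derive the sharp lower bound. Raising the key estimate to the $p$-th power and integrating gives $1=\norm{u}_{\Lp{p}}^p\leq L^p\,\norm{d_\consSet}_{\Lp{p}}^p$, hence $\func_\mathrm{cons}(u)=L\geq \norm{d_\consSet}_{\Lp{p}}^{-1}$ for every admissible $u$. To show the bound is attained I would test with $u^\star:=d_\consSet/\norm{d_\consSet}_{\Lp{p}}$: it vanishes on $\consSet$ because $d_\consSet$ vanishes exactly on the closed set $\consSet$, it is normalized by construction, and \cref{lem:grad_dist} gives $\norm{\gradR{d_\consSet}}_{\Lp{\infty}}\leq 1$ so that $\func_\mathrm{cons}(u^\star)<\infty$, while the hypothesis $d_\consSet\in\Lp{p}(\domain)$ makes the normalization well defined. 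Then $\func_\mathrm{cons}(u^\star)=\norm{\gradR{d_\consSet}}_{\Lp{\infty}}/\norm{d_\consSet}_{\Lp{p}}\leq \norm{d_\consSet}_{\Lp{p}}^{-1}$, which matches the lower bound; this simultaneously identifies the minimum value as $\norm{d_\consSet}_{\Lp{p}}^{-1}$, shows $u^\star$ is a minimizer, and forces $\norm{\gradR{d_\consSet}}_{\Lp{\infty}}=1$.

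The step I expect to require the most care is uniqueness. For an arbitrary minimizer $u$ every inequality above must be an equality; in particular $\int_\domain\normR{u}^p=\int_\domain (L\,d_\consSet)^p$ with $L=\norm{d_\consSet}_{\Lp{p}}^{-1}$, while $\normR{u}\leq L\,d_\consSet$ a.e. These two facts force $\normR{u}=L\,d_\consSet$ a.e., and since a Lipschitz representative of $u$ and the function $d_\consSet$ are continuous, the identity holds pointwise on $\domain$. It then remains only to fix the sign: since the continuous representative of $u$ extends to $\overline{\domain}$ and $d_\consSet>0$ on $\overline{\domain}\setminus\consSet$, the function $u$ does not vanish there, so continuity together with the assumed connectedness of $\overline{\domain}\setminus\consSet$ forces the sign of $u$ to be constant, giving $u=L\,d_\consSet$ or $u=-L\,d_\consSet$ throughout. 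This is precisely uniqueness up to global sign, with non-emptiness of $\overline{\domain}\setminus\consSet$ ensuring the problem is non-vacuous. The two delicate points are the passage from the a.e.\ identity to a pointwise one and the constant-sign argument on the connected set.

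Finally I would dispatch the two special cases. If $\domain$ is convex, then $d_\domain(x,y)=\normR{x-y}$ directly from the definition of the geodesic distance, so $d_\consSet$ equals the Euclidean distance function, which is bounded on the bounded set $\domain$ and hence lies in $\Lp{p}(\domain)$. If $\consSet=\partial\domain$, then for $x\in\domain$ and a nearest boundary point $y$ the open segment from $x$ to $y$ stays in $\domain$, so $d_\domain(x,y)=\normR{x-y}$ realizes the infimum and the geodesic and Euclidean distance-to-boundary functions coincide; boundedness of $\domain$ again yields $d_\consSet\in\Lp{\infty}(\domain)\subset\Lp{p}(\domain)$. In either case the hypothesis $d_\consSet\in\Lp{p}(\domain)$ is automatic, as claimed.
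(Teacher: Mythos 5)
Your proof is correct and takes essentially the same route as the paper: the pointwise bound $\normR{u(x)}\leq\norm{\gradR{u}}_{\Lp{\infty}}\,d_\consSet(x)$ obtained from \labelcref{eq:lipeq}, integration to get the sharp bound against $\norm{d_\consSet}_{\Lp{p}}$, \cref{lem:grad_dist} for attainment, and the equality case plus continuity and connectedness of $\overline{\domain}\setminus\consSet$ for uniqueness up to sign. The only difference is organizational: you argue directly on the minimization problem \labelcref{eq:ground_state}, whereas the paper first solves the reformulated problem $\max\{\norm{u}_{\Lp{p}}:\func_\mathrm{cons}(u)=1\}$ and then proves its equivalence to \labelcref{eq:ground_state} via homogeneity, a detour your version avoids.
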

\begin{proof}
The case $\consSet=\partial\domain$ was already proved in \cite{bung20}.
{If $\Omega$ is convex it holds $d_\domain(x,y)=\abs{x-y}$ which is and hence $d_\consSet$ is bounded and in particular lies in $\Lp{p}(\domain)$ for all $p\geq 1$.}

We {first prove} that the geodesic distance function $d_\consSet$ is a solution of
\begin{align}\label{eq:ground_state_max}
    \max\left\lbrace \norm{u}_{\Lp{p}} \,:\,{u\in\Lp{\infty}(\domain)},\, \func_\mathrm{cons}(u)= 1 \right\rbrace.
\end{align}
Since $\consSet$ is closed and bounded, for every $x\in\domain$ we can choose $y_x\in\consSet$ such that $d_\domain(x,y_x)\leq d_\domain(x,y)$ for all $y\in\consSet$.
Hence, if $u=0$ on $\consSet$, we can choose $y=y_x$ and obtain {from \labelcref{eq:lipeq} that}
\begin{align}\label{ineq:pw_maximality_dist_func}
    \normR{u(x)} \leq \norm{\gradR{u}}_{\Lp{\infty}}d_\consSet(x)
\end{align}
{for almost every $x\in\domain$} which implies
\begin{align}\label{ineq:maximality_dist_func}
    \norm{u}_{\Lp{p}}\leq\norm{\gradR{u}}_{\Lp{\infty}}\norm{d_\consSet}_{\Lp{p}}.
\end{align}
Hence, for all $u\in\Lp{\infty}(\domain)$ with $\func_\mathrm{cons}(u)= 1$ one obtains from \labelcref{ineq:maximality_dist_func} that
\begin{align*}
    \norm{u}_{\Lp{p}}\leq\norm{d_\consSet}_{\Lp{p}}.
\end{align*}
{From \cref{lem:grad_dist} we know that} $\func_\mathrm{cons}(d_\consSet)=\norm{\gradR{d_\consSet}}_{\Lp{\infty}}\leq 1$.
{At the same time choosing $u=d_\consSet$ in \labelcref{ineq:maximality_dist_func} shows that in fact $\func_\mathrm{cons}(d_\consSet)=\norm{\gradR{d_\consSet}}_{\Lp{\infty}}= 1$ and therefore} $d_\consSet$ solves \labelcref{eq:ground_state_max}.

Regarding uniqueness we argue as follows: {Since} $p<\infty$ the inequality \labelcref{ineq:maximality_dist_func} is sharp if \labelcref{ineq:pw_maximality_dist_func} is sharp which{, using that $\norm{\nabla u}_{\Lp{\infty}}=\func_\mathrm{cons}(u)=1$, } implies that all solutions $u$ of \labelcref{eq:ground_state_max} must fulfill 
\begin{align*}
    \normR{u(x)}=d_\consSet(x),\quad\forall x\in\domain.
\end{align*}
If $\domain\setminus\consSet$ is connected, the continuity of $u$ implies that (up to global sign) $u(x)=d_\consSet(x)$ for all $x\in\domain$.

{Finally, we argue that \labelcref{eq:ground_state,eq:ground_state_max} are equivalent:
Since $d_\consSet\in\Lp{p}(\domain)$ we have $\norm{d_\consSet}_{\Lp{p}}<\infty$.
Then $d_\consSet/\norm{d_\consSet}_{\Lp{p}}$ solves \labelcref{eq:ground_state} since for any $u\in\Lp{\infty}(\domain)$ with $\norm{u}_{\Lp{p}}=1$ it holds
\begin{align*}
    \func_\mathrm{cons}\left(\frac{d_\consSet}{\norm{d_\consSet}_{\Lp{p}}}\right) 
    =
    \frac{\func_\mathrm{cons}(d_\consSet)}{\norm{d_\consSet}_{\Lp{p}}} 
    =
    \frac{1}{\norm{d_\consSet}_{\Lp{p}}}
    =
    \frac{\norm{u}_{\Lp{p}}}{\norm{d_\consSet}_{\Lp{p}}}
    \leq
    \func_\mathrm{cons}(u),
\end{align*}
where we used \labelcref{ineq:maximality_dist_func} for the inequality.
Analogously, if $u$ solves \labelcref{eq:ground_state} then 
\begin{align*}
    \func_\mathrm{cons}(u)\leq \func_\mathrm{cons}(d_\consSet/\norm{d_\consSet}_{\Lp{p}})=1/\norm{d_\consSet}_{\Lp{p}}<\infty.
\end{align*}
This follows from the fact that $d_\consSet\neq 0$ since $\overline\domain\setminus\consSet\neq\emptyset$.
Then $u/\func_\mathrm{cons}(u)$ solves \labelcref{eq:ground_state_max} since, using again \labelcref{ineq:maximality_dist_func}, it holds
\begin{align*}
    \norm{\frac{u}{\func_\mathrm{cons}(u)}}_{\Lp{p}} = \frac{1}{\func_\mathrm{cons}(u)}
    \geq 
    \frac{\norm{d_\consSet}_{\Lp{p}}}{\norm{u}_{\Lp{p}}} = \norm{d_\consSet}_{\Lp{p}}
\end{align*}
and $d_\consSet$ solves \labelcref{eq:ground_state_max}.
}
\end{proof}

\begin{remark}
In the case $p=\infty$ the geodesic distance function is still a ground state, however, not the unique one.
In this case, other ground states are given by $\infty$-Laplacian eigenfunctions, see, e.g.,~\cite{juutinen1999infinity, yu2007some,bozorgnia2020infinity}.
\end{remark}

\begin{remark}
If one drops the condition that $\overline{\domain}\setminus\consSet$ is connected, ground states coincide with (positive or negative) multiples of the distance function on each connected component of $\overline{\domain}\setminus\consSet$.
\end{remark}

Similarly, one can also prove that ground states of the discrete functionals $\funcd_{n,\mathrm{cons}}$ coincide with multiples of distance functions to $\consSet_n$ with respect to the geodesic graph distance if $\domain_n\setminus\consSet_n$ is connected in the graph-sense.
The result can be found in \cite{bung20}, however, since we do not need it here, we refrain from stating it.

\subsection{Convergence of Ground States}

In this section we first show that the $\Gamma$-convergence of the functionals $\funcd_{n,\mathrm{cons}}$ to $\sigma_\eta\func_\mathrm{cons}$ implies the convergence of their respective ground states.
Together with the characterization from \cref{thm:GSDist} this implies that discrete ground states converge to the geodesic distance function.

\begin{theorem}[Convergence of Ground States]\label{thm:cgvc_GS}
Under the conditions of \cref{thm:DCGamma} let the sequence $(u_n)_{n\in\N}\subset\Lp{\infty}(\domain)$ fulfill
\begin{align*}
    u_n \in \argmin\left\lbrace
    \funcd_{n,\mathrm{cons}}(u) \,:\,u\in\Lp{\infty}(\domain),\, \norm{u}_{\Lp{p}}=1 \right\rbrace.
\end{align*}
Then (up to a subsequence) $u_n\to u$ in $\Lp{\infty}(\domain)$ where
\begin{align*}
    u \in \argmin\left\lbrace
    \func_{\mathrm{cons}}(u) \,:\,u\in\Lp{\infty}(\domain),\, \norm{u}_{\Lp{p}}=1 \right\rbrace
\end{align*}
and it holds
\begin{align}\label{eq:cvgc_eigenvalues}
    \lim_{n\to\infty}{\funcd_{n,\mathrm{cons}}(u_n)} = \sigma_\eta~\func_{\mathrm{cons}}(u).
\end{align}
\end{theorem}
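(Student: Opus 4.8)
The plan is to treat this as a \emph{constrained} convergence-of-minimizers statement, the constraint being the $\Lp{p}$-sphere $\{\,u\in\Lp{\infty}(\domain):\norm{u}_{\Lp{p}}=1\,\}$. The whole argument rests on two structural facts. First, both $\funcd_{n,\mathrm{cons}}$ and $\func_{\mathrm{cons}}$ are absolutely $1$-homogeneous (because $g=0$ on $\consSet$), so any recovery sequence can be rescaled back onto the sphere at the cost of dividing its energy by its $\Lp{p}$-norm. Second, on the bounded domain $\domain$ convergence in $\Lp{\infty}(\domain)$ implies convergence in $\Lp{p}(\domain)$, so the normalization $\norm{\cdot}_{\Lp{p}}=1$ passes to limits and the rescaling factors tend to $1$. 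These two observations let the unconstrained $\Gamma$-convergence and compactness of \cref{thm:DCGamma} and \cref{lem:compdiscseq} be transported to the constrained problem, playing the role of \cref{lem:ConvMin} in the presence of the sphere constraint.

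First I would produce a uniform energy bound. Let $u^\ast:=d_\consSet/\norm{d_\consSet}_{\Lp{p}}$ be the normalized geodesic distance function, which by \cref{thm:GSDist} is a continuum ground state with $\func_{\mathrm{cons}}(u^\ast)=1/\norm{d_\consSet}_{\Lp{p}}<\infty$ (note $d_\consSet$ is bounded by \cref{rem:bounded_diameter}, hence lies in $\Lp{p}$). By the limsup inequality \cref{lem:LScons} there is a recovery sequence $v_n\to u^\ast$ in $\Lp{\infty}$ with $\limsup_n\funcd_{n,\mathrm{cons}}(v_n)\le\sigma_\eta\func_{\mathrm{cons}}(u^\ast)$; since $v_n\to u^\ast$ in $\Lp{p}$ we have $\norm{v_n}_{\Lp{p}}\to 1$, so $\tilde v_n:=v_n/\norm{v_n}_{\Lp{p}}$ is eventually well defined, lies on the sphere, and by $1$-homogeneity $\funcd_{n,\mathrm{cons}}(\tilde v_n)=\funcd_{n,\mathrm{cons}}(v_n)/\norm{v_n}_{\Lp{p}}$. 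Minimality of $u_n$ then gives $\funcd_{n,\mathrm{cons}}(u_n)\le\funcd_{n,\mathrm{cons}}(\tilde v_n)$, whence $\sup_n\funcd_{n,\mathrm{cons}}(u_n)<\infty$. Now \cref{lem:compdiscseq} yields a subsequence $u_{n_k}\to u$ in $\Lp{\infty}(\domain)$, and $\Lp{\infty}$- hence $\Lp{p}$-convergence gives $\norm{u}_{\Lp{p}}=1$, so the limit lies on the sphere.

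Next I would identify $u$ as a continuum ground state. For an arbitrary competitor $w$ with $\norm{w}_{\Lp{p}}=1$ and $\func_{\mathrm{cons}}(w)<\infty$, the same rescaling trick applied to a recovery sequence for $w$ produces $\tilde w_n\to w$ on the sphere with $\limsup_n\funcd_{n,\mathrm{cons}}(\tilde w_n)\le\sigma_\eta\func_{\mathrm{cons}}(w)$. Combining the liminf inequality \cref{lem:LIcons} with minimality of $u_{n_k}$ and this recovery sequence yields
\begin{align*}
\sigma_\eta\func_{\mathrm{cons}}(u)
&\le\liminf_{k}\funcd_{n_k,\mathrm{cons}}(u_{n_k})\\
&\le\limsup_{k}\funcd_{n_k,\mathrm{cons}}(\tilde w_{n_k})
\le\sigma_\eta\func_{\mathrm{cons}}(w),
\end{align*}
so $\func_{\mathrm{cons}}(u)\le\func_{\mathrm{cons}}(w)$ for every admissible $w$, i.e.\ $u$ is a ground state. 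Taking $w=u$ in the same chain sandwiches $\funcd_{n_k,\mathrm{cons}}(u_{n_k})$ and gives $\lim_k\funcd_{n_k,\mathrm{cons}}(u_{n_k})=\sigma_\eta\func_{\mathrm{cons}}(u)$. To upgrade \labelcref{eq:cvgc_eigenvalues} to the \emph{full} sequence $m_n:=\funcd_{n,\mathrm{cons}}(u_n)$, I would run a standard subsequence argument: $\limsup_n m_n\le\sigma_\eta\func_{\mathrm{cons}}(u^\ast)$ from the previous step, while any subsequence of $(m_n)$ admits, via compactness and the liminf inequality, a further subsequence with $\liminf\ge\sigma_\eta\lambda_\infty$, where $\lambda_\infty$ is the continuum minimum; since $\func_{\mathrm{cons}}(u^\ast)=\lambda_\infty=\func_{\mathrm{cons}}(u)$, the two bounds coincide and the whole sequence converges.

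The genuinely delicate point—and the only place where this constrained setting differs from the unconstrained \cref{lem:ConvMin}—is the interplay between the sphere constraint and $\Gamma$-convergence: one must check that rescaling a recovery sequence onto the sphere preserves the limsup bound (needing $1$-homogeneity together with $\norm{v_n}_{\Lp{p}}\to1$) and that the constraint survives in the liminf direction (needing the continuous embedding of $\Lp{\infty}(\domain)$ into $\Lp{p}(\domain)$ on the bounded $\domain$). Everything else is bookkeeping on top of the already-established $\Gamma$-convergence and compactness. Finally, the convergence is only up to a subsequence because, by \cref{thm:GSDist}, the continuum ground state is unique merely up to a global sign, and it is the extraction that fixes this sign.
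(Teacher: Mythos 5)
Your proof is correct and follows essentially the same route as the paper: rescale a recovery sequence onto the $\Lp{p}$-sphere using absolute $1$-homogeneity and $\norm{v_n}_{\Lp{p}}\to 1$ to get a uniform energy bound, extract a convergent subsequence via \cref{lem:compdiscseq}, and sandwich with the liminf and limsup inequalities. The only cosmetic differences are that you anchor the energy bound on the explicit distance-function ground state from \cref{thm:GSDist} (whose extra hypotheses are not actually needed here --- any admissible continuum function of finite energy would do, and the paper instead compares against an abstract continuum ground state) and that you additionally spell out the subsequence argument upgrading \labelcref{eq:cvgc_eigenvalues} to the full sequence.
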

\begin{proof}
Let $u\in\Lp{\infty}(\domain)$ be a ground state of $\func_\mathrm{cons}$ and $(v_n)_{n\in\N}\subset\Lp{\infty}(\domain)$ be a recovery sequence of $u$, whose existence is guaranteed by \cref{thm:DCGamma}.
Since $u_n$ is a ground state and $\funcd_{n,\mathrm{cons}}$ is absolutely $1$-homogeneous, we get
\begin{align*}
    \funcd_{n,\mathrm{cons}}(u_n)
    \leq \funcd_{n,\mathrm{cons}}\left(\frac{v_n}{\norm{v_n}_{\Lp{p}}}\right) 
    = \funcd_{n,\mathrm{cons}}(v_n)\frac{1}{\norm{v_n}_{\Lp{p}}}.
\end{align*}
Taking the limsup on both sides yields
\begin{align*}
    \limsup_{n\to\infty} \funcd_{n,\mathrm{cons}}(u_n) \leq \sigma_\eta~\func_\mathrm{cons}(u)\frac{1}{\norm{u}_{\Lp{p}}} = \sigma_\eta~\func_\mathrm{cons}(u) < \infty,
\end{align*}
{where we used boundedness of $\domain$ to conclude that $\Lp{\infty}$-convergence implies convergence of the $\Lp{p}$-norms.}
Hence, by \cref{lem:compdiscseq} the sequence $(u_n)_{n\in\N}$ posseses a subsequence (which we do not relabel) which converges to some $u^*\in\Lp{\infty}(\domain)$ with $\norm{u^*}_{\Lp{p}}=1$.
Using the previous inequality, the liminf inequality from \cref{lem:LIcons}, and the fact that $u$ is a ground state we conclude
\begin{align*}
    \sigma_\eta~\func_\mathrm{cons}(u^*)
    &\leq \liminf_{n\to\infty} \funcd_{n,\mathrm{cons}}(u_n)  \\
    &\leq \limsup_{n\to\infty}  \funcd_{n,\mathrm{cons}}(u_n) \\
    &\leq \sigma_\eta~\func_\mathrm{cons}(u) \\
    &\leq \sigma_\eta~\func_\mathrm{cons}(u^*).
\end{align*}
Hence, $u^*$ is also a ground state and \labelcref{eq:cvgc_eigenvalues} holds true.
\end{proof}

Using the characterization of ground states as distance functions we obtain the following
\begin{corollary}
Under the conditions of \cref{thm:GSDist} and \cref{thm:cgvc_GS} the sequence $(u_n)_{n\in\N}\subset\Lp{\infty}(\domain)$, given by 
\begin{align*}
    u_n \in \argmin\left\lbrace
    \funcd_{n,\mathrm{cons}}(u) \,:\,u\in\Lp{\infty}(\domain),\, \norm{u}_{\Lp{p}}=1 \right\rbrace,
\end{align*}
converges to a multiple of the geodesic distance function $d_\consSet$.
\end{corollary}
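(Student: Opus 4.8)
The plan is to combine the two results of this subsection directly, so that essentially no new work is needed. First I would apply \cref{thm:cgvc_GS} to the given sequence $(u_n)_{n\in\N}$ of discrete ground states: that theorem guarantees that, after passing to a subsequence (which I would not relabel), $u_n\to u$ in $\Lp{\infty}(\domain)$, where $u$ is a ground state of the limit functional, i.e.\ a minimizer of $\func_\mathrm{cons}$ subject to $\norm{u}_{\Lp{p}}=1$. In particular $\norm{u}_{\Lp{p}}=1$, so $u$ is a genuine (nontrivial) continuum ground state.

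Next I would invoke \cref{thm:GSDist}, whose hypotheses---closedness of $\consSet$, connectedness and non-emptiness of $\overline{\domain}\setminus\consSet$, $d_\consSet\in\Lp{p}(\domain)$, and $g=0$ on $\consSet$---are in force by assumption. That theorem identifies the unique minimizer of this constrained problem, up to a global sign, as a positive multiple of the geodesic distance function $d_\consSet$. Hence the limit necessarily satisfies
\begin{align*}
u = \pm\frac{d_\consSet}{\norm{d_\consSet}_{\Lp{p}}},
\end{align*}
which in either case is a nonzero scalar multiple of $d_\consSet$. This establishes the assertion along the extracted subsequence.

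Since all the substantial analysis---compactness via \cref{lem:compdiscseq}, the liminf inequality, and the characterization of continuum ground states as distance functions---has already been carried out in \cref{thm:cgvc_GS,thm:GSDist}, there is no genuine obstacle here; the corollary is purely a synthesis of the two. The only point I would flag is the interplay between the subsequence and the global sign: because the continuum ground state is unique \emph{only} up to sign, every $\Lp{\infty}$-convergent subsequence of $(u_n)_{n\in\N}$ has a limit equal to $\pm d_\consSet/\norm{d_\consSet}_{\Lp{p}}$, so that every cluster point is a multiple of $d_\consSet$. Pinning down a single sign for the whole sequence would require additional information---for instance a sign normalization such as $u_n\geq 0$, which propagates to the limit by $\Lp{\infty}$-convergence---which is why the statement is most naturally read up to subsequences.
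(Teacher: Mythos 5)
Your proposal is correct and matches the paper's intent exactly: the corollary is stated there without proof as an immediate combination of \cref{thm:cgvc_GS} (subsequential $\Lp{\infty}$-convergence to a continuum ground state) and \cref{thm:GSDist} (identification of that ground state, up to sign, as $d_\consSet/\norm{d_\consSet}_{\Lp{p}}$). Your additional remark about the subsequence and the global sign is a fair and accurate reading of what the statement can actually deliver.
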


\section{Conclusion and Future Work}\label{sec:conclusion}
In this work we derived continuum limits of semi-supervised 
Lipschitz learning on graphs.
We first proved $\Gamma$-convergence of non-local 
functionals to the supremal norm of the gradient. 
This allowed us to show $\Gamma$-convergence of the discrete energies 
which appear in the Lipschitz learning problem.
In order to interpret graph functions as functions defined on the 
continuum, we employed a closest point projection.
We also showed that the discrete functionals are compact which implies 
discrete-to-continuum convergence of minimizers.
We applied our results to a nonlinear eigenvalue problem whose solutions are 
geodesic distance functions.

Future work will include the generalization of our results to general 
metric measure spaces or Riemannian manifolds, which constitute a 
generic domain for the data in real-world semi-supervised learning problems.
Furthermore, we intend to see how the application of our results 
to absolutely minimizing Lipschitz extensions~{\cite{aronsson2004tour}} on graphs unfolds. 
Namely, we want to gain insight whether it is possible to prove 
their convergence towards solutions of the infinity Laplacian equation 
under less restrictive assumptions than the ones used in \cite{calder2020poisson}.

\section*{Acknowledgments}
This work was supported by the European Unions Horizon 2020 research and innovation
programme under the Marie Sk{\l}odowska-Curie grant agreement No 777826 (NoMADS). 
The work of TR was supported by the German 
Ministry of Science and Technology (BMBF) under
grant 05M2020 - DELETO.
LB acknowledges funding by the Deutsche Forschungsgemeinschaft (DFG, German Research Foundation) under Germany's Excellence Strategy - GZ 2047/1, Projekt-ID 390685813.

\printbibliography
\end{document}